\newcolumntype{x}[1]{>{\centering\arraybackslash\hspace{0pt}}p{#1}}
\definecolor{ao}{rgb}{0.0, 0.5, 0.0}
\DeclareMathOperator*{\argmax}{\arg\max}   
\DeclarePairedDelimiterX{\infdivx}[2]{(}{)}{%
  #1\;\delimsize\|\;#2%
}
\newcommand{\infdiv}{\textnormal{KL}\infdivx}
\newcommand{\defeq}{\vcentcolon=}
\theoremstyle{plain}
\newtheorem{theorem}{Theorem}[section]
\newtheorem{proposition}[theorem]{Proposition}
\theoremstyle{definition}
\newtheorem{definition}[theorem]{Definition}
\theoremstyle{remark}
\title{Unbiased Learning of Deep Generative Models\\with Structured Discrete Representations}
\author{%
  Harry Bendekgey, Gabriel Hope, Erik B. Sudderth \\
  \texttt{\{hbendekg, hopej, sudderth\}@uci.edu} \\
  Department of Computer Science, University of California, Irvine
}
\begin{document}

\setlength{\abovedisplayskip}{2pt plus 3pt}
\setlength{\belowdisplayskip}{2pt plus 3pt}

\maketitle

\begin{abstract}
By composing graphical models with deep learning architectures, we learn generative models with the strengths of both frameworks. 
The structured variational autoencoder (SVAE) inherits structure and interpretability from graphical models, and flexible likelihoods for high-dimensional data from deep learning, but poses substantial optimization challenges.  We propose novel algorithms for learning SVAEs, and are the first to demonstrate the SVAE's ability to handle multimodal uncertainty when data is missing by incorporating discrete latent variables.  
Our memory-efficient implicit differentiation scheme makes the SVAE tractable to learn via gradient descent, while demonstrating robustness to incomplete optimization. To more rapidly learn accurate graphical model parameters, we derive a method for computing natural gradients without manual derivations, which avoids biases found in prior work.  These optimization innovations enable the first comparisons of the SVAE to state-of-the-art time series models, where the SVAE performs competitively while learning interpretable and structured discrete data representations.
\end{abstract}

\section{Introduction}

Advances in deep learning have dramatically increased the expressivity of machine learning models at great cost to their interpretability. This trade-off can be seen in deep generative models that produce remarkably accurate synthetic data, but often fail to illuminate the data's underlying factors of variation, and cannot easily incorporate domain knowledge.
The \emph{structured variational autoencoder} (SVAE,~\citet{svae}) aims to elegantly address these issues by combining probabilistic graphical models~\citep{vi} with the VAE~\citep{kingma2019introduction}, gaining both flexibility and interpretability. 
But since its 2016 introduction, SVAEs have seen few applications because their expressivity leads to optimization challenges.
This work proposes three key fixes that enable efficient training of general SVAEs. 

SVAE inference requires iterative optimization~\cite{vi,ghahramani00} of variational parameters for latent variables associated with every observation. \citet{svae} backpropagate gradients through this multi-stage optimization, incurring prohibitive memory cost. We resolve this issue via an implicit differentiation scheme that shows empirical robustness even when inference has not fully converged. 
Prior work~\citep{svae} also identifies natural gradients~\citep{natgrads,svi} as an important accelerator of optimization convergence, but apply natural gradients in a manner that requires dropping parts of the SVAE loss, yielding biased learning updates. We instead derive unbiased natural gradient updates that are easily and efficiently implemented for any SVAE model via automatic differentiation.

Basic VAEs require carefully tuned continuous relaxations~\cite{jang2017categorical,maddison2017concrete} for discrete latent variables, but SVAEs can utilize them seamlessly.  We incorporate adaptive variational inference algorithms~\citep{hughes2013memoized,hughes2015scalable} to robustly avoid local optima when learning SVAEs with discrete structure, enabling data clustering.  SVAE inference easily accommodates missing data, leading to accurate and multimodal imputations.
We further improve training speed by generalizing prior work on parallel Kalman smoothers~\cite{sarkka2020temporal}.

We begin in Sec.~\ref{background} and~\ref{structured.mf} by linking variational inference in graphical models and VAEs.
Our optimization innovations (implicit differentiation in Sec.~\ref{implicit}, unbiased natural gradients in Sec.~\ref{natgrads}, variational inference advances in Sec.~\ref{inference}) then enable SVAE models to be efficiently trained to their full potential.  
Although SVAEs may incorporate any latent graphical structure, we focus on temporal data. In Sec.~\ref{experiments}, we are the first to compare SVAE performance to state-of-the-art recurrent neural network- and transformer-based architectures on time series benchmarks~\citep{dvae},
and the first to demonstrate that SVAEs provide a principled method for multimodal interpolation of missing data. 

\section{Background: Graphical Models and Variational Inference}
\label{background}

\newcommand{\panelWidth}{7cm} 
\setlength{\tabcolsep}{0.01cm}  
\begin{figure*}[!t]
\centering
\begin{tabular}{p{\panelWidth} p{\panelWidth}}
\begin{overpic}[width=\panelWidth]{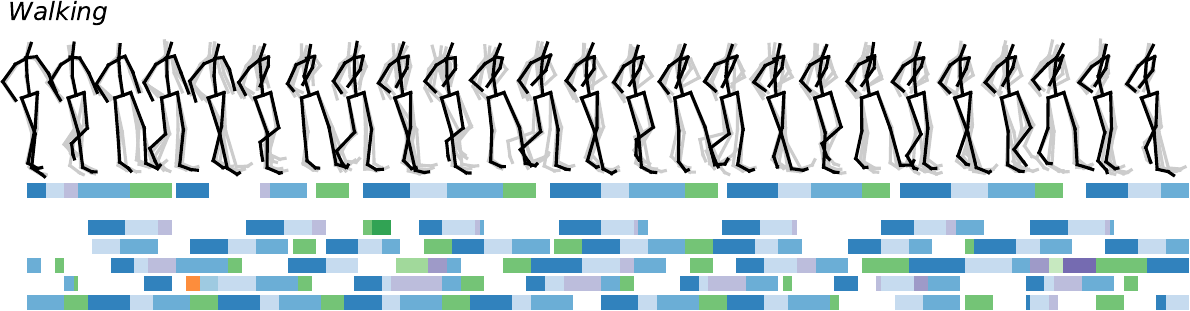}
\end{overpic}
	&
\begin{overpic}[width=\panelWidth]{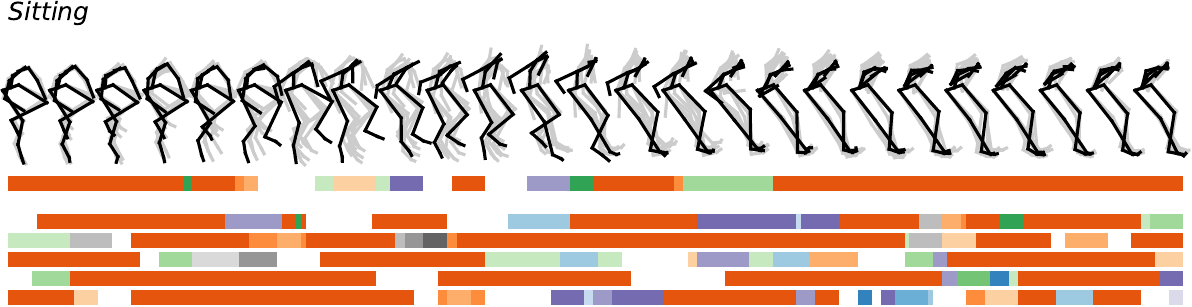}
\end{overpic}  \\
\begin{overpic}[width=\panelWidth]{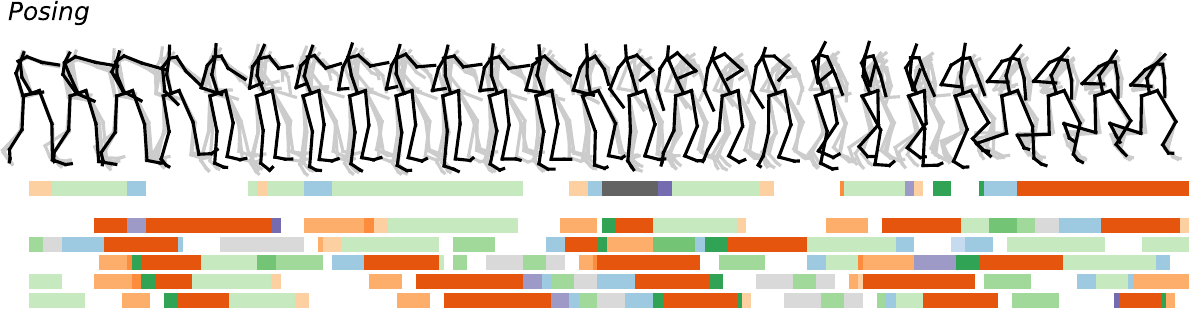}
\end{overpic}
&
\begin{overpic}[width=\panelWidth]{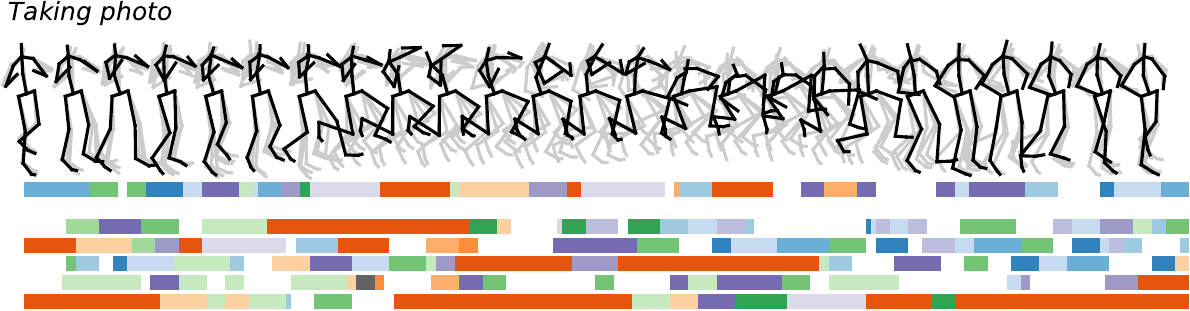}
\end{overpic} \\
\begin{overpic}[width=\panelWidth]{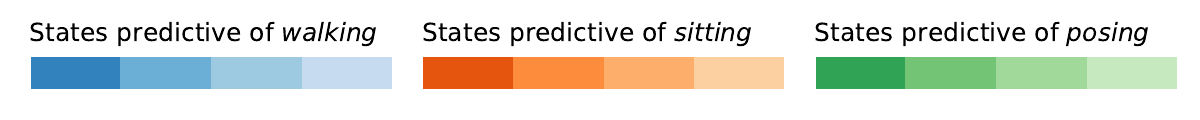}
\end{overpic}
&
\begin{overpic}[width=\panelWidth]{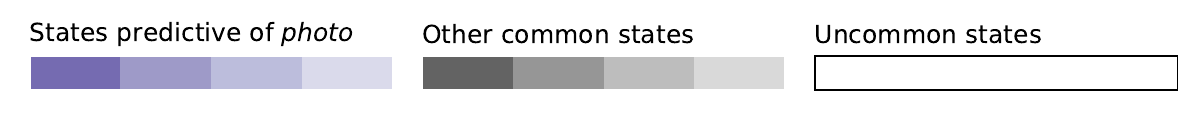}
\end{overpic} 
\end{tabular}
\vspace{-3mm}
\caption[]{\small The SVAE-SLDS segments each sequence of human motion, which we display as a sequence of discrete colors. \emph{Discrete variables are interpretable:} Below each segmentation, we show five segmentations of \emph{other} subjects performing the same action, noting similarity across semantically similar series. \emph{Discrete variables are compact representations: } Drawing multiple samples from the generative model conditioned on ground-truth segmentations yields the stick figures in grey, which track closely with the observed data.}
\vspace{-2mm}
\label{fig:discrete}
\end{figure*}

We learn generative models that produce complex data $x$ 
via lower-dimensional latent variables $z$. 
The distribution $p(z|\theta)$ is defined by a graphical model (as in Fig.~\ref{fig:gms}) with parameters $\theta$, and $z$ is processed by a (deep) neural network with weights $\gamma$ to compute the data likelihood $p_\gamma(x|z)$. 

Exact evaluation or simulation of the posterior $p_\gamma(z,\theta|x)$ 
is intractable due to the neural network likelihood. 
\emph{Variational inference} (VI~\citep{vi}) defines a family of approximate posteriors, 
and finds the distribution that best matches the true posterior by 
optimizing the \emph{evidence lower bound} (ELBO):
\begin{equation}
    \mathcal{L}[q(\theta;\eta)q(z;\omega),\gamma] = \mathbb{E}_{q(\theta;\eta)q(z;\omega)}\bigg[\log \frac{p(\theta)p(z|\theta)p_\gamma(x|z)}{q(\theta;\eta)q(z;\omega)}\bigg]
    \leq \log p_\gamma(x).
    \label{elbo}
\end{equation}
Here, $q(\theta;\eta)q(z;\omega) \approx p_\gamma(z,\theta|x)$ are known as \emph{variational factors}.  We parameterize these distributions via arbitrary exponential families with \emph{natural parameters} $\eta,\omega$.  This implies that
\begin{equation}
q(z;\omega) = \exp\{\langle \omega, t(z) \rangle - \log Z(\omega) \},	\hspace{10mm} Z(\omega) = \int_z \exp\{\langle \omega, t(z)\rangle\} \;dz.
\end{equation}
An exponential family is log-linear in its sufficient statistics $t(z)$, where the normalizing constant $Z(\omega)$ ensures it is a proper distribution (see App.~\ref{efam.props} for properties of exponential families). For models where $p_\gamma(x|z)$ has a restricted conjugate form (rather than a deep neural network), we can maximize Eq.~\eqref{elbo} by alternating optimization of $\eta,\omega$; these coordinate ascent updates have a closed form~\cite{vi}. 
\emph{Stochastic VI}~\citep{svi} improves scalability (for models with exponential-family likelihoods) by sampling batches of data $x$, fitting a locally-optimal $q(z;\omega)$ to the latent variables in that batch, and updating $q(\theta;\eta)$ by the resulting (natural) gradient.

\vspace*{-10pt}
\paragraph{Amortized VI.} Because it is costly to optimize Eq.~\eqref{elbo} with respect to $\omega$ for each batch of data,
VAEs employ \emph{amortized VI} \citep{vae,mnih14amortized,rezende14vae} to approximate the parameters of the optimal $q(z;\omega)$ via a neural network \emph{encoding} of $x$. The inference network weights $\phi$ for this approximate posterior $q_\phi(z|x)$ are jointly trained with the generative model.
A potentially substantial \emph{amortization gap} exists~\cite{cremer2018inference,krishnan2018challenges}: the inference network does not globally optimize the ELBO of Eq.~\eqref{elbo} for all $x$.
\vspace*{-10pt}
\paragraph{Structured VAEs.} For a fixed $q(\theta)$, the true optimizer (across all probability distributions) of Eq.~\eqref{elbo} is given by $q(z) \propto p_0(z;\eta) p_\gamma(x|z)$, where $p_0(z;\eta) \propto \exp\{\mathbb{E}_{q(\theta;\eta)}[\log p(z|\theta)]\}$ is an expected prior on $z$ (see App.~\ref{sec:optim} for derivations). This simple product of expected prior and likelihood cannot be normalized because of the neural network parameterization of $p_\gamma(x|z)$. Rather than approximating this whole posterior as in amortized VI, the SVAE~\citep{svae} only approximates the likelihood function, and explicitly multiplies it by the known expected prior to obtain an approximate posterior.

In more detail, the SVAE approximates the likelihood $\ell(z)=p_\gamma(x|z)$ with a function parameterized by a neural network $\hat{\ell}_\phi(z|x)$. The optimal posterior \emph{given this approximation} is then equal to
\begin{equation}
    q_\phi(z|x;\eta) = \argmax_{q(z)}\hat{\mathcal{L}}[q(\theta;\eta)q(z),\phi] = \argmax_{q(z)} \mathbb{E}_{q(\theta;\eta)q(z)}\bigg[\log \frac{p(\theta)p(z|\theta){\hat \ell_\phi(z|x)}}{q(\theta;\eta)q(z)}\bigg].
    \label{eq:elbo_surrogate}
\end{equation}
The posterior that optimizes the \emph{surrogate loss} $\hat{\mathcal{L}}[\cdot,\phi]$ of Eq.~\eqref{eq:elbo_surrogate} is $q_\phi(z|x;\eta) \propto p_0(z;\eta) \hat{\ell}_\phi(z|x)$. If $\hat{\ell}_\phi(z|x)$ is chosen to be conjugate to $p(z|\theta)$, this multiplication and normalization is easy for many exponential families. Note that $\hat{\ell}_\phi(z|x)$ does not need to be normalized, as any multiplicative factors would disappear when the posterior is normalized. The overall ELBO is then $\mathcal{L}[q(\theta;\eta)q_\phi(z|x;\eta),\gamma]$.

If the encoder's approximate likelihood $\hat{\ell}_\phi$ is (up to normalization) close to the true likelihood $\ell$, the surrogate loss $\hat{\mathcal{L}}$ will closely approximate the true loss $\mathcal{L}$, and there will be little amortization gap.
SVAE inference has the advantage that $q_\phi(z|x;\eta)$ depends on the learned posterior $q(\theta;\eta)$ of graphical model parameters, so as learning improves the graphical generative model, the coupled inference model remains closely aligned with the generative process. 
The ladder VAE~\citep{ladder} and related hierarchical VAEs~\citep{vahdat2020NVAE_nvae,child2021very_very_deep_vae} also incorporate generative parameters in amortized variational inference, but impose a restricted generative hierarchy that is less flexible and interpretable than the SVAE.

\begin{figure}
    \centering
    \begin{minipage}{.6\linewidth}
        \includegraphics[width=0.975\linewidth]{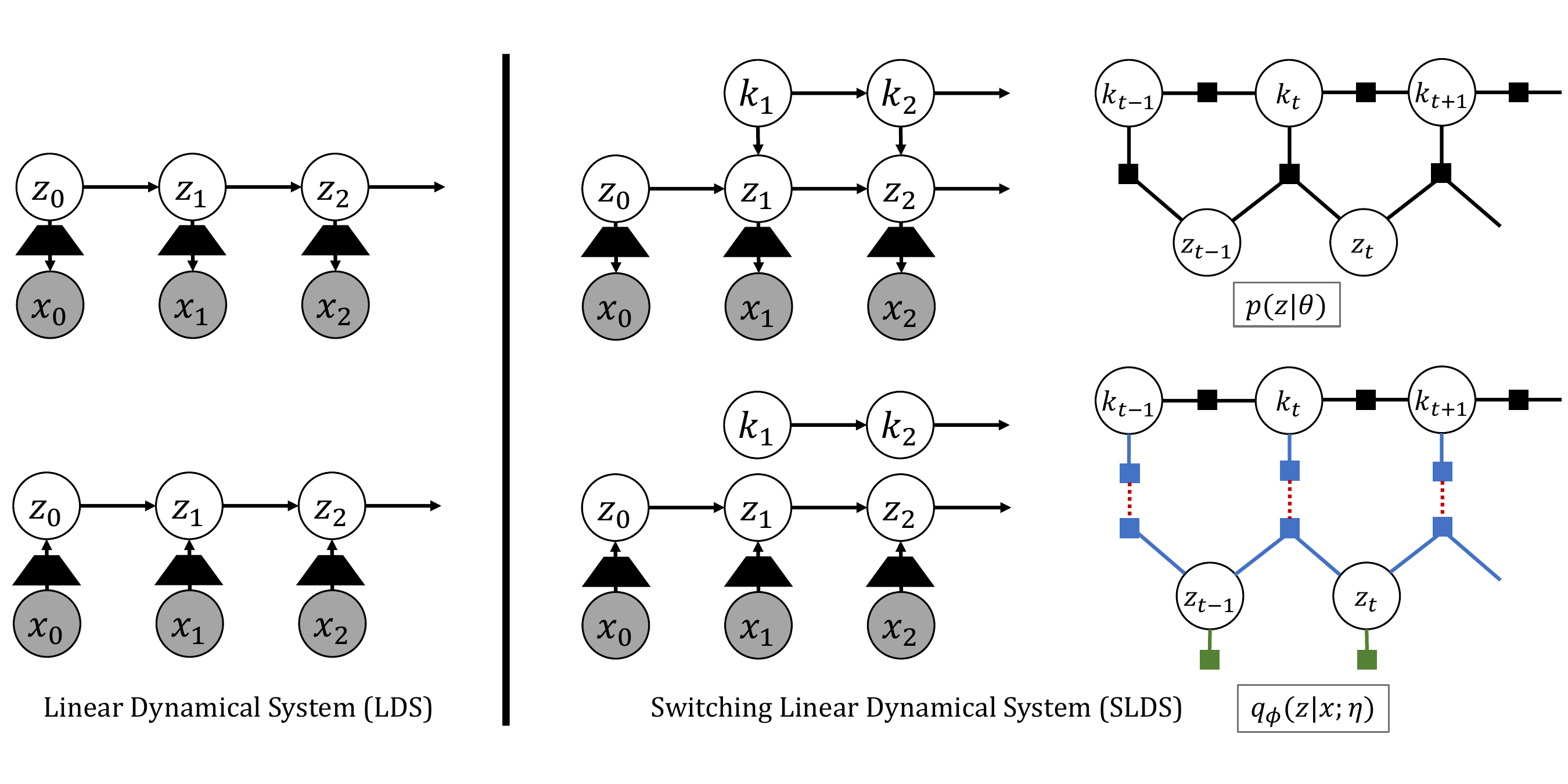}
    \end{minipage}\begin{minipage}{.4\linewidth}
        \begin{algorithm}[H]
            \scriptsize
            \caption{\\ \small Structured Mean Field Variational Inference}\label{alg:block.update}
            \begin{algorithmic}
            \REQUIRE graphical model potentials $\eta$, network potentials $\lambda = \texttt{encode}(x;\phi)$ defining $\hat{\ell}_\phi(z|x)$.
            \STATE $(\mu_{1},\ldots,\mu_M) \gets $ \texttt{init\_expected\_stats}$(\eta,\lambda)$ 
            \WHILE{not converged}
            \FOR{$m \gets 1$ to $M$}  
            \STATE $\omega_m \gets \texttt{MF}(\mu_{-m}; \eta)$
            \STATE $\mu_m \gets$ \texttt{BP}($\omega_m; \eta,\lambda$)
            \ENDFOR
            \ENDWHILE
            \STATE \textbf{return} $\omega = \texttt{concat}(\omega_1,\ldots,\omega_M)$
            \end{algorithmic}
        \end{algorithm}
        \vspace{.5mm}
    \end{minipage}
    \vspace{-2mm}
    \caption{\small \emph{Left:} Generative (above) and inference (below) graphical models for SVAE-LDS and SVAE-SLDS.  For the SLDS, we show the prior and posterior as factor graphs~\cite{kschischang01}. $q_\phi(z|x)$ combines \textcolor{ao}{potentials from the inference network} with the true prior. Structured variational inference separates continuous from discrete latent variables for tractability, and mean field messages are propagated across \textcolor{red}{residual edges} between \textcolor{blue}{disentangled factors}. 
    \emph{Right:} General form of iterative mean-field (MF) and belief-propagation (BP) updates for SVAE inference in a model where $q(z)$ factorizes into $M$ groups of latent variables.  For the SVAE-SLDS, $M=2$.}
    \label{fig:gms}
    \vspace{-8pt}
\end{figure}

%

\vspace*{-10pt}
\paragraph{Example 1: Standard Normal.} For a basic VAE, $\theta$ is fixed and $z \sim \mathcal{N}(0,I)$. The SVAE inference network outputs a Gaussian $\hat \ell_\phi(z \mid \mu=\mu(x;\phi),\tau=\tau(x;\phi)) \propto \mathcal{N}(z; \mu, \textnormal{diag}(\tau^{-1}))$ with mean $\mu$ and inverse-variance (precision) $\tau$. The product of this Gaussian with the standard normal prior has a simple form: $q_\phi(z|x) = \mathcal{N}(z; \frac{\tau}{\tau+1}\mu, \textnormal{diag}((\tau+1)^{-1}))$. 
This reparameterization of the standard VAE posterior imposes the (useful) constraint that posterior variances must be smaller than prior variances.

\vspace*{-10pt}
\paragraph{Example 2: Linear Dynamical System (LDS).}  A LDS model for temporal data assumes 
the latent \emph{state} variables evolve linearly with Gaussian noise: 
$z_t \sim \mathcal{N}(A_t z_{t-1} + b_t, Q_t)$, $z_1 \sim \mathcal{N}(\mu_1, \Sigma_1)$. In this case, we expand the exponential family distribution $q(z;\omega)$ as a sum across time steps:
\begin{equation}
q(z;\omega) = q(z_1)\prod_{t=2}^Tq(z_t|z_{t-1}) = \exp\Big\{\langle \omega_1,t(z_1)\rangle +\sum_{t=2}^T \langle \omega_t, t(z_{t-1},z_{t})\rangle - \log Z(\omega)\Big\}, 
\label{eq:lds}
\end{equation}
where $\omega = \texttt{concat}(\omega_t)$ and the prior $p(z|\theta)$ belongs to this family.
The prior induces temporal dependence between the $z_t$, but we assume the likelihood factorizes as $p_\gamma(x|z) = \prod_{t} p_\gamma(x_t|z_t)$. 

For models like the LDS, approximating the likelihood function with conjugate, time-independent Gaussian distributions is a much simpler task than approximating the temporally-coupled posterior. In addition, as the generative parameters $A_t, b_t, Q_t$ of $p(z|\theta)$ are learned through optimization of $q(\theta)$, the inference routine shares those parameters, improving accuracy. These advantages were not present in the standard normal VAE, where the prior on $z$ lacks structure and is not learned.
Inference, normalization, and sampling of $q_\phi(z|x;\eta)$ in the LDS model is feasible via a Kalman smoothing algorithm~\cite{barber07} that efficiently (with cost linear in $T$) aggregates information across time steps. 

\section{Structured Variational Inference}
\label{structured.mf}
\label{sec:SME}
For complex graphical models, the distributions $p(z|\theta)$ and $q(z)$ typically factorize across subsets of the latent variables $z$, as illustrated in Fig.~\ref{fig:gms}. We thus generalize Eq.~\eqref{eq:lds} by partitioning $z$ into local variable groups, and representing the dependencies between them via a set of factors $\mathcal{F}$:
\begin{equation}
q(z;\omega) = \exp\Big\{\sum_{f \in \mathcal{F}} \langle \omega_f, t(z_f)\rangle - \log Z(\omega)\Big\}.
\label{eq:factor}
\end{equation}
For certain factor graphs, we can efficiently compute marginals and draw samples via the \emph{belief propagation} (BP) algorithm~\citep{pearl88,kschischang01}. However, exact inference is intractable for many important graphical models, making it impossible to compute marginals or normalize the $q_\phi(z|x;\eta)$ defined in Sec.~\ref{background}. SVAE training addresses this challenge via 
\emph{structured} variational inference~\citep{ghahramani00,xing03,vi}, which optimizes the surrogate loss across a restricted family of tractable distributions.
We connect structured VI to SVAEs in this section, and provide detailed proofs in App.~\ref{sec:optim}.

\subsection{Background: Block Coordinate Ascent for Mean Field Variational Inference}
Let $\{z_m\}_{m=1}^M$ be a partition of the variables in the graphical model, chosen so that inference within each $z_m$ is tractable. We infer factorized (approximate) marginals $q_\phi(z_m|x;\eta)$ for each mean field cluster by maximizing $\hat{\mathcal{L}}[q(\theta;\eta)\prod_{m}q(z_m),\phi]$. 
The optimal $q_\phi(z_m|x;\eta)$ inherit the structure of the joint optimizer $q_\phi(z|x;\eta)$, replacing any factors which cross cluster boundaries with factorized approximations (see Fig.~\ref{fig:gms}). The optimal parameters for these disentangled factors are a linear function of the expected statistics of clusters connected to $m$ via residual edges. These expectations in turn depend on their clusters' parameters, defining a stationary condition for the optimal $\omega$:
\begin{equation}
\omega_m = \texttt{MF}(\mu_{-m}; \eta), \hspace{10mm} \mu_m = \texttt{BP}(\omega_m; \eta, \phi, x).
\label{stationary}
\end{equation}
Here, \texttt{BP} is a belief propagation algorithm which computes expected statistics $\mu_m$ for cluster $m$, and the linear \emph{mean field} function \texttt{MF} updates parameters of cluster $m$ given the expectations of \emph{other} clusters $\mu_{-m}$ along residual edges. We solve this optimization problem via the \emph{block updating} coordinate ascent in  Alg.~\ref{alg:block.update}, 
which is guaranteed to converge to a local optimum of $\hat{\mathcal{L}}[q(\theta)\prod_m q(z_m)]$. 


\subsection{Reparameterization and Discrete Latent Variables}

While optimizing $q_\phi(z_m|x;\eta)$ at inference time requires some computational overhead, it allows us to bypass the typical obstacles to training VAEs with discrete latent variables.  To learn the parameters $\phi$ of the inference network, conventional VAE training backpropagates through samples of latent variables via a smooth reparameterization~\cite{vae}, which is impossible for discrete variables. Many alternatives either produce biased gradients~\citep{bengio2013estimating} or extremely high-variance gradient estimates~\citep{ranganath14,ji2021marginalized}. Continuous relaxations of discrete variables~\cite{maddison2017concrete,jang2017categorical,berliner2022learning}  produce biased approximations of the true discrete ELBO, and are sensitive to annealing schedules for temperature hyperparameters.

SVAE training only requires reparameterized samples of those latent variables which are direct inputs to the generative network $p_\gamma(x|z)$.  
By restricting these inputs to continuous variables, and using other discrete latent variables to capture their dependencies, discrete variables are marginalized via structured VI \emph{without} any need for biased relaxations.
With a slight abuse of notation, we will denote continuous variables in $z$ by $z_m$, and discrete variables by $k_m$.

\vspace{-10pt}
\paragraph{Example 3: Gaussian Mixture.} Consider a generalized VAE where the state is sampled from a mixture model: $k \sim \textnormal{Cat}(\pi)$, $z \sim \mathcal{N}(\mu_k, \Sigma_k)$. The likelihood $p_\gamma(x|z)$ directly conditions on only the continuous latent variable $z$. 
Variational inference produces disentangled factors $q_\phi(z|x;\eta)q_\phi(k|x;\eta)$, and we evaluate likelihoods by decoding samples from $q_\phi(z|x;\eta)$, without sampling $q_\phi(k|x)$. 

\vspace{-10pt}
\paragraph{Example 4: Switching Linear Dynamical System (SLDS).} 
Consider a set of discrete states which evolve according to a Markov chain $k_1 \sim \textnormal{Cat}(\pi_0), k_{t} \sim \textnormal{Cat}(\pi_{k_{t-1}})$, and a continuous state evolving according to switching linear dynamics: 
$z_0 \sim \mathcal{N}(\mu_0, \Sigma_0), z_t \sim \mathcal{N}(A_{k_{t}} z_{t-1} + b_{k_t}, Q_{k_t})$. 
The transition matrix, offset, and noise at step $t$ depends on $k_t$. Exact inference in SLDS is intractable~\cite{lerner2001inference}, but structured VI~\cite{ghahramani00} learns a partially factorized posterior $q_\phi(z|x;\eta)q_\phi(k|x;\eta)$ that exactly captures dependencies \emph{within} the continuous and discrete Markov chains. 

BP for SLDS uses variational extensions~\cite{beal2003variational,barber07} of the Kalman smoother to compute means and variances of continuous states, and forward-backward message-passing to compute marginals of discrete states (see App.~\ref{bp}). 
%
Let $k_{tj}=1$ if the SLDS is in discrete state $j$ at time~$t$, $k_{tj}=0$ otherwise, and $\bar{\theta}_j = \mathbb{E}_{q(\theta;\eta)}[\theta_j]$ be the expected (natural) parameters of the LDS for discrete state $j$.
Structured VI updates the natural parameters of discrete states $\omega_{k_{tj}}$, and continuous states $\omega_{z_t,z_{t+1}}$, as follows:
\begin{equation}
    \omega_{z_t,z_{t+1}} = \sum_j \mathbb{E}_q[k_{tj}]\bar{\theta}_j, \qquad\qquad
    \omega_{k_{tj}} = \langle \bar{\theta}_j, \mathbb{E}_q[t(z_{t-1},z_t)]\rangle.
\end{equation}
\vspace*{-15pt}



\section{Stable and Memory-Efficient Learning via Implicit Gradients}
\label{implicit}
When $q_\phi(z|x)$ is computed via closed-form inference, gradients of the SVAE ELBO may be obtained via automatic differentiation. This requires backpropagating through the encoder and decoder networks, as well as through reparameterized sampling $z \sim q_\phi(z|x;\eta)$ from the variational posterior.

For more complex models where structured VI approximations are required, gradients of the loss become difficult to compute because we must backpropagate through Alg.~\ref{alg:block.update}. For the SLDS this \emph{unrolled} gradient computation must backpropagate through repeated application of the Kalman smoother and discrete BP, which often has prohibitive memory cost (see Table~\ref{fig:time.table}).

We instead apply the \emph{implicit function theorem} (IFT~\cite{krantz2002implicit}) to compute implicit gradients $\frac{\partial \omega}{\partial \eta}$, $\frac{\partial \omega}{\partial \phi}$ without storing intermediate states.  We focus on gradients with respect to $\eta$ for compactness, but gradients with respect to $\phi$ are computed similarly.
Let \mbox{$\omega^{(1)}, \ldots, \omega^{(L)}$} be the sequence of $\omega$ values produced during the ``forward'' pass of block coordinate ascent, where $\omega^{(L)}$ are the optimized structured VI parameters. The IFT expresses gradients via the solution of a set of linear equations:
\begin{equation}
    \frac{\partial \omega^{(L)}}{\partial \eta} = \bigg(\frac{\partial g(\omega;\eta,\phi,x)}{\partial \omega}\bigg)^{-1}\frac{\partial g(\omega;\eta,\phi,x)}{\partial \eta}, \hspace{10mm} g(\omega) = \omega - \texttt{MF}(\texttt{BP}(\omega;\eta,\phi,x); \eta).
    \label{imp}
\end{equation}
Here we apply the BP and MF updates in \emph{parallel} for all variable blocks $m$, rather than sequentially as in Eq.~\eqref{stationary}.  At a VI fixed point, these parallel updates leave parameters unchanged and $g(\omega)=0$.

For an SLDS with latent dimension $D$ and $K$ discrete states, $\omega$ has $O(K+D^2)$ parameters at each time step.  Over $T$ time steps, $\frac{\partial g}{\partial \omega}$ is thus a matrix with $O(T(D^2+K))$ rows/columns and $O(T^2D^2K)$ non-zero elements. For even moderate-sized models, this is infeasible to explicitly construct or solve. 

\begin{table}
\centering
\scriptsize
\setlength{\tabcolsep}{0.5em} 
\renewcommand{\arraystretch}{1.1}
\begin{tabular}{ |c|c c c c| } 
\hline
  & \multicolumn{4}{c|}{Time of gradient step (ms)}\\ 
 Method & $B=1$ & $B=32$ & $B=64$ & $B=128$\\ 
 \hline
Implicit + Parallel & 603 & 922 & 1290 & 2060\\ 
 Unrolled + Parallel  & 659 & 1080 &n/a & n/a\\ 
 \hline
Implicit + Sequential & 2560 & 3160 & 3290 & 3530 \\ 
Unrolled + Sequential & 2660 & 3290 & 3980 & n/a\\ 
\hline
\end{tabular}
\vspace*{5pt}
    \caption{\small Time of ELBO backpropagation in an SVAE-SLDS with $K=50$ discrete states, dimension $D=16$, and $T=250$ time steps.
    For varying batch sizes $B$, we compare \emph{capped implicit} gradients to unrolled gradients for $L=10$ block updates of two inference algorithms: standard sequential BP, and our parallel extension. For large batch sizes, unrolled gradients crashed because it attempted to allocate more than 48GB of GPU memory.}
    \label{fig:time.table}
    \vspace{-15pt}
\end{table}

We numerically solve Eq.~\eqref{imp} via a Richardson iteration~\cite{richardson1911ix,young71} that repeatedly evaluates matrix-vector products $(I-A)v'$ to solve $A^{-1}v$. 
Such numerical methods have been previously used for other tasks, like hyperparameter optimization~\cite{lorraine2020optimizing} and meta-learning~\cite{rajeswaran2019meta}, but not for the training of SVAEs.
The resulting algorithm resembles unrolled gradient estimation, but we repeatedly backpropagate through updates at the \emph{endpoint} of optimization instead of along the optimization trajectory.
\begin{align}
&\textit{Richardson:}&
\frac{\partial \omega^{(L)}}{\partial \eta} &\approx -\sum_{j=0}^J \bigg(I - \frac{\partial g(\omega^{(L)}; \eta,\phi,x)}{\partial \omega} \bigg)^j \frac{\partial g(\omega^{(L)};\eta,\phi,x)}{\partial \eta}. \label{eq:richard}\\
&\textit{Unrolled:}&
\frac{\partial \omega^{(L)}}{\partial \eta} &\approx -\sum_{\ell=0}^L \bigg[\prod_{i=\ell}^L\bigg(I - \frac{\partial g(\omega^{(i)}; \eta,\phi,x)}{\partial \omega} \bigg) \bigg]\frac{\partial g(\omega^{(\ell)};\eta,\phi,x)}{\partial \eta}.
\end{align}
\citet{lorraine2020optimizing} tune the number of Richardson steps $J$ to balance speed and accuracy. However, there is another reason to limit the number of iterations:  when the forward pass is not iterated until convergence, $\omega^{(L)}$ is not a stationary point of $g(\omega)$ and therefore Eq.~\eqref{eq:richard} is not guaranteed to converge as $J \to \infty$. For batch learning, waiting for \emph{all} VI routines to converge to a (local) optimum might be prohibitively slow, so we might halt VI before $\omega^{(L)}$ converges to numerical precision.

Seeking robustness even when the forward pass has not converged, we propose a \emph{capped implicit} gradient estimator that runs one Richardson iteration for every step of forward optimization, so that $J = L$. In this regime, implicit gradient computation has a one-to-one correspondence to unrolled gradient computation, while requiring a small fraction of the memory. This can be thought of as a form of gradient regularization: if we take very few steps in the forward pass, we should have low confidence in the optimality of our end-point and compute fewer terms of the Neumann series~\eqref{eq:richard}. 

\textbf{Experiments.}
In Fig.~\ref{fig:nat} (left, middle) we compute the accuracy of different approximate gradient estimators for training a SVAE-SLDS as in Fig.~\ref{fig:gms}. To our knowledge, we are the first to investigate the quality of implicit gradients evaluated away from an optimum, and we compare our \emph{capped implicit} proposal to other gradient estimators. Ground truth gradients are computed as the implicit gradient at the optimum, and we compare the root-mean-squared-error (rMSE) of various gradient estimators to that of the na\"ive \emph{No-Solve} solution, which replaces the inverted matrix in Eq.~\eqref{imp} with Identity. 

We consider two models: one with randomly initialized parameters, and one that has been trained for 20 epochs. The newly-initialized model requires more forward steps for the block updating routine to converge. We compare the memory-intensive unrolled estimator (\emph{Unrolled}) to three versions of the implicit gradient estimator. First, an uncapped version (\emph{Implicit}) always performs $J=50$ Richardson iterations regardless of the number of forward iterations, thus incurring high computation time. Note that evaluating implicit gradient far from an optimum can produce high error; in the newly-initialized model, many of these iterations diverge to infinity when fewer than 20 forward steps are taken. Second, we consider a capped implicit estimator (\emph{Implicit+Cap}) which sets $J=L$ to match the number of forward steps.  Finally, we consider a capped implicit estimator which also includes a threshold (\emph{Implicit+Cap+Thresh}): if the forward pass has not converged in the specified number of steps, the thresholded estimator simply returns the \emph{No-Solve} solution. This gradient is stable in all regimes while retaining desirable asymptotic properties~\cite{young71}. Our remaining experiments therefore use this method for computing gradients for SVAE training.

\textbf{Prior work.}
\citet{svae} consider implicit differentiation, but only very narrowly. They derive implicit gradients by hand in cases (like the LDS) where exact inference is tractable, so the linear solve in Eq.~\eqref{imp} cancels with other terms, 
and gradients may be evaluated via standard automatic differentiation. 
For models requiring structured VI (like the SLDS), \citep{svae} instead computes \emph{unrolled} gradients for inference network weights $\phi$, suffering high memory overhead. They compute neither unrolled nor implicit gradients with respect to generative model parameters $\eta$; in practice they set
the gradient of the inner optimization to 0, yielding a biased training signal.  Our innovations instead enable memory-efficient and unbiased gradient estimates for all parameters, for all graphical models.

\begin{figure*}
    \centering
    \includegraphics[width=0.32\linewidth]{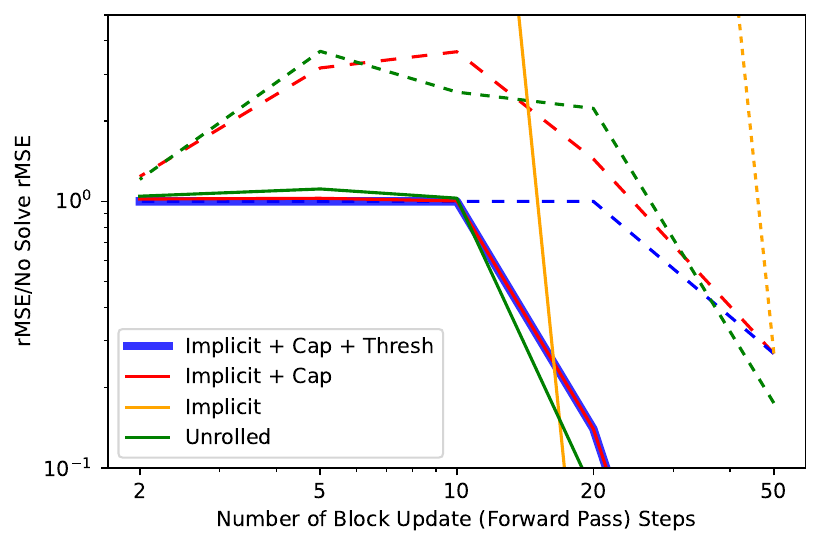}
    \includegraphics[width=0.32\linewidth]{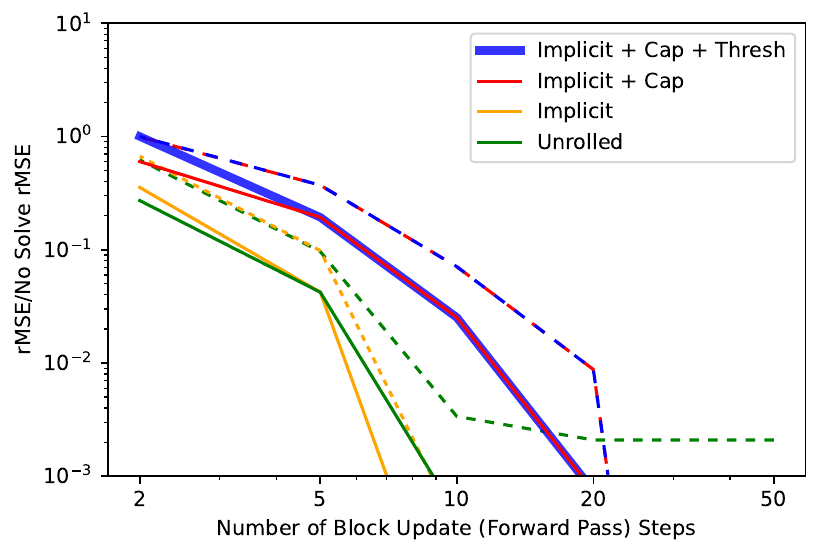}
    \includegraphics[width=0.33\linewidth]{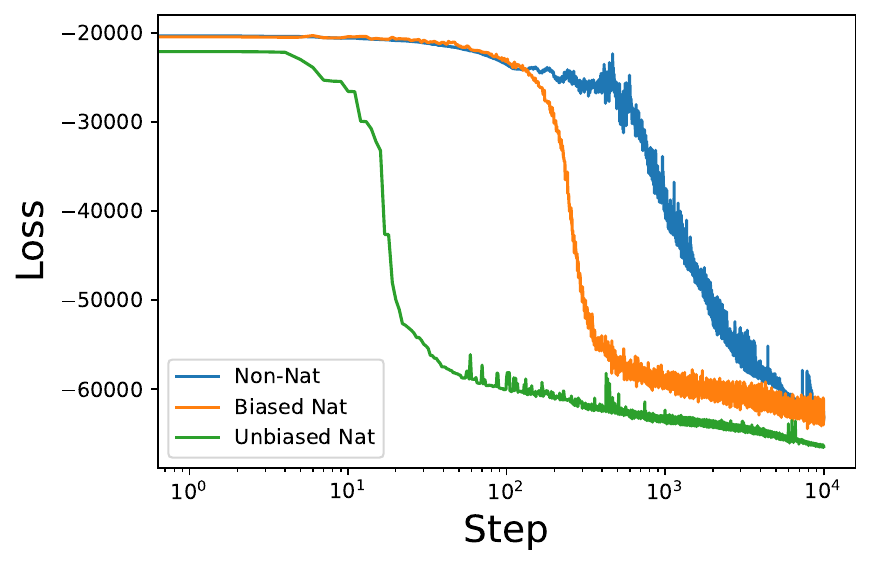}
    \vspace{-5pt}
    \caption{\small We compare implicit gradient estimators' stability (left, middle), and gradient conditioning methods' loss trajectory (right), on human motion capture data (Sec.~\ref{experiments}). \emph{Stability:} Gradient estimate rMSE relative to the \emph{No-Solve} estimator (smaller is better) for various numbers of VI block updates $L$, and SVAE-SLDS models taken from the start of training (left) and after 20 epochs (middle).  Solid lines show median rMSE ratio across a batch of 128 data points, and dashed lines show $90^{\text{th}}$ percentiles.  \emph{Conditioning:}  Convergence of SVAE-LDS negative-ELBO loss versus number of optimization steps (log-scale) for conventional (non-natural) gradients, biased natural gradients~\cite{svae}, and unbiased natural gradients computed via automatic differentiation (Sec.~\ref{natgrads}).}
    \label{fig:nat}
    \vspace{-6pt}
\end{figure*}

\section{Rapid Learning via Unbiased Natural Gradients}
\label{natgrads}

SVAE training must optimize the parameters of probability distributions. Gradient descent 
implicitly uses Euclidean distance as its notion of distance between parameter vectors, which 
is often a poor indicator of the divergence between two distributions. 
The natural gradient \citep{natgrads} resolves this issue by rescaling the gradient by the Fisher information matrix $F_\eta$ of $q(\theta;\eta)$, given by:
\begin{equation}
 F_\eta = \mathbb{E}_{q(\theta;\eta)}\big[\big(\nabla_\eta q(\theta;\eta)\big) \cdot \big(\nabla_\eta q(\theta;\eta)\big)^T\big].
\end{equation}
\citet{svae} demonstrate the advantages of natural gradients 
for the SVAE, drawing parallels to the natural gradients of stochastic VI (SVI~\cite{svi}). SVI extends the variational EM algorithm to mini-batch learning: similar to the SVAE, it fits $q(z)$ in an inner optimization loop and learns $q(\theta;\eta)$ in an outer loop by natural gradient descent. The key difference between SVI and the SVAE is that SVI's inner optimization is done with respect to the true loss function $\mathcal{L}$, whereas the SVAE uses a surrogate $\hat{\mathcal{L}}$. SVI can only do this inner optimization by restricting all distributions to be conjugate exponential family members, giving up the flexibility provided by neural networks in the SVAE.

Let $\mu_\eta$ be the expected sufficient statistics of $q(\theta;\eta)$. Exponential family theory tells us that $\frac{\partial \mu }{\partial \eta} = F_\eta$ \citep{khan2018fast, malago2011towards}, allowing \citet{svae} to derive the natural gradients of the SVAE loss:
\vspace{-5pt}
\begin{equation}
 \frac{\partial \mathcal{L}}{\partial \eta}F_\eta^{-1} = \frac{\partial \mathcal{L}}{\partial \mu} \frac{\partial \mu}{\partial \eta} F_\eta^{-1} = \frac{\partial \mathcal{L}}{\partial \mu}, \hspace{15mm} \frac{\partial \mathcal{L}}{\partial \mu} = \overbrace{\eta_0 + \mathbb{E}_{q_\phi(z|x;\eta)}[t(z)] - \eta}^{\textnormal{SVI update}}+ \overbrace{ \frac{\partial \mathcal{L}}{\partial \omega}\cdot \frac{\partial \omega}{\partial \eta}{}}^{\textnormal{correction term}}.
\end{equation}
This gradient differs from the SVI gradient by the final term: because SVI's inner loop optimizes $\omega$ with respect to the true loss $\mathcal{L}$, $\frac{\partial \mathcal{L}}{\partial \omega} = 0$ for conjugate models. \citet{svae} train their SVAE by dropping the correction term and optimizing via the SVI update equation, yielding biased gradients.

There are two challenges to computing unbiased gradients in the SVAE. First, in the structured mean field case $\frac{\partial \omega}{\partial \eta}$ involves computing an implicit or unrolled gradient, as addressed by our numerical methods in Sec.~\ref{implicit}. Second, including the correction term in the gradient costs us a desirable property of the SVI natural gradient: for step size less than 1, any constraints on the distribution's natural parameters are guaranteed to be preserved, such as positivity or positive-definiteness.




We resolve this issue by reparameterizing $\eta$ into an unconstrained space, and computing natural gradients with respect to those new parameters. Letting $\tilde{\eta}$ be an unconstrained reparameterization of~$\eta$, such as $\eta = \texttt{Softplus}\{\tilde{\eta}\} = \log(1+ e^{\tilde{\eta}})$ for a positive precision parameter, we have: 
\begin{equation}
  \frac{\partial \mathcal{L}}{\partial \tilde \eta}F_{\tilde \eta}^{-1} = \frac{\partial \mathcal{L}}{\partial \mu} \cdot \frac{\partial \eta}{\partial \tilde{\eta}}^{-T} = \bigg(\frac{\partial \tilde \eta}{\partial \eta} \cdot \nabla_\mu \mathcal{L}\bigg)^T. \label{eq:ournat}
\end{equation}
See App.~\ref{app:nat} for proof. 
This differs from the non-natural gradient in two ways. First, the Jacobian of the $\eta \rightarrow \mu$ map is dropped, as before. Unlike \citet{svae}, we do not hand-derive the solution; we employ a \emph{straight-through gradient estimator}~\citep{bengio2013estimating} to replace this Jacobian with the identity. Then, the Jacobian of the $\tilde{\eta} \rightarrow \eta$ map is replaced by its inverse-transpose. This new gradient can be computed without any matrix arithmetic by noting that the inverse of a Jacobian is the Jacobian of the inverse function. 
Thus Eq.~\eqref{eq:ournat} can be computed by replacing the reverse-mode backpropagation through the $\tilde \eta \rightarrow \eta$ map with a forward-mode differentiation through the inverse $\eta \rightarrow \tilde\eta$ map. 

In Fig.~\ref{fig:nat} (right) we show the performance benefits of our novel unbiased natural gradients with stochastic gradient descent, compared to regular gradients with an Adam optimizer~\cite{kingma2014adam}, and stochastic gradient descent via biased natural gradients~\citep{svae} that drop the correction term. Results are shown for an SVAE-LDS model whose pre-trained encoder and decoder are fixed.
\vspace{-6pt}

\section{Adapting Graphical Model Innovations}
\label{inference}

\label{sec:innovations}
\vspace{-4pt}
Efficient implementations of BP inference, parameter initializations that avoid poor local optima, and principled handling of missing data are well-established advantages of the graphical model framework.  We incorporate all of these to make SVAE training more efficient and robust.

\vspace*{-10pt}
\paragraph{Parallel inference.} The BP algorithm processes temporal data sequentially, making it poorly suited for large-scale learning of SVAEs on modern GPUs.  \citet{sarkka2020temporal} developed a method to parallelize the usually-sequential Kalman smoother algorithm across time for jointly Gaussian data.  Their algorithm is not directly applicable to our VI setting where we take expectations over $q(\theta)$ instead of having fixed parameters $\hat\theta$, but we derive an analogous parallelization of variational BP in App.~\ref{app:parallel_bp}. We demonstrate large speeds gains from this adaptation in Table~\ref{fig:time.table}.

\vspace*{-10pt}
\paragraph{Initialization.} Poor initialization of discrete clusters can cause SLDS training to collapse to a single discrete state. This problem becomes worse when the graphical model is trained on the output of a neural network encoder, which when untrained produces outputs which do not capture meaningful statistics of the high-dimensional data.  We therefore propose a three-stage training routine: a basic VAE is trained to initialize $p_\gamma(x|z)$, and then the output of the corresponding inference network 
is used for variational learning of graphical model parameters~\citep{hughes2015scalable}. 
Once the deep network and graphical model are sensibly initialized, we refine them via joint optimization while avoiding collapse. For details of this initialization scheme, see App.~\ref{app:exp.spec}.

\vspace*{-10pt}
\paragraph{Missing data.} The structure provided by the SVAE graphical model allows us to solve marginal and conditional inference queries not seen at training time. In particular, we explore the ability of a trained SVAE to impute data that is missing for an extended interval of time. By simply setting $\hat{\ell}_\phi(z_t|x_t;\eta)$ to be uniform at a particular timestep $t$, our posterior estimate of $z_t$ is only guided by the prior, which aggregates information across time to produce a smooth estimate of the posterior on $z_t$. 

While discriminative methods may be explicitly trained to impute time series, we use imputation performance as a measure of generative model quality, so do not compare to these approaches.  Unlike discriminative methods, SVAE imputation does \emph{not} require training data with aligned missing-ness.
%

\vspace*{-5pt}
\section{Related Work}
\vspace*{-5pt}
\label{related}
\begin{figure}
\includegraphics[width=\linewidth]{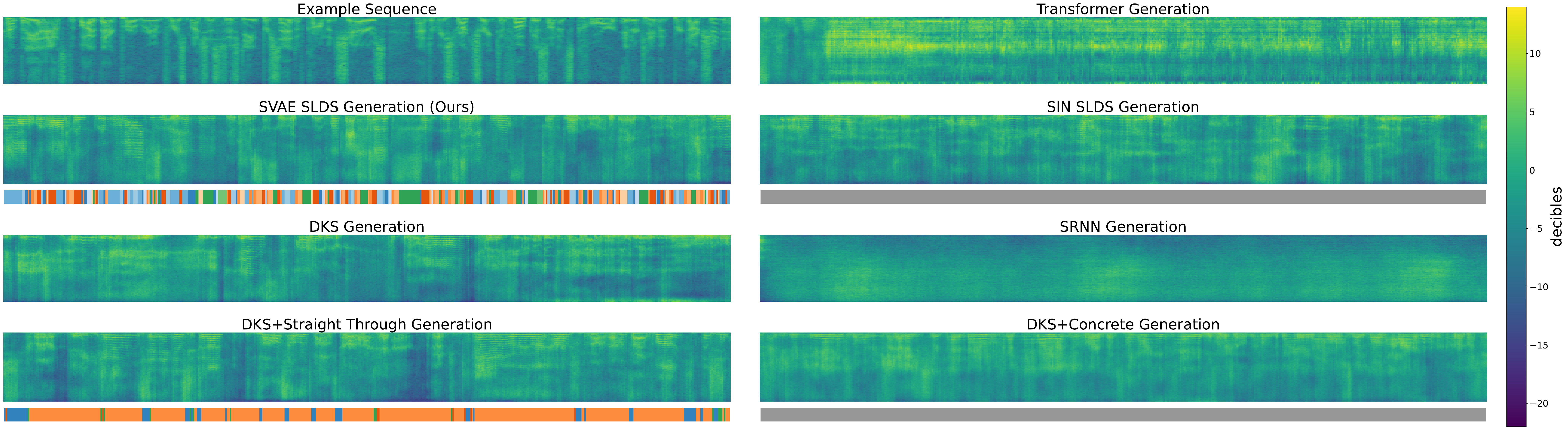}
\vspace{-6mm}
\caption{\small Unconstrained generation of 513-dim.~speech spectrogram data over $T=500$ time-steps (horizontal; models are trained on data with $T=50$). An example sequence of real speech data is shown. For models which use discrete latent variables, the sequence of discrete states is shown as a changing colorbar beneath the generation, with a solid colorbar meaning a constant discrete state for the entire sequence.}
\label{fig:chime_extrapolation}
\end{figure}

\paragraph{Dynamical VAEs.}
\citet{dvae} provide a comprehensive survey of dynamical VAEs (DVAEs) for time series data, which use recurrent neural networks to model temporal dependencies. The \emph{Stochastic Recurrent Neural Network} (SRNN~\cite{srnn}), which has similar structure to the \emph{Variational Recurrent Neural Network} (VRNN~\cite{vrnn}), is the highest-performing model in their survey; it models data via one-step-ahead prediction, producing probabilities $p(x_t|z,x_{t-1})$. This model therefore reconstructs $x$ using more information than is encoded in $z$ by skipping over the latent state and directly connecting ground truth to reconstruction, reducing the problem of sequence generation to a series of very-local one-step predictions. On the other hand, the \emph{Deep Kalman Smoother} (DKS~\cite{krishnan2017structured}) extension of the Deep Kalman Filter~\cite{krishnan2015deep} is the best-performing model which generates observations independently across time, given only information stored in the latent encoding $z$.    

RNNs lack principled options for handling missing data. Heuristics such as constructing a dummy neural network input of all-zeros for unobserved time steps, or interpolating with exponentially decayed observations~\citep{che2018recurrent}, effectively require training to learn these imputation heuristics.  
RNNs must thus be trained on missing-ness that is similar to test missing-ness, unlike the SVAE.

Transformers \cite{transformer} have achieved state-of-the-art generative performance on sequential language modeling. However, \citet{zeng2022transformers} argue that their permutation-invariance results in weak performance for time-series data where each observation carries low semantic meaning. Unlike text, many time series models are characterized by their temporal dynamics rather than a collection of partially-permutable tokens. \citet{lin2023speech} propose a dynamical VAE with encoder $q(z_t|x_{1:T})$, decoder $p(x_{t+1}|x_{1:t}, z_{1:t+1})$, and latent dynamics $p(z_{t+1}|x_{1:t}, z_{1:t})$ parameterized by transformers.

\vspace*{-10pt}
\paragraph{Structured VAEs.}
We, as in \citet{svae}, only consider SVAEs where the inference network output factorizes across (temporal) latent variables.  Orthogonal to our contributions, \citet{yu2022amortised} investigate the advantages of taking the SVAE beyond this restriction, and building models where the recognition network outputs proxy-likelihood functions on \emph{groups} of latent variables.

In recent independent work, \citet{Zhao2023RevisitingSV} also revisit the capabilities of the SVAE. However, their work differs from ours in a few key respects. First, because their experiments are restricted to the LDS graphical model (which requires no mean field factorization nor block updating), they do not need implicit differentiation, and do not explore the capacity of the SVAE to include discrete latent variables. Second, because they optimize point-estimates $\hat\theta$ of parameters instead of variational factors $q(\theta)$, they do not make use of natural gradients. In this point-estimate formulation, they apply the parallel Kalman smoother~\cite{sarkka2020temporal} off-the-shelf, whereas we derive a novel extension for our variational setting. Finally, their experimental results are confined to toy and synthetic data sets.

The most directly comparable model to the SVAE is the \emph{Stochastic Inference Network} (SIN~\citep{sin}), which employs a graphical model prior $p(z|\theta)$ but estimates $q(z)$ through traditional amortized inference; a parameterized function that shares no parameters with the graphical model produces variational posteriors. The authors consider discrete latent variable models like the SLDS, but due to the intractability of discrete reparameterization, their inference routine fits the continuous latent variables with a vanilla LDS. Thus training pushes the SIN to reconstruct, and therefore model, the data without the use of switching states.  (Experiments in~\citep{sin} consider only the LDS, not the SLDS.)

The Graphical Generative Adversarial Network \cite{li2018graphical} integrates graphical models with GANs for structured data.  Experiments in~\cite{li2018graphical} solely used image data; we compared to their implementation, but it performed poorly on our time-series data, generating unrecognizable samples with huge FID.

\vspace*{-10pt}
\paragraph{Vector Quantization.}
Vector-Quantized VAEs (VQ-VAEs~\cite{van2017neural}) are another family of deep generative models that make use of discrete latent representations. Rather than directly outputting an estimated posterior, the encoder of a VQ-VAE outputs a point in an embedding space which is \textit{quantized} to one of $K$ learnable quantization points. The encoder is trained using a straight-through estimator~\cite{bengio2013estimating}. While discrete SVAE-SLDS states switch among multiple continuous modes, the VQ-VAE representation is purely discrete, limiting information encoded by a latent variable to $\log_2 K$ bits. In order to generate plausible and diverse samples, VQ-VAEs require large values of $K$, structured collections of discrete variables, and/or post-hoc training of autoregressive priors~\citep{rasul2022vq, razavi2019generating, dieleman2018challenge, dhariwal2020jukebox}. 

\renewcommand{\panelWidth}{4.6cm} 
\setlength{\tabcolsep}{0.01cm}  
\begin{figure*}[!t]
\centering
\begin{tabular}{p{\panelWidth} p{\panelWidth} p{\panelWidth} }
\begin{overpic}[width=\panelWidth]{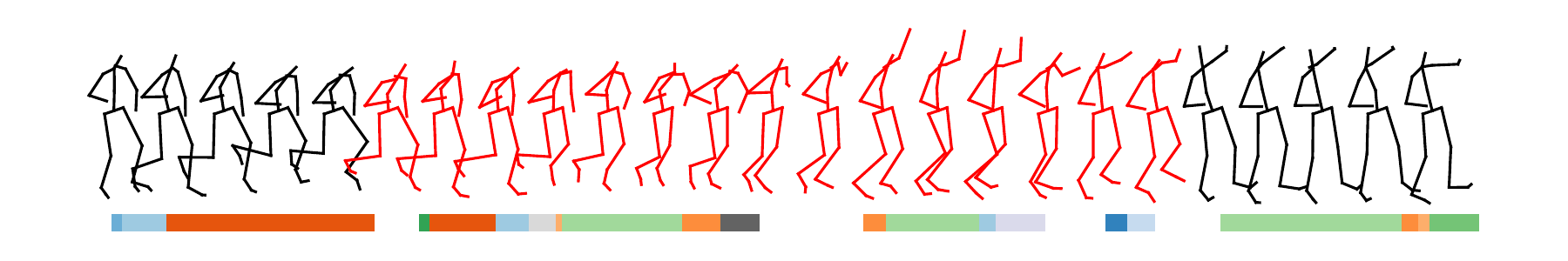}
\end{overpic} &
\begin{overpic}[width=\panelWidth]{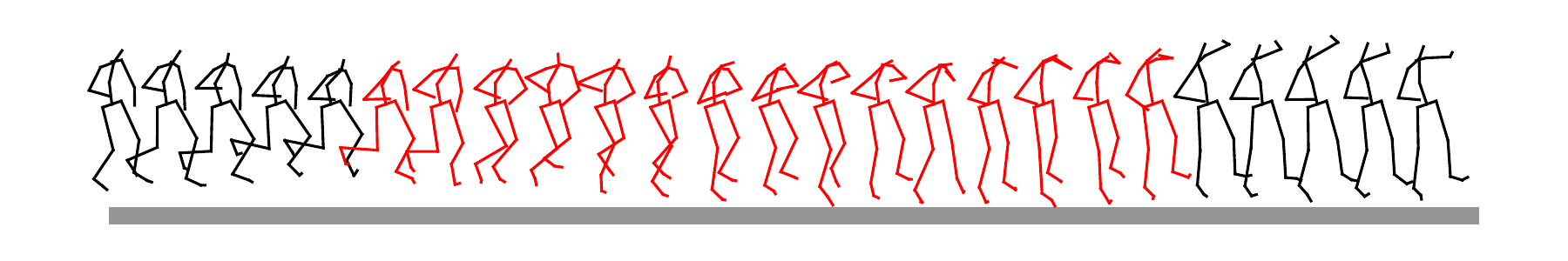}
\end{overpic} &
\begin{overpic}[width=\panelWidth]{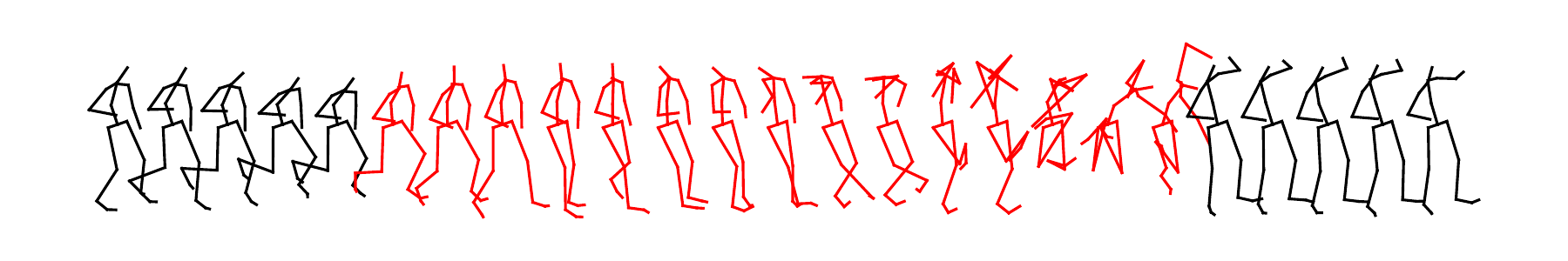}
\end{overpic} \\
\begin{overpic}[width=\panelWidth]{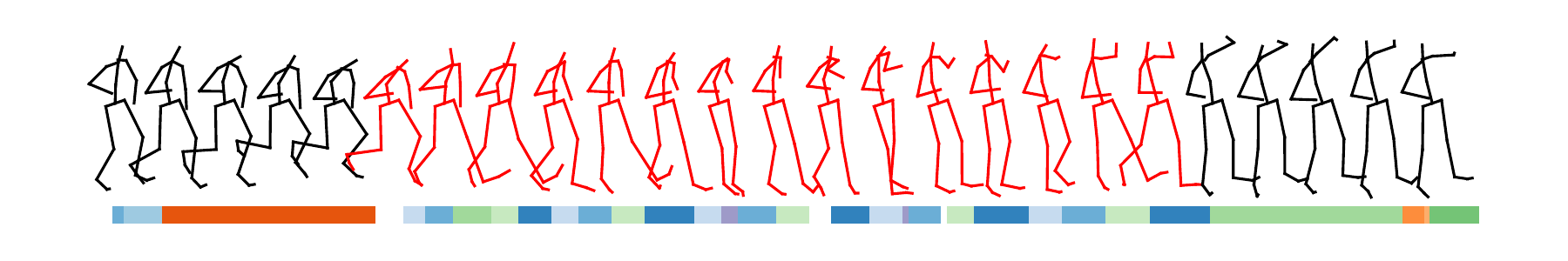}
\end{overpic} &
\begin{overpic}[width=\panelWidth]{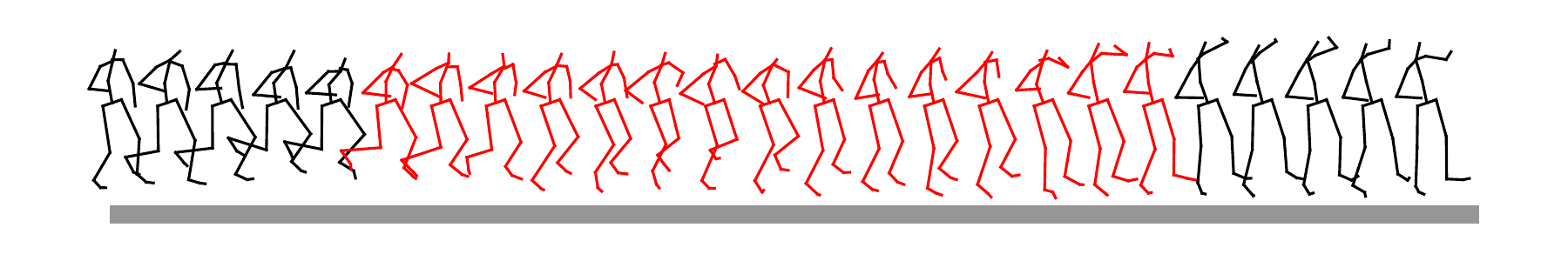}
\end{overpic} &
\begin{overpic}[width=\panelWidth]{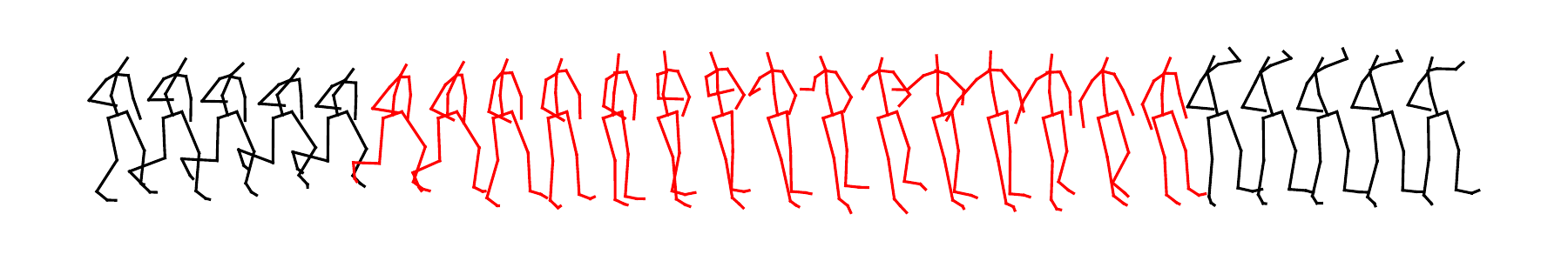}
\end{overpic} \\
\begin{overpic}[width=\panelWidth]{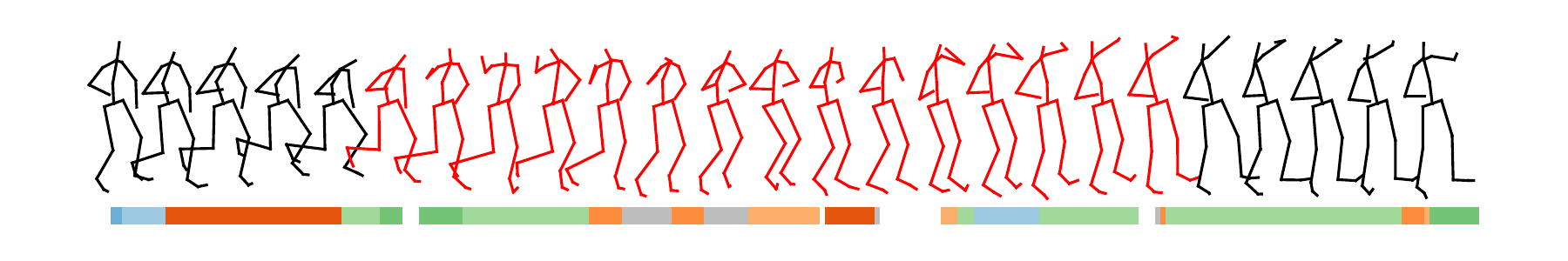}
\end{overpic} &
\begin{overpic}[width=\panelWidth]{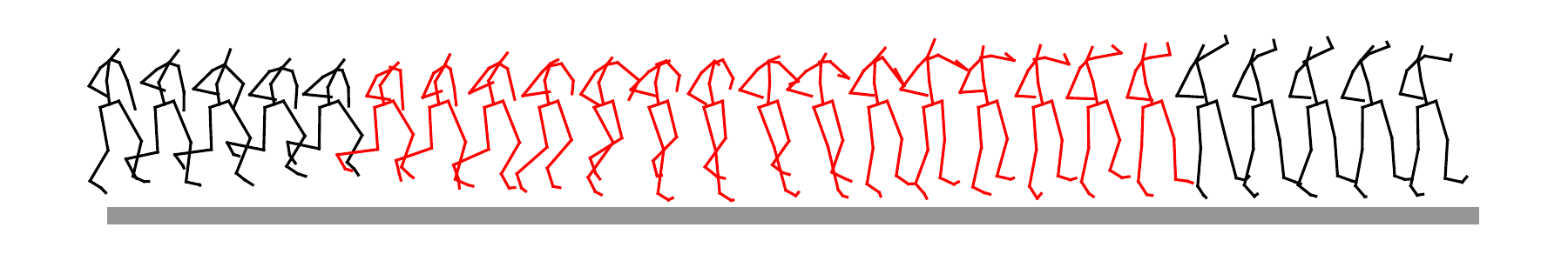}
\end{overpic}
&
\begin{overpic}[width=\panelWidth]{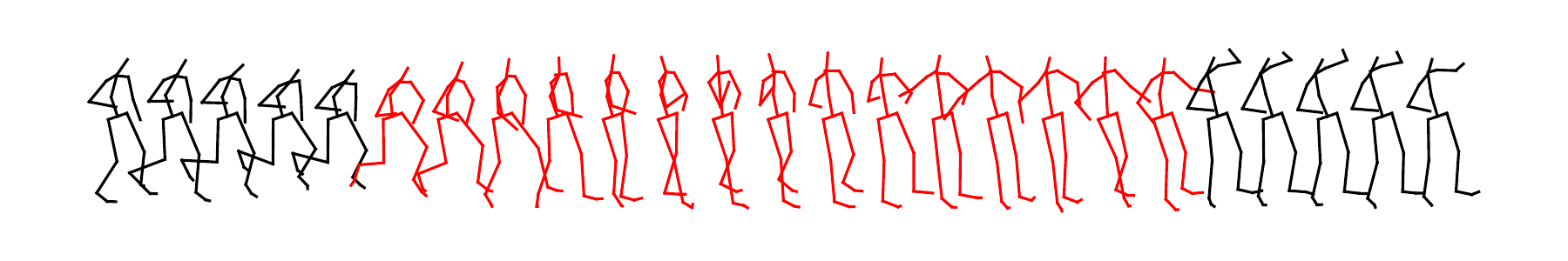}
\end{overpic} \\
\begin{overpic}[width=\panelWidth]{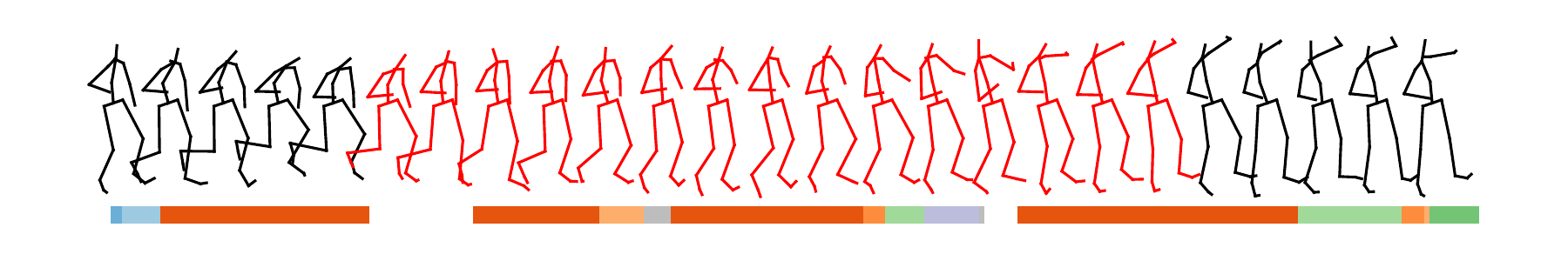}
\end{overpic} &
\begin{overpic}[width=\panelWidth]{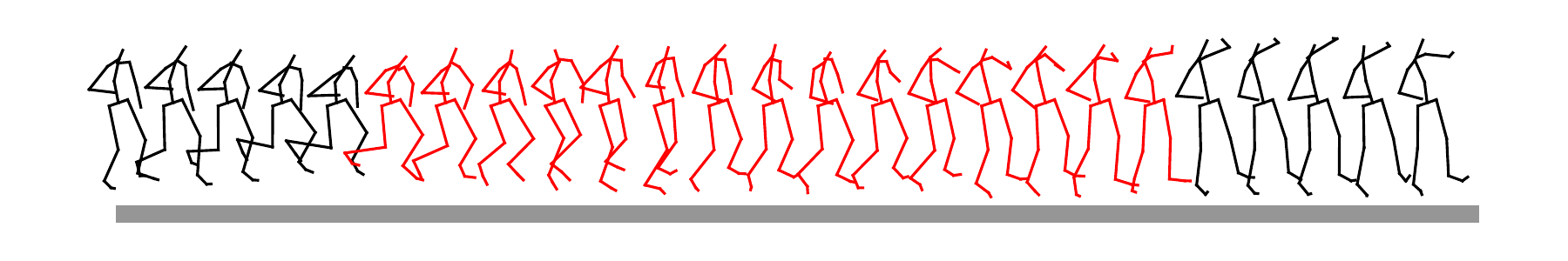}
\end{overpic} &
\begin{overpic}[width=\panelWidth]{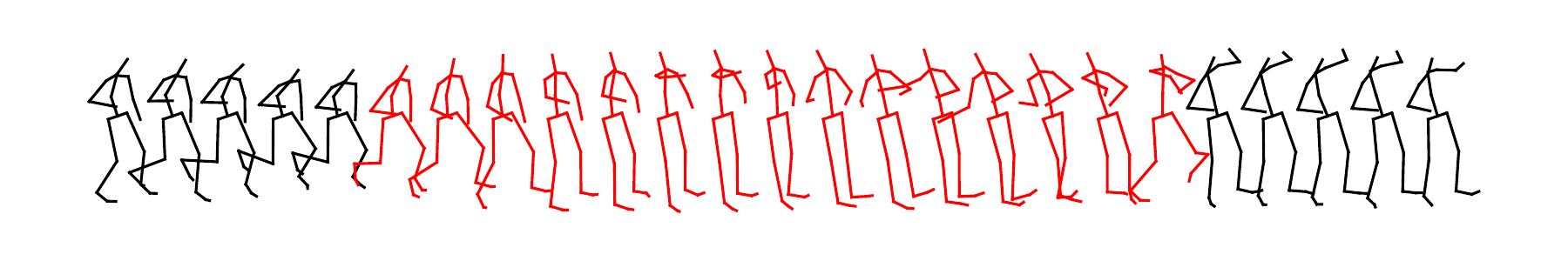}
\end{overpic}
\end{tabular}
\vspace*{-6pt}
\begin{flushleft}
\hspace{12mm} (SVAE-SLDS) \hspace{25mm} (SIN-SLDS) \hspace{30mm} (SRNN)\\
\end{flushleft}
\vspace*{-6pt}
\caption[]{\small Interpolations of human motion capture data. Red figures (covering 150 time steps) are generated by each model to interpolate between the black figures four times. We see that our SVAE-SLDS provides varied and plausible imputations with corresponding segmentations (colors shared with Fig.~\ref{fig:discrete}). During training the SIN-SLDS~\cite{sin} collapses to only use a single discrete state, and thus cannot produce diverse imputations. The SRNN~\cite{srnn} produces varied sequences, but autoregressive generation is sometimes unstable and unrealistic, and its inability to account for future observations prevents smooth interpolation with the observed sequence end.}
\vspace{-4mm}
\label{fig:missing}
\end{figure*}

\vspace{-3mm}

\section{Experiments}
\label{experiments}
\vspace{-3mm}

\begin{table}[!t]
\scriptsize
\setlength{\tabcolsep}{0.5em} 
\renewcommand{\arraystretch}{1.1}
\begin{center}
\begin{tabular}{ |c || c ||c | c c c| }
\hline
   & & &\multicolumn{3}{c|}{Interpolation FIDs $(\downarrow)$ }\\ 
   Method & $\log p(x) \geq$  $(\uparrow)$ & Sample FID   $(\downarrow)$ &  0.0-0.8 & 0.2-1.0 & 0.2-0.8\\
 \hline 
 \multicolumn{6}{|c|}{\textbf{Human Motion Capture (h3.6m)}} \\
 \hline
SVAE-SLDS & 2.39 & \bm{$12.3 \pm 0.2$} & \bm{$7.9 \pm 0.2$} & \bm{$7.5 \pm 0.2$}& \bm{$2.8 \pm 0.02$}\\ 
 SVAE-SLDS-Bias \cite{svae} & 2.36 & $34.6 \pm 0.7$ & $28.8 \pm 0.2$&  $25.8 \pm 0.3$& $6.71 \pm 0.12 $\\ 
SVAE-LDS & 2.28  &  $34.0 \pm 0.3$ & $19.3\pm 0.2$& $21.9 \pm 0.2$& $7.90 \pm 0.13$\\ 
\hline SIN-SLDS \cite{sin} & 2.36  & $33.7 \pm 0.4$& $12.38\pm 0.12$& $8.97 \pm 0.08$ & $3.27 \pm 0.05$\\ 
SIN-LDS \cite{sin} &  2.33 &  $65.2 \pm 1.4$ & $18.3 \pm 0.2$ & $15.5 \pm 0.2$& $6.24 \pm 0.09$\\ 
\hline Transformer \cite{lin2023speech}& $2.82$ & $421 \pm 11$& $234 \pm 9$& $228 \pm 5$& $113 \pm 5$\\ 
SRNN \cite{srnn}&  \textbf{2.94}  & $62.7 \pm 0.7$ &$43.5 \pm 0.7$&  $ 24.2 \pm 0.6$& $14.2\pm 0.3$\\ 
 DKS \cite{krishnan2017structured} & 2.31  & $136 \pm 6$ & $46.7 \pm 1.7$ & $33.3 \pm 1.1$ & $9.0 \pm 0.3$\\ 
  DKS+Concrete  & 1.70  & $144 \pm 3$ & $88 \pm 3$ & $89 \pm 2$ & $34.0 \pm 1.4$\\ 
 DKS+Straight-Through  & 2.09  & $22 \pm 3$ & $22.6 \pm 0.3$ & $17.15 \pm 0.14$ & $13.8 \pm 0.2$\\ 
\hline 
\multicolumn{6}{|c|}{\textbf{Audio Spectrogram (WSJ0)}} \\
 \hline  SVAE-SLDS& 1.54 &  \bm{$9.61 \pm 0.15$}& \bm{$7.5 \pm 0.2$} & \bm{$8.14 \pm 0.12$} & \bm{$4.88 \pm 0.08$}\\ 
SVAE-SLDS-Bias & 1.45& $18.6 \pm 0.2$& $15.0 \pm 0.2$& $15.2 \pm 0.2$&  $7.6 \pm 0.12$ \\ 
SVAE-LDS & $1.56$& $19.1 \pm 0.3$& $17.9 \pm 0.2$ &  $16.6 \pm 0.3$& $7.2 \pm 0.3$\\ \hline
SIN-SLDS  & 1.53 & $20.0\pm 0.4$& $17.2 \pm 0.3$ &  $14.9 \pm 0.3$& $9.5 \pm 0.2$\\ 
  SIN-LDS & 1.54 & $17.8 \pm 0.2$ & $17.21 \pm 0.11$ &  $13.2 \pm 0.4$& $10.1 \pm 0.2$ \\ 
 \hline Transformer & $1.88$ & $10.0 \pm 0.2$ & $12.0 \pm 0.3 $&  \bm{$8.2 \pm 0.2$}& $5.7 \pm 0.4$ \\ 
SRNN & \bm{$1.94$} & $23.6 \pm 0.3$ & $19.4 \pm 0.5$ &$17.4 \pm 0.3$ & $12.7 \pm 0.4$\\
DKS &  $1.55$  & $12.9 \pm 0.2$ & $10.8 \pm 0.2$ & $10.8\pm 0.14$& $7.7 \pm 0.05$\\ 
 DKS+Concrete  & 1.45  & $16.6 \pm 0.2$ & $12.8 \pm 0.2$ &  $11.3 \pm 0.2$ & $8.2 \pm 0.2$\\ 
 DKS+Straight-Through  & 1.48  & $15.51 \pm 0.13$ & $10.07 \pm 0.11$ &  $9.02 \pm 0.18$ & $6.29 \pm 0.13$\\ 
\hline 
\end{tabular}
\end{center}
    \caption{\small Comparison of model performance on log-likelihood (higher is better), FIDs of unconditionally generated samples (lower is better), and FIDs of interpolations on augmented human motion capture and audio spectrogram data. Each interpolation column corresponds to a masking regime where the shown range of percentiles of the data is masked, e.g.~0.0-0.8 means the first 80\% of time steps are masked.}
    \label{fig:main.table}
    \vspace{-6mm}
\end{table}

We compare models via their test likelihoods, the quality of generated data, and the quality of interpolations. We consider joint positions from human motion capture data (MOCAP~\cite{IonescuSminchisescu11, h36m_pami}) and audio spectrograms from recordings of people reading Wall Street Journal headlines (WSJ0~\cite{garofolo1993csr}); see Table~\ref{fig:main.table}. MOCAP has $84$-dimensional data and training sequences of length $T=250$. WSJ0 has $513$-dimensional data and training sequences of length $T=50$.  See App.~\ref{app:mocap} for further details.

Generation quality is judged via a modified \emph{Frechét inception distance} (FID~\cite{frechet1957distance}) metric. We replace the InceptionV3 network with appropriate classifiers for motion capture and speech data (see App.~\ref{app:chime}). SVAE-SLDS-Bias runs the SVAE as presented by \citet{svae}, with unrolled gradients, dropped correction term, sequential Kalman smoother, and no pre-training scheme. We match encoder-decoder architectures for all SVAE and SIN models using small networks (about 100,000 total parameters for motion data). The DKS, SRNN, and Transformer DVAE have approximately 300,000, 500,000, and 1.4 million parameters each; see App.~\ref{app:exp.spec} for details.

To demonstrate the SVAE's capacity to handle discrete latent variables in a principled manner, we compare to two DVAE baselines which incorporate discrete variables via biased gradients: the straight-through estimator \citep{bengio2013estimating} and the concrete (Gumbel-softmax) distribution \citep{jang2017categorical,maddison2017concrete}. To our knowledge, no one has successfully integrated either method into dynamical VAEs for temporal data. Thus to make comparison possible, we have devised a new model which adds discrete latent variables to the generative process of the DKS. Specifications for this model is provided in App.~\ref{app:discrete.ext}. We evaluated this model with both gradient estimators, and reported the results in Table~\ref{fig:main.table}.
\vspace{-10pt}
\paragraph{Interpretability.} In Fig.~\ref{fig:discrete} we show the SVAE-SLDS's learned discrete encoding of several joint tracking sequences. While \emph{sitting} sequences are dominated by a single dynamic mode, \emph{walking} is governed by a cyclic rotation of states. \emph{Posing} and \emph{taking photo} contain many discrete modalities which are shared with other actions. The discrete sequences provide easily-readable insight into sequences, while also compactly encoding the high-dimensional data.

\vspace{-10pt}
\paragraph{Generation.} In Fig.~\ref{fig:chime_extrapolation} we show example generated sequences from each model of audio data. Like true speech, the SVAE-SLDS moves between discrete structures over time representing individual sounds.  In contrast, the SIN-SLDS~\cite{sin} and DKS+Concrete baselines collapse to a single discrete modality, blending together continuous dynamics. While the DKS+Straight-Through model does not collapse, it uses discrete states too coarsely to inform the high-frequency dynamics of speech.

\vspace{-10pt}

\paragraph{Interpolation.} Amortized VI cannot easily infer $q(z_t)$ at time steps where the observations $x_t$ are missing. Thus, given observations at a subset of times $x_{\textnormal{obs}}$, we can encode to obtain $q(z_{\textnormal{obs}})$ and infer the missing latent variables by drawing from the generative prior $p(z_{\textnormal{missing}}|z_{\textnormal{obs}},\theta)$. Because baseline models parameterize $p(z|\theta)$ by one-directional neural networks, they can only condition $z_{\textnormal{missing}}$ on $z_{\textnormal{obs}}$ at \emph{previous} time steps, leading to discontinuity at the end of the missing window. For further specifications and for details of the SVAE approach described in Sec.~\ref{sec:innovations}, see App.~\ref{app:interp}. An alternative approach of in-filling missing data with zeros causes models to reconstruct the zeros; see App. Fig.~\ref{fig:toy_comp}. 

In Fig.~\ref{fig:missing}, we see the SVAE-SLDS uses discrete states to sample variable interpolations, while the SRNN's one-step-ahead prediction scheme cannot incorporate future information in imputation, producing discontinuities. We also note that despite achieving the highest test likelihoods, the SRNN produces some degenerate sequences when we iterate next-step prediction, and has inferior FID (see Table~\ref{fig:main.table}). The SIN-SLDS collapses to a single discrete state in training, resulting in uniform imputations that lack diversity. Example imputations for all models are provided in App. Fig.~\ref{fig:missing.extended}.

\vspace{-10pt}
\paragraph{Transformers for time series.} The permutation-invariance of transformers is visible in its varied performance on these two tasks. A lack of sequential modeling can lead to discontinuities in the data sequence which are naturally present in speech. For MOCAP, joint locations are continuous across time, making transformer-generated samples unrealistic (see Fig.~\ref{fig:missing.extended} and Table~\ref{fig:main.table}).


\vspace{-2mm}
\section{Discussion}
\vspace{-2mm}
The SVAE is uniquely situated at the intersection of flexible, high-dimensional modeling and interpretable data clustering, enabling models which both generate data and help us understand it. Our optimization innovations leverage automatic differentiation for broad applicability, and provide the foundation for learning SVAEs with rich, non-temporal graphical structure in other domains.

\section*{Acknowledgements}
This research supported in part by NSF RI Award No.~IIS-1816365, ONR Award No.~N00014-23-1-2712, and the HPI Research Center in Machine Learning and Data Science at UC Irvine.






\bibliography{svae}

\newpage
\appendix
\section{Experimental Protocol}
\label{app:exp}

\subsection{Implementation}
Code can be found at \url{https://github.com/hbendekgey/SVAE-Implicit}. All methods were implemented with the JAX library \cite{jax2018github}.
\subsection{MOCAP Dataset}
\label{app:mocap}
Our motion capture experiments were conducted using a variant of the H3.6M dataset \cite{h36m_pami}. The original data consists of high-resolution video data of 11 actors performing 17 different scenarios, captured with 4 calibrated cameras. Following the procedure of \citep{dvae} our tests use the extracted skeletal tracks, which contain 3D-coordinates for 32 different joints (for 96 observation dimensions), sampled at 25hz. Joint positions are recorded in meters relative to the central pelvis joint. Our only changes from the pre-processing of \cite{dvae} are (i) we remove dimensions with 0 variance across data sequences, resulting in 84 observation dimensions; (ii) we extract 250-step-long sequences instead of 50-step-long ones, corresponding to an increase from 2 seconds of recording to 10 seconds; (iii) we add Gaussian noise with variance of 1mm to the data to reduce overfitting; and (iv) we employ a different train-valid-test split. In total, our training dataset consisted of 53,443 sequences, our validation set contained 2,752 sequences, and our test set contained 25,893 sequences. 

While \cite{dvae} split across subjects, we found this lead to substantially different training, validation, and testing distributions. We instead split across ``sub-acts'', or the 2 times each subject did each action. Thus the training set contained 7 actors doing 17 actions once, and the validation and test set combined contained the second occurrence of each act. The validation set contained 3 subjects' second repetition of each action, and the test set contained the other 4. Two corrupted sequences were found in the validation set, \emph{7-phoning-1} and \emph{7-discussion-1}, which were removed.  

We model the likelihood $p(x|z)$ of joint coordinates using independent Gaussian distributions. Following the work of \cite{sigma} we fit a single global variance parameter to each feature dimension (as part of training), rather than fixing the likelihood variances or outputting them from the decoder network. We found this leads to more stable training for most methods.

\renewcommand{\panelWidth}{4.6cm} 
\setlength{\tabcolsep}{0.01cm}  
\begin{figure*}[!t]
\centering
\begin{tabular}{p{\panelWidth} p{\panelWidth} p{\panelWidth} }
\begin{overpic}[width=\panelWidth]{figs/interpolations/slds_interpolation_x0.pdf}
\end{overpic} &
\begin{overpic}[width=\panelWidth]{figs/interpolations/interpolation_sin_0.pdf}
\end{overpic} &
\begin{overpic}[width=\panelWidth]{figs/interpolations/interpolation_srnn_0.pdf}
\end{overpic} \\
\begin{overpic}[width=\panelWidth]{figs/interpolations/slds_interpolation_x1.pdf}
\end{overpic} &
\begin{overpic}[width=\panelWidth]{figs/interpolations/interpolation_sin_1.pdf}
\end{overpic} &
\begin{overpic}[width=\panelWidth]{figs/interpolations/interpolation_srnn_1.pdf}
\end{overpic} \\
\begin{overpic}[width=\panelWidth]{figs/interpolations/slds_interpolation_x2.pdf}
\end{overpic} &
\begin{overpic}[width=\panelWidth]{figs/interpolations/interpolation_sin_2.pdf}
\end{overpic}
&
\begin{overpic}[width=\panelWidth]{figs/interpolations/interpolation_srnn_2.pdf}
\end{overpic} \\
\begin{overpic}[width=\panelWidth]{figs/interpolations/slds_interpolation_x3.pdf}
\end{overpic} &
\begin{overpic}[width=\panelWidth]{figs/interpolations/interpolation_sin_3.pdf}
\end{overpic} &
\begin{overpic}[width=\panelWidth]{figs/interpolations/interpolation_srnn_3.pdf}
\end{overpic}
\end{tabular}
\begin{flushleft}
\hspace{12mm} (SVAE-SLDS) \hspace{25mm} (SIN-SLDS) \hspace{30mm} (SRNN)\\
\end{flushleft}
\begin{tabular}{p{\panelWidth} p{\panelWidth} p{\panelWidth} }
\begin{overpic}[width=\panelWidth]{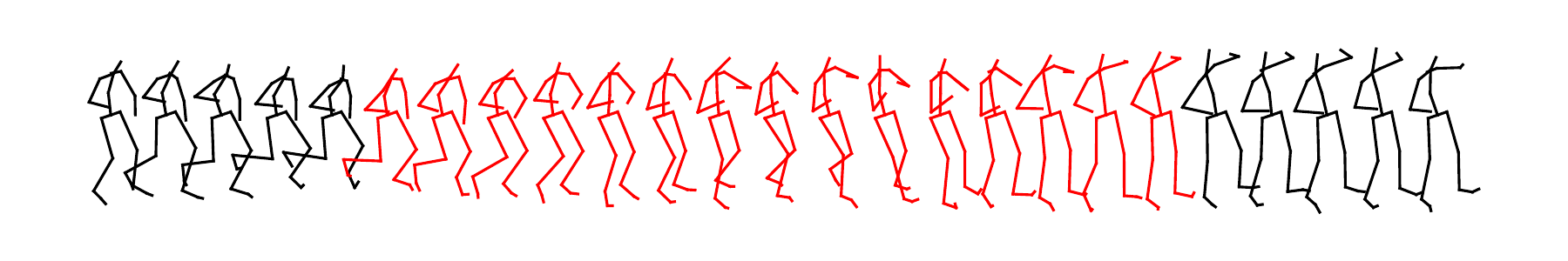}
\end{overpic} &
\begin{overpic}[width=\panelWidth]{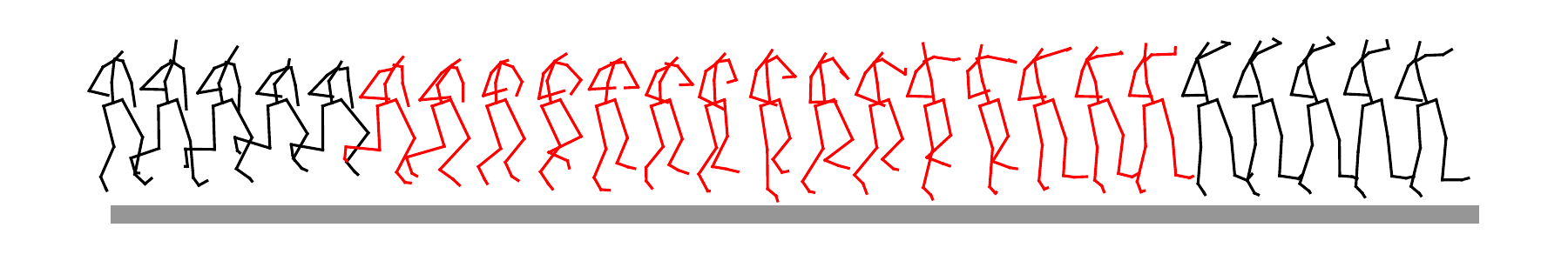}
\end{overpic} &
\begin{overpic}[width=\panelWidth]{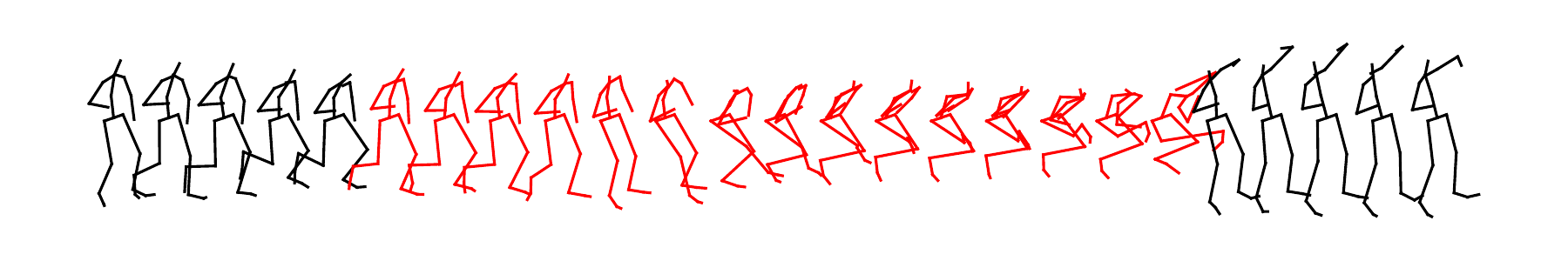}
\end{overpic} \\
\begin{overpic}[width=\panelWidth]{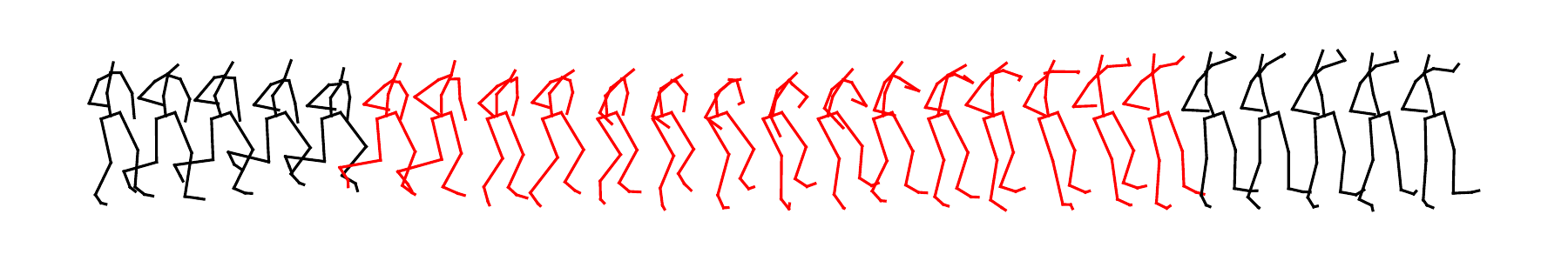}
\end{overpic} &
\begin{overpic}[width=\panelWidth]{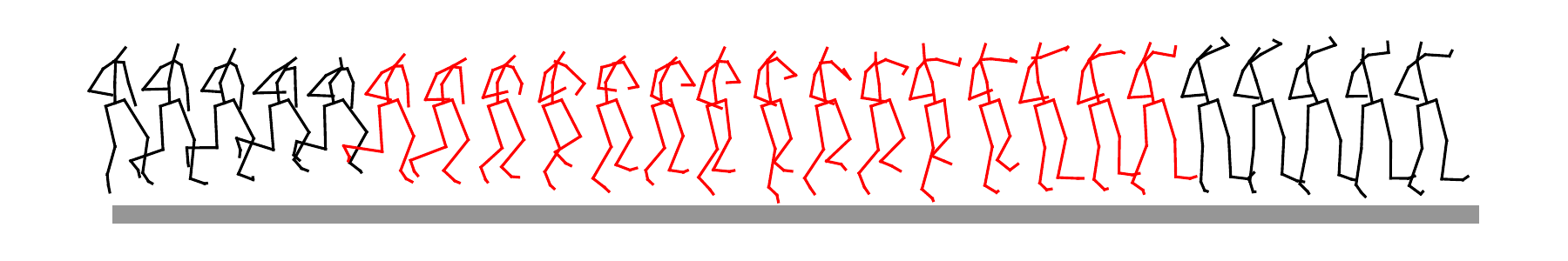}
\end{overpic} &
\begin{overpic}[width=\panelWidth]{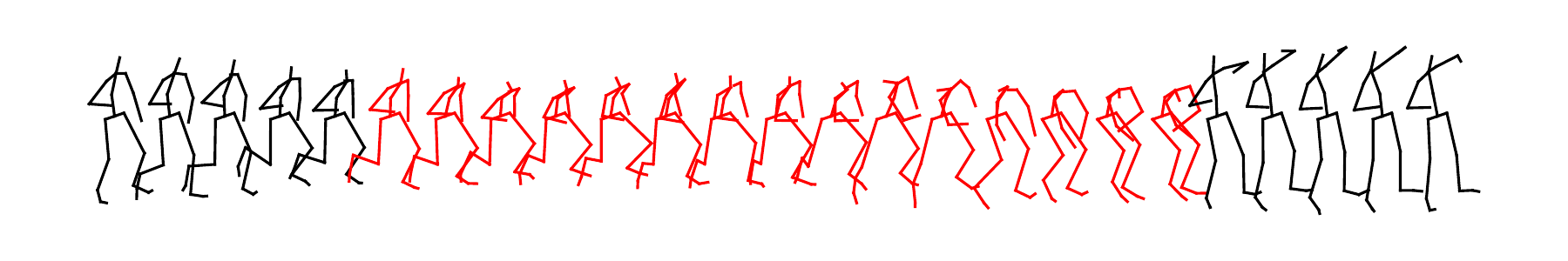}
\end{overpic} \\
\begin{overpic}[width=\panelWidth]{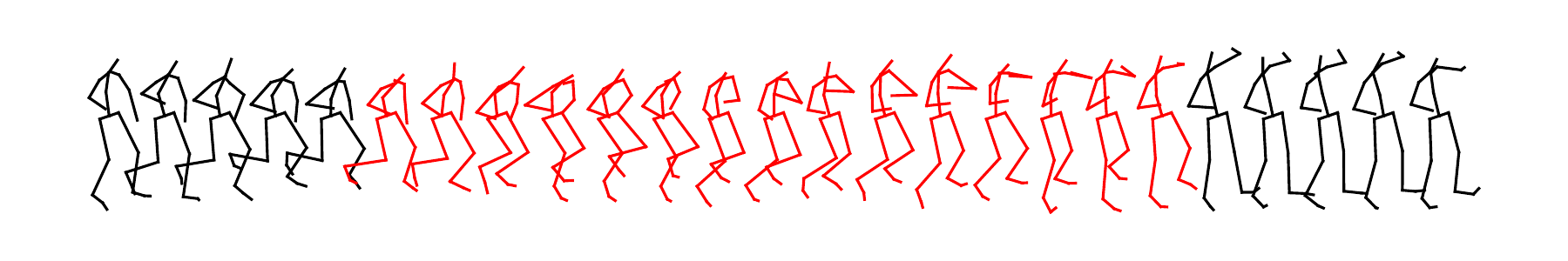}
\end{overpic} &
\begin{overpic}[width=\panelWidth]{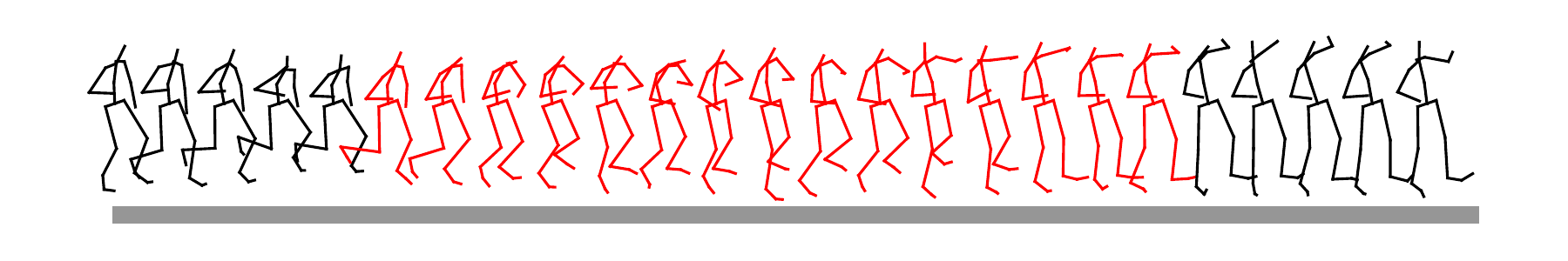}
\end{overpic}
&
\begin{overpic}[width=\panelWidth]{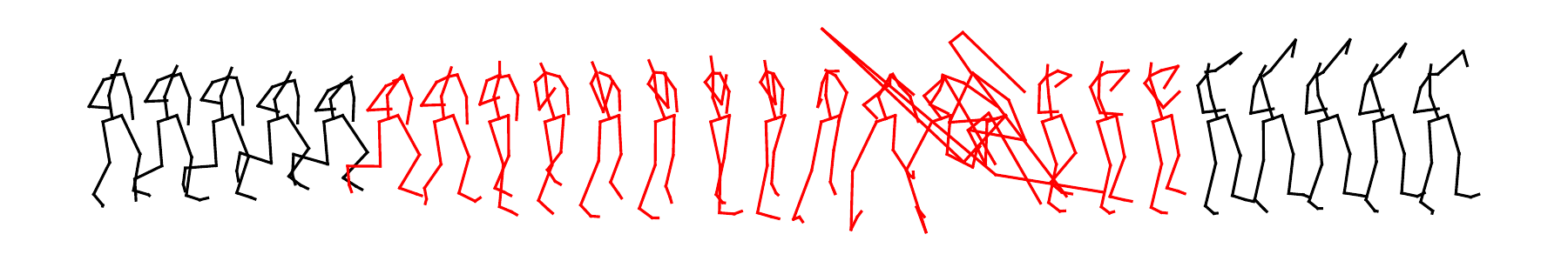}
\end{overpic} \\
\begin{overpic}[width=\panelWidth]{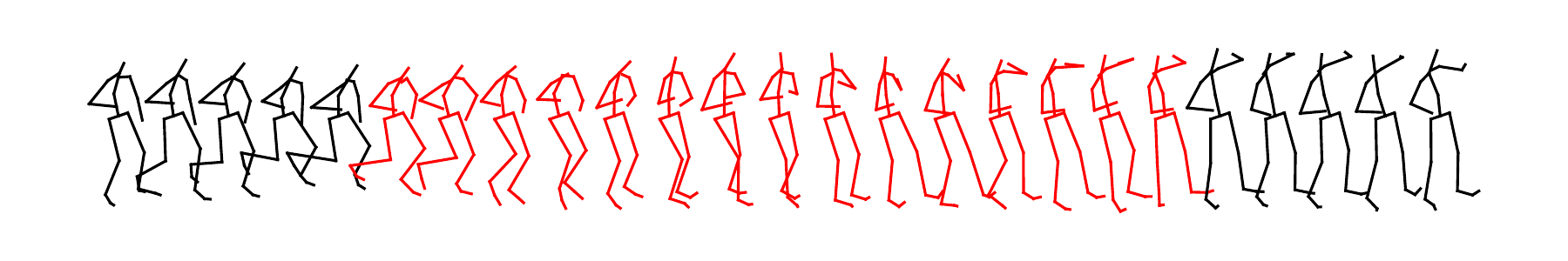}
\end{overpic} &
\begin{overpic}[width=\panelWidth]{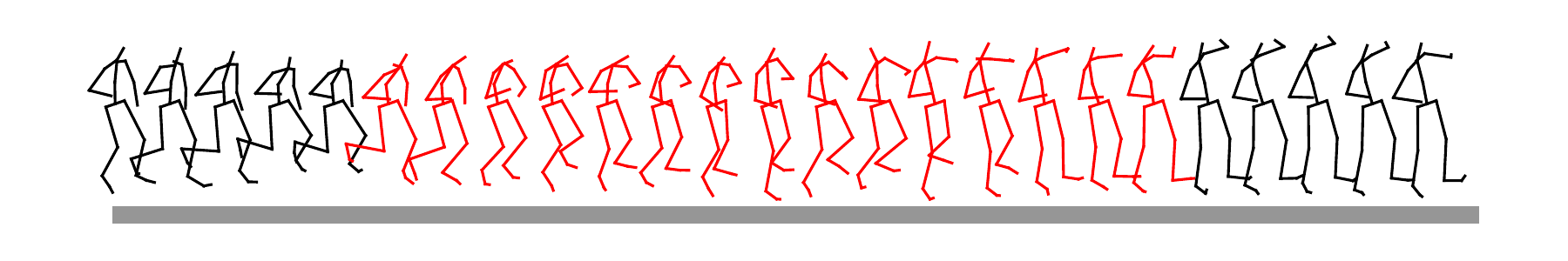}
\end{overpic} &
\begin{overpic}[width=\panelWidth]{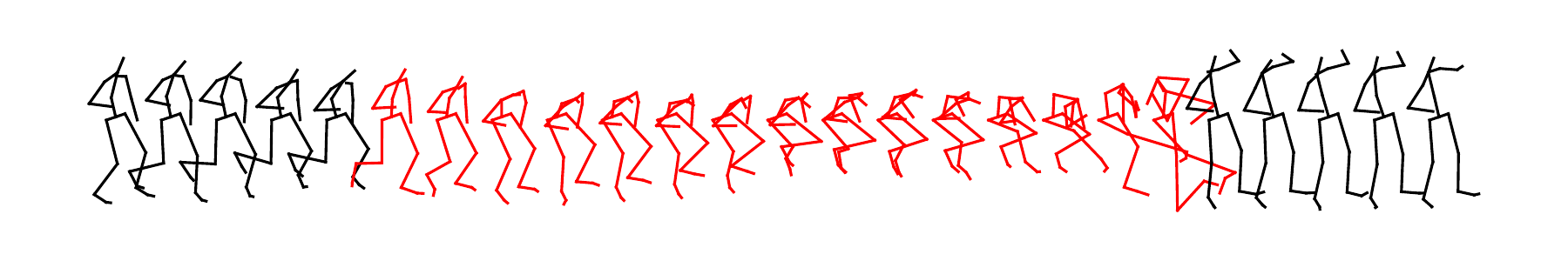}
\end{overpic}
\end{tabular}
\begin{flushleft}
\hspace{13mm} (SVAE-LDS) \hspace{21mm} (SVAE-SLDS-bias) \hspace{21mm} (Transformer)\\
\end{flushleft}
\begin{tabular}{p{\panelWidth} p{\panelWidth} p{\panelWidth} }
\begin{overpic}[width=\panelWidth]{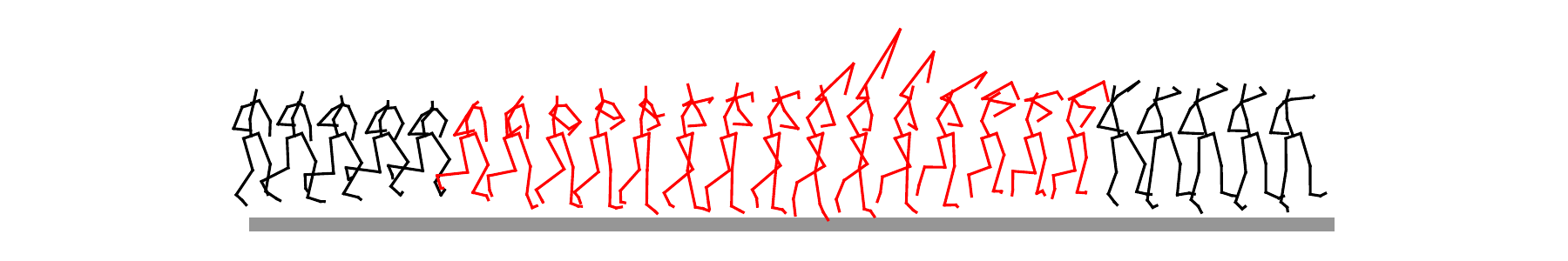}
\end{overpic} &
\begin{overpic}[width=\panelWidth]{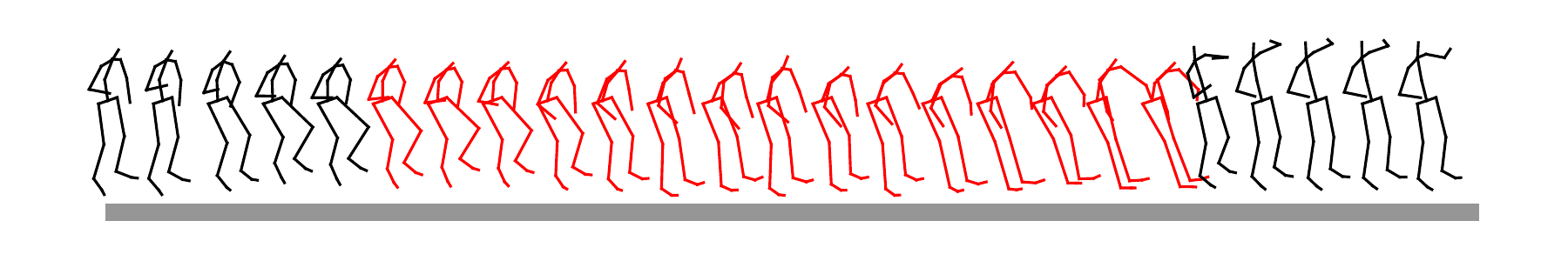}
\end{overpic} &
\begin{overpic}[width=\panelWidth]{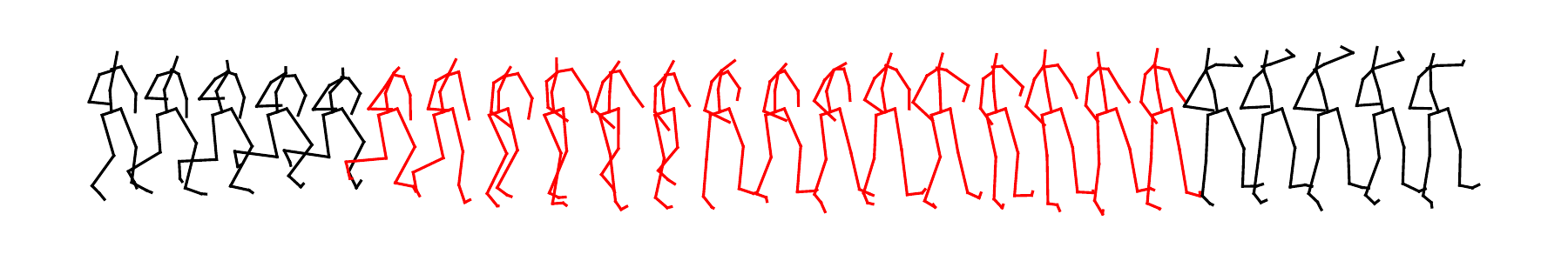}
\end{overpic}\\
\begin{overpic}[width=\panelWidth]{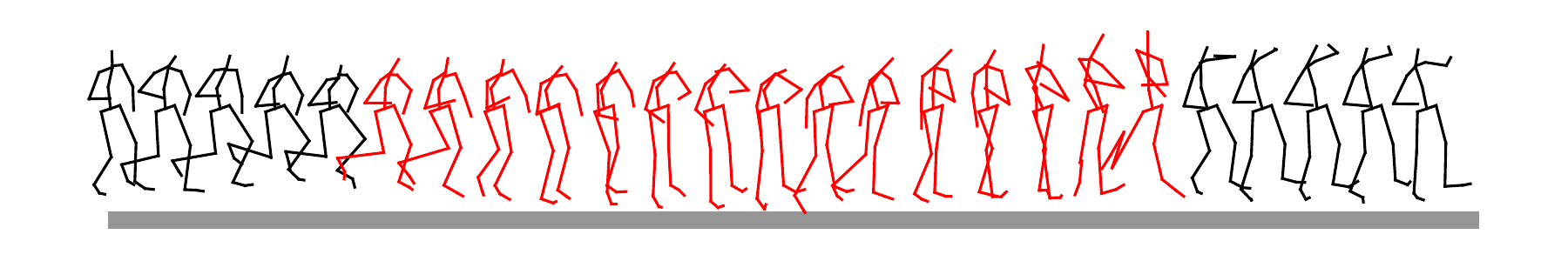}
\end{overpic} &
\begin{overpic}[width=\panelWidth]{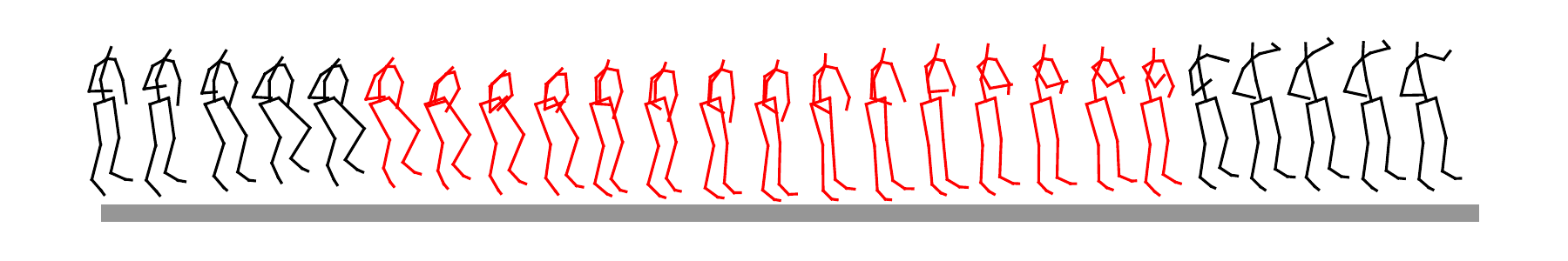}
\end{overpic} &
\begin{overpic}[width=\panelWidth]{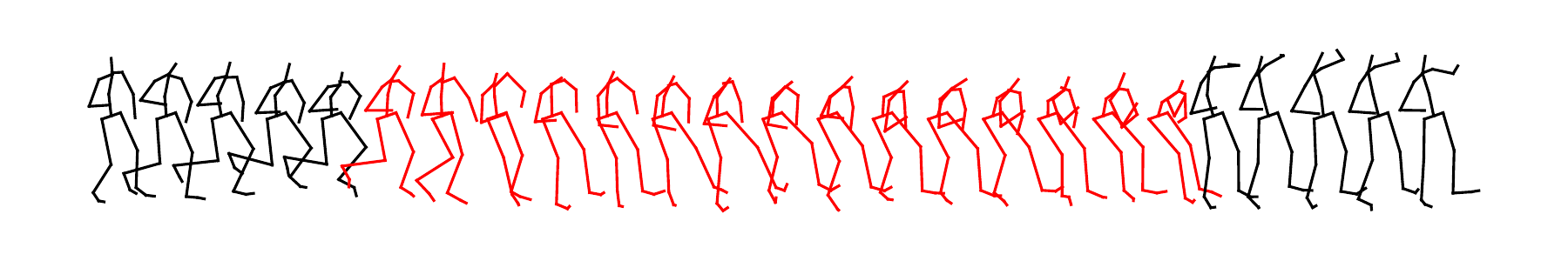}
\end{overpic}\\
\begin{overpic}[width=\panelWidth]{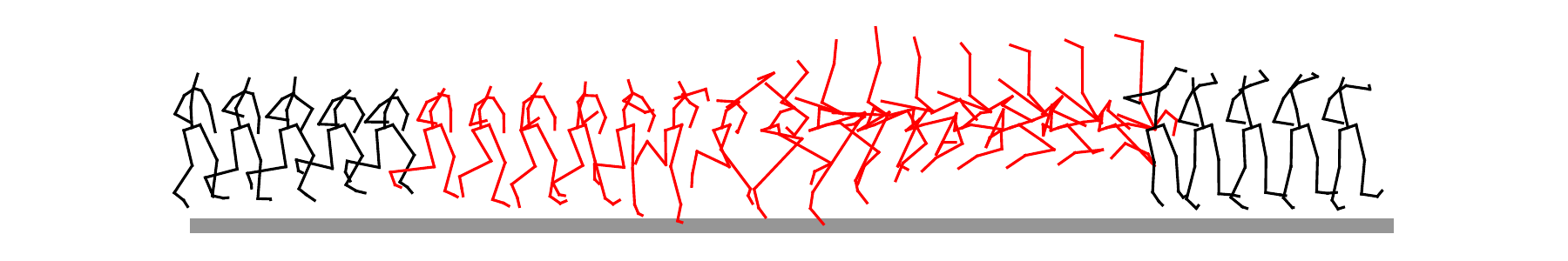}
\end{overpic} &
\begin{overpic}[width=\panelWidth]{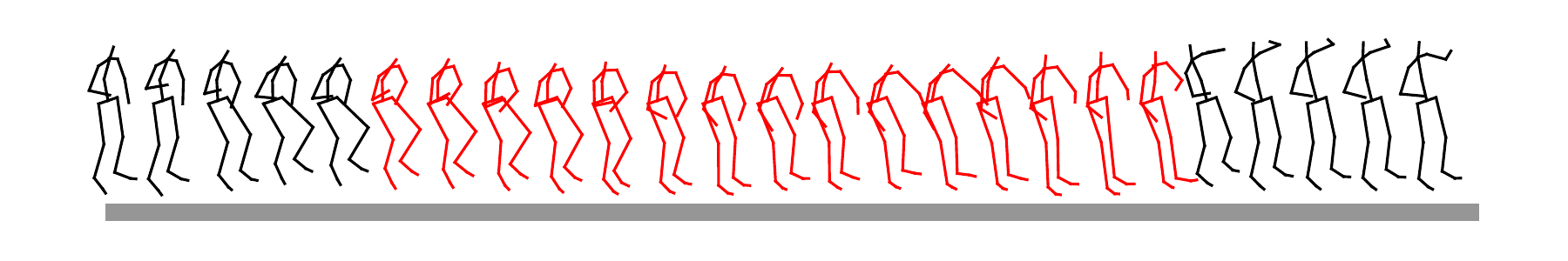}
\end{overpic}&
\begin{overpic}[width=\panelWidth]{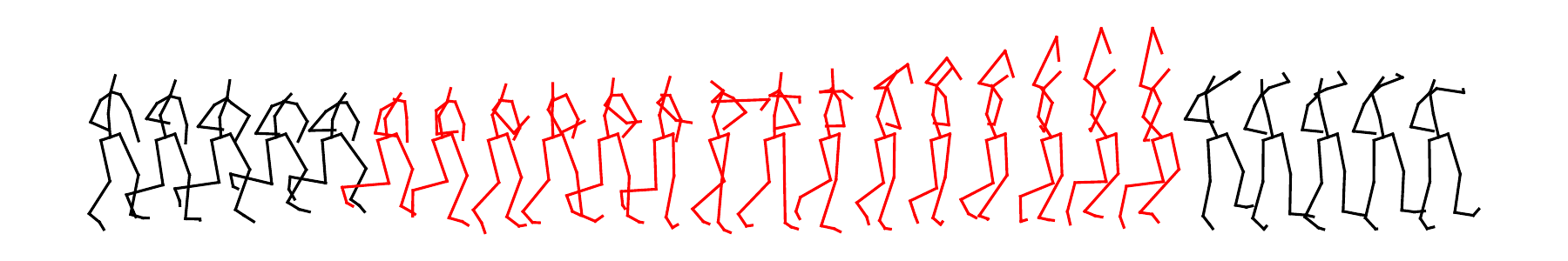}
\end{overpic}
\\
\begin{overpic}[width=\panelWidth]{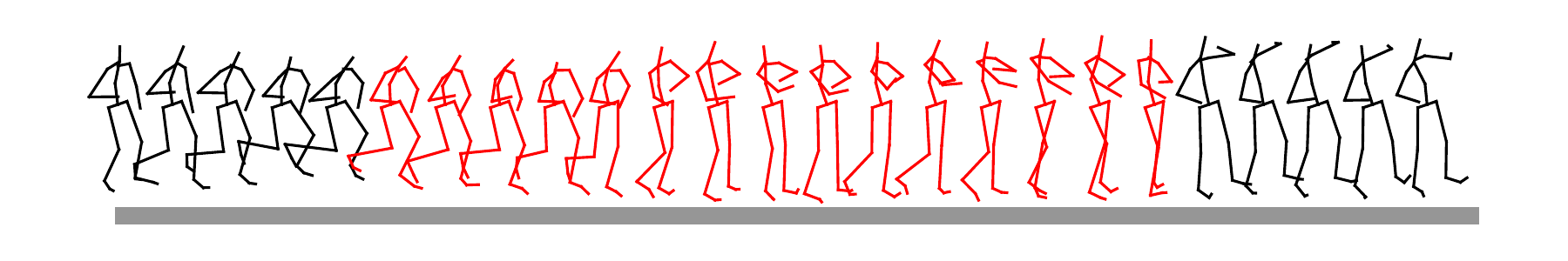}
\end{overpic} &
\begin{overpic}[width=\panelWidth]{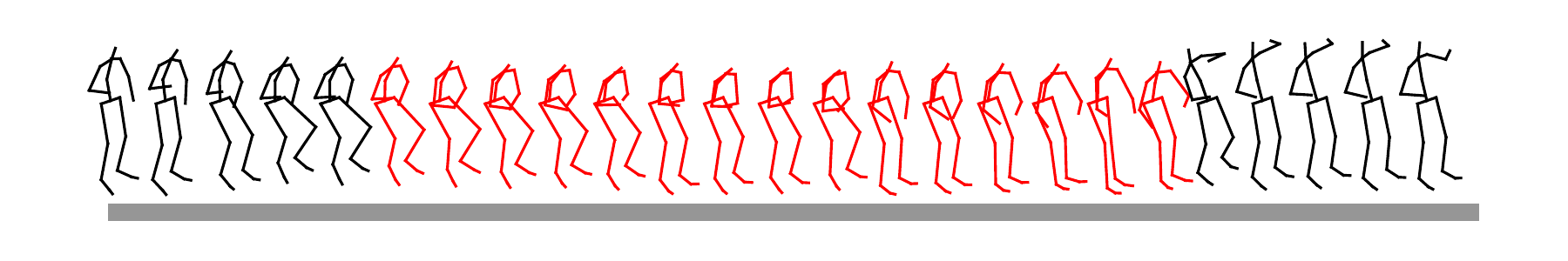}
\end{overpic}&
\begin{overpic}[width=\panelWidth]{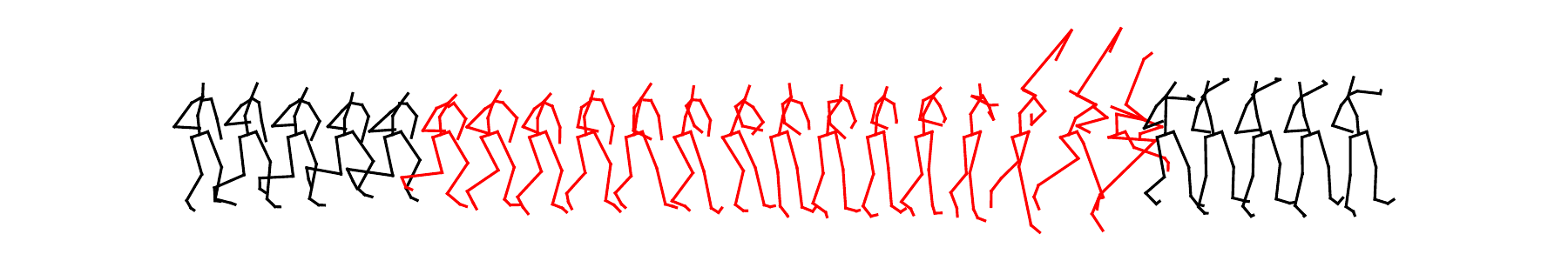}
\end{overpic}
\end{tabular}\\
\begin{flushleft}
\hspace{10mm} (DKS+Concrete) \hspace{15mm} (DKS+Straight-Through) \hspace{22mm} (DKS)\\
\end{flushleft}
\caption[]{\small Interpolations of human motion capture data. Red figures (covering 150 time steps) are generated by each model to interpolate between the black figures four times (see App.~\ref{app:interp} for methodology). Segmentation colorbars are shown below models which use discrete latent variables; all non-SVAE-SLDS models collapse to a single discrete state throughout training. Amortized VI models draw missing data conditioned on \emph{previous} data, introducing discontinuities at the end of the missing window.
}
\label{fig:missing.extended}
\end{figure*}

\paragraph{FID Scores}
The InceptionV3 \citep{DBLP:journals/corr/SzegedyVISW15} network typically used to compute \emph{Frechet Inception Distance} \citep{frechet1957distance} scores was trained on natural images and is therefore inappropriate for evaluating the generative performance of the H3.6M motion capture data. To resolve this, we compute the same score using a different underlying network. We trained a convolutional network to predict which of the 15 different motions was being performed in each sequence (note that in H3.6M, motions are distinct from actions/scenerios, which may include more subtle variations). We used a residual network architecture, where each residual block consists of 2  1-D convolutional layers with batch normalization and a skip connection. We chose 1-D convolutions (across time) as they are more appropriate for our temporal motion capture data. We used the Optuna library \citep{optuna_2019} to search across model depths, activations and layer sizes to find a network within our design space that performed optimally on a validation dataset (25\% of the original training data). Our final network architecture is summarized in table \ref{tab:cls_arch}. As with the original FID score, we use the output of final average pooling layer to compute the score. 


\subsection{WSJ0 Dataset}
\label{app:chime}
The WSJ0 dataset \cite{garofolo1993csr} contains recordings of people reading news from the Wall Street Journal. We followed all pre-processing of \cite{dvae}, which resulted in sequences of length 50 with 513-dimensional observations at each time step. We modeled the spectrogram data via a Gamma distribution with fixed concentration parameter 2, equivalent to modeling the complex Fourier transform of the data with a mean 0 complex normal distribution. We remove subject 442 from the test data set, as several of the recordings are substantially different from all others in the train and validation sets: instead of being isolated audios, some of these recordings include background noises and beeps which cannot be fit at training time. The encoder takes as input the log of the data and the decoder outputs the log rate of the gamma distribution, as the data spans many orders of magnitude. In total, our training dataset contained 93,215 sequences, our validation set contained 2,752 sequences, and our test set contained 4,709 sequences.

\paragraph{FID Scores}
As with the H3.6M dataset, it is necessary to define a different base network to evaluate generative FID scores for the WSJ0 dataset. As the WSJ0 dataset does not have an appropriate set of labels to use to train a classifier, we chose to instead train a classifier on the CREMA-D dataset \citep{cao2014crema}, which consists of similar speech clips. We preprocessed each audio sequence in CREMA-D identically to the WSJ0 dataset and trained a classifier to predict one of 6 different emotions that was expressed. For architecture, we used a \texttt{efficientnet\_v2\_s} model from the \texttt{torchvision} package \citep{torchvision} on the data in log space.

\begin{table}[!t]
\centering
\resizebox{0.75\columnwidth}{!}{%
\begin{tabular}{|l|l|l|}
\hline
\multicolumn{1}{|c|}{Residual Block} & \multicolumn{1}{|c|}{Downsampling block} & \multicolumn{1}{|c|}{\begin{tabular}[c]{@{\hspace{2em}}l@{\hspace{2em}}}\\ \textbf{Mocap Classifier}\\ Activation: Relu\\ Conv. window: 3\\ \phantom{.} \end{tabular}} \\  \hline
  \begin{tabular}[c]{@{\hspace{2em}}l@{\hspace{2em}}}Conv.\\ Batch Norm\\ Activation\\ Conv.\\ Batch Norm\\ Activation\\ Add input\end{tabular} &
  \begin{tabular}[c]{@{\hspace{2em}}l@{\hspace{2em}}}Conv.\\ Batch Norm\\ Activation\\ Conv. (stride 2)\\ Batch Norm\\ Activation\\ Add downsampled input\end{tabular} &
  \begin{tabular}[c]{@{\hspace{2em}}l@{\hspace{2em}}}Residual block 128\\ Downsampling block 128\\ Residual block 128\\ Residual block 128\\ Global average pool\\ Dense\end{tabular} \\ \hline
\end{tabular}%
}
\vspace{10pt}
\caption{Summary of classifier network architecture used to compute FID scores for Mocap data.}
\label{tab:cls_arch}
\end{table}

\begin{figure}[!t]
    \includegraphics[width=\linewidth]{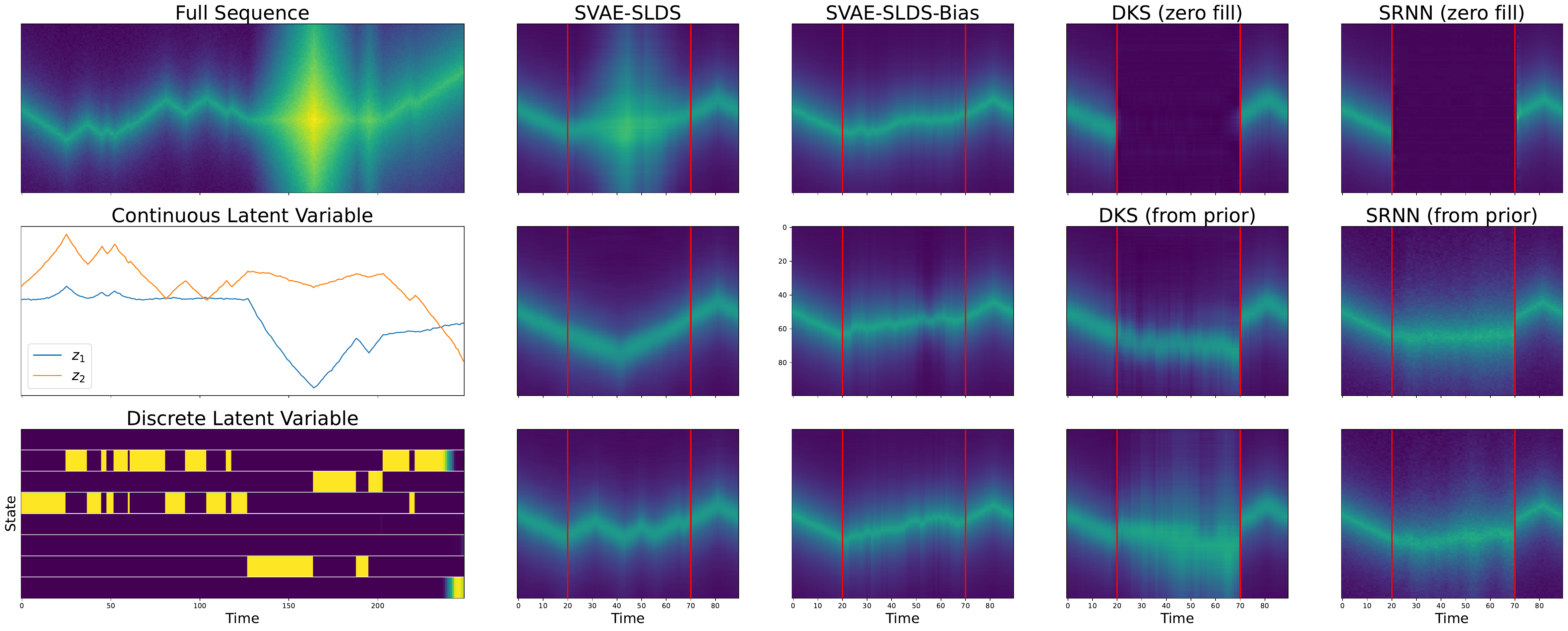}
    \caption{\small Segmentation and Imputation. An SLDS SVAE is trained on synthetic data with feature dimension $100$, sequence length $T = 250$ and 4 distinct dynamics. We show the SVAE’s latent representation of the pictured sequence via the means of the 2-dimensional continuous latent variable and the probability distributions over 8 discrete states (which has coalesced its posterior probability into a single state at nearly every time step). In the following columns we show model imputations when time steps 20 to 70 are not provided. For the DKS and SRNN, we show one imputation using the infill-with-zeros method, and 2 imputations drawn from the learned latent variable prior, conditioned on the preceding latent variables for which a posterior is properly defined. }
    \label{fig:toy_comp}
\end{figure}

\subsection{Discrete Deep Kalman Smoother Baselines}
\label{app:discrete.ext}
To our knowledge, time series VAEs that utilize Concrete \cite{maddison2017concrete} or straight-through gradient estimators have not been previously evaluated in literature. In order to perform a fair comparison to these approaches to gradient estimation with discrete latent variables, we have devised a new model that extends the Deep Kalman Smoother (DKS) \cite{krishnan2015deep} to incorporate both discrete and continuous latent variables. We can define the generative model as follows:
\begin{equation}
    p_{\theta_{k}}(k_t \mid z_{t-1}, k_{t-1}) = \text{Cat}(\mathbf{\lambda}_{\theta_k}(z_{t-1}, k_{t-1}))
\label{eq:ddks_k_prior}
\end{equation}
\begin{equation}
    p_{\theta_{z}}(z_t \mid z_{t-1}, k_{t}) = \mathcal{N}(\mathbf{\mu}_{\theta_{z}}(z_{t-1}, k_{t}), \mathbf{\sigma}_{\theta_{z}})
\end{equation}
\begin{equation}
    p_{\gamma}(x_t \mid z_{t}) = \mathcal{N}(\mathbf{\mu}_{\gamma}(z_{t}), \mathbf{\sigma}_{\theta_{x}})
\end{equation}

Here $\mathbf{\lambda}_{\theta_k}(z_{t-1}, k_{t-1})$, $\mathbf{\mu}_{\theta_{z}}(z_{t-1}, k_{t})$ and $\mathbf{\mu}_{\gamma}(z_{t})$ are neural networks, while $\sigma_{\theta_z}$ and $\sigma_{\theta_x}$ are learnable variance parameters. $\mathbf{\mu}_{\theta_{x}}(z_{t})$ uses the same decoder architecture as our implementation of the standard DKS model.  $\mathbf{\lambda}_{\theta_k}(z_{t-1}, k_{t-1})$, $\mathbf{\mu}_{\theta_{z}}(z_{t-1}, k_{t})$ use the following architecture in our experiments:
\begin{table}[H]
    \centering
    \setlength{\tabcolsep}{12pt}
    \begin{tabular}{@{\hskip 0.5in}l   @{\hskip 0.5in}l}
         $\mathbf{\lambda}(\cdot)$ & $\mathbf{\mu}(\cdot)$\\
         Layer Norm & Layer Norm \\
         Dense 64 & Dense 64 \\
         Gelu   & Gelu  \\
         Layer Norm & Layer Norm  \\
         Dense 64 & Dense 64  \\
         Gelu   & Gelu  \\
         Layer Norm & Layer Norm  \\
         Dense  & Dense \\
         Softmax &    
    \end{tabular}
    \notag
\end{table}
The inference model can be written as:
\begin{equation}
    q_{\phi_{k}}(k_t \mid x_{t:T}, z_{t-1}, k_{t-1}) = \text{Cat}(\mathbf{\lambda}_{\phi_k}(g_\phi (x_{t:T}), z_{t-1}, k_{t-1}))
\label{eq:ddks_k_inference}
\end{equation}
\begin{equation}
    q_{\phi_{z}}(z_t \mid x_{t:T}, z_{t-1}, k_{t}) = \mathcal{N}(\mathbf{\mu}_{\phi_z}(g_\phi (x_{t:T}), z_{t-1}, k_{t}), \sigma_{\phi_z})
\end{equation}
Here $g_\phi (x_{t:T})$ is same encoder network used by our DKS implementation (see Table \ref{tab:arch}), while $\mathbf{\lambda}_{\phi_k}(\cdot)$ and $\mathbf{\mu}_{\phi_k}(\cdot)$ are networks with the same architecture as $\mathbf{\lambda}_{\theta_k}(\cdot)$ and $\mathbf{\mu}_{\theta_k}(\cdot)$ shown above. We note that the output of $g_\phi(x_{t:T})$ is shared by both parts of the inference model. 

In our experiments we match the latent dimension and number of discrete states to the corresponding SVAE SLDS model. For ``DKS + Straight-Through'' we use straight-through estimates of the gradient of samples of $k_t$. In this case we can directly compute the KL-divergence between $p_{\theta_k}(k_t\mid z_{t-1}, k_{t-1})$ and $q_{\phi_k}(k_t\mid x_{t:T}, z_{t-1}, k_{t-1})$ in closed form.

For ``DKS + Concrete'' we replace the Categorical distribution in equations \ref{eq:ddks_k_prior} and \ref{eq:ddks_k_inference} with a Concrete distribution. As the KL-divergence between concrete distributions does not have a closed-form expression, we estimate the KL-divergence between the two distributions by sampling. During training, we anneal the temperature parameter for each distribution from $1.$ to $0.02$.

\subsection{Architectures and Training Specifications}
\label{app:exp.spec}
All experiments use latent space with dimension $D=16$. We train using the Adam \citep{kingma2014adam} optimizer for neural network parameters, and stochastic natural gradient descent for graphical model parameters, with a batch size of $B=128$ and learning rate $10^{-3}$ (the transformer DVAE uses learning rate $10^{-4}$, which improved performance). We train all methods for 200 epochs on MOCAP and 100 epochs on WSJ0 (including VAE-pre-training for 10 epochs for SVAE/SIN methods), which we found was sufficient for convergence. 

For the SVAE-SLDS, we initialize the graphical model parameters after these 10 epochs of VAE pre-training of the networks. We found that the discrete latent variables could not meaningfully cluster the continuous latent variable at the beginning of training, when the encoder outputs do not yet contain high mutual information with the observations. After VAE pre-training, we sampled 100 sequences and encoded them. We then normalized the resulting approximate likelihood functions and sampled to obtain plausible sequences in the latent space. We trained an auto-regressive HMM model on the resulting sequences using memoized variational inference, which uses adaptive proposals to avoid local optima as implemented in \texttt{bnpy} \cite{hughes2013memoized,hughes2015scalable}, to initialize the parameters $\eta$ of $q(\theta)$.

\begin{table}[p]
\small
\centering
\begin{tabular}{|l|l|l|l|l|l|l|}
\hline
Method & Encoder arch. & Enc. Params & Decocder arch. & Dec. Params & Latent params & Runtime\\ \hline
SVAE-SLDS & \begin{tabular}[c]{@{}l@{}}Dense 256\\ Gelu \citep{hendrycks2016gaussian} \\ Layer Norm  \citep{ba2016layer}\\ Dense 128\\ Gelu \\ Layer Norm\\Dense 16\end{tabular} & 57,504 & \begin{tabular}[c]{@{}l@{}}Dense 128\\ Gelu\\ Layer Norm\\ Dense 256\\ Gelu\\ Layer Norm\\ Dense\end{tabular} & 57,640 & \begin{tabular}[c]{@{}l@{}} 30,668 \\ (50 states)\end{tabular} & \begin{tabular}[c]{@{}l@{}} 16.4 GPU hours \\ (MOCAP) \\ 12.5 GPU hours \\ (WSJ0) \end{tabular} \\ \hline
SVAE-LDS & \begin{tabular}[c]{@{}l@{}}Dense 256\\ Gelu \\ Layer Norm\\ Dense 128\\ Gelu \\ Layer Norm\\Dense 16\end{tabular} & 57,504 & \begin{tabular}[c]{@{}l@{}}Dense 128\\ Gelu\\ Layer Norm\\ Dense 256\\ Gelu\\ Layer Norm\\ Dense\end{tabular} & 57,640 & 716 & \begin{tabular}[c]{@{}l@{}} 9 GPU hours \\ (MOCAP) \\ 4.6 GPU hours \\ (WSJ0) \end{tabular}\\ \hline
SIN-SLDS & \begin{tabular}[c]{@{}l@{}}Dense 256\\ Gelu  \\ Layer Norm  \\ Dense 128\\ Gelu \\ Layer Norm\\Dense 16\end{tabular} & 57,504 & \begin{tabular}[c]{@{}l@{}}Dense 128\\ Gelu\\ Layer Norm\\ Dense 256\\ Gelu\\ Layer Norm\\ Dense\end{tabular} & 57,640 & \begin{tabular}[c]{@{}l@{}} 34,018\\ (50 states)\end{tabular} & \begin{tabular}[c]{@{}l@{}} 7.3 GPU hours \\ (MOCAP) \\ 3.6 GPU hours \\ (WSJ0) \end{tabular} \\ \hline
SIN-LDS & \begin{tabular}[c]{@{}l@{}}Dense 256\\ Gelu \\ Layer Norm\\ Dense 128\\ Gelu \\ Layer Norm\\Dense 16\end{tabular} & 57,504 & \begin{tabular}[c]{@{}l@{}}Dense 128\\ Gelu\\ Layer Norm\\ Dense 256\\ Gelu\\ Layer Norm\\ Dense\end{tabular} & 57,640 & 1516 & \begin{tabular}[c]{@{}l@{}} 7.3 GPU hours \\ (MOCAP) \\ 3.6 GPU hours \\ (WSJ0) \end{tabular} \\ \hline
Transformer & \begin{tabular}[c]{@{}l@{}}Dense 256\\ Gelu\\ Layer Norm \\ Positional Enc. \\ Self Attention\\ Layer Norm\\ Dense 256 \\ Gelu + Skip \\ Layer Norm \\ Dense \end{tabular} & 360,480 & \begin{tabular}[c]{@{}l@{}}Dense 256\\ Gelu\\ Layer Norm \\ Positional Enc. \\ Self Attention\\ Layer  Norm\\ Cross Attention \\ Layer Norm \\ Dense 256 \\ Gelu \\ Layer Norm \\ Dense \end{tabular} & 532,136 & 518,688  & \begin{tabular}[c]{@{}l@{}} 2 GPU hours \\ (MOCAP) \\ 1.6 GPU hours \\ (WSJ0) \end{tabular} \\ \hline
SRNN & \begin{tabular}[c]{@{}l@{}}Dense 256\\ Gelu\\ Layer Norm \\ Dense 128 \\ Shared LSTM 128 \\ Reverse-LSTM 128\\Dense \end{tabular} & 252,288 & \begin{tabular}[c]{@{}l@{}} Shared LSTM 128 \\ Dense 256\\ Gelu\\ Layer Norm\\ Dense\end{tabular} & 190,888 & 87,680  & \begin{tabular}[c]{@{}l@{}} 3.7 GPU hours \\ (MOCAP) \\ 2.9 GPU hours \\ (WSJ0) \end{tabular} \\ \hline
\begin{tabular}[c]{@{}l@{}} DKS \\ (+ Discrete)\end{tabular} & \begin{tabular}[c]{@{}l@{}}Dense 256\\ Gelu \\ Layer Norm\\ Reverse-LSTM 128\\ Dense 192\end{tabular} & 244,160 & \begin{tabular}[c]{@{}l@{}}Dense 128\\ Gelu\\ Layer Norm\\ Dense 256\\ Gelu\\ Layer Norm\\ Dense\end{tabular} & 57,640 &   \begin{tabular}[c]{@{}l@{}} 38,144\\ (43,724)\end{tabular} & \begin{tabular}[c]{@{}l@{}} 3.3 GPU hours \\ (MOCAP) \\ 2.4 GPU hours \\ (WSJ0) \end{tabular} \\  \hline
\end{tabular} 
\vspace*{12pt}\\
\caption{\small Summary of network architectures used for human motion capture and speech experiments. ``Latent parameters'' are those which specify $p(z|.)$. For the SRNN model, one LSTM layer is shared between inference and generation; we denote this layer ``shared LSTM'' and count its parameters among those for the decoder. SVAE-SLDS-Bias uses the same architecture as the SVAE-SLDS but takes 36.5 GPU hours MOCAP and 28.5 GPU hours on WSJ0 to train.}
\label{tab:arch}
\end{table}

Table \ref{tab:arch} summarizes the network architectures used, numbers of parameters, and compute needed to run the main experiments of this paper. Note that ``Rev. LSTM'' denotes a reverse-order LSTM and ``TF-LSTM'' refers to the ``teacher-forcing'' LSTM that is shared between the encoder and decoder \citep{dvae}. For the transformer, attention modules in the decoder include causal masks, and all attentions are single-headed. Skip connections are not listen in the table. For a full specification of the architecture, see \cite{lin2023speech} (Table~\ref{tab:arch} outlines the architecture choices at each layer of that paper's framework).

For each model, we list the number of parameters for the MOCAP dataset. Even though the same architecture is used for both models, WSJ0 has a higher data dimension (513 vs 84) resulting in 109,824 more parameters in the encoder (via the first layer) and the decoder (in the last layer). The total amount of computation across both datasets amounts to 6 GPU days for A10G GPUs on an EC2 instance.

\subsection{Interpolation Procedure}
\label{app:interp}
Let $x_{\textnormal{missing}}, x_{\textnormal{obs}}$ be a partition of data into missing and observed sequences. For the following explanations, we assume that the observed data is the beginning and end of the sequence $x_{\textnormal{obs}} = x_{0.0-0.2} \cup x_{0.8-1.0}$ (first 20\% and last 20\% as in our results) leaving $x_{\textnormal{missing}} = x_{0.2-0.8}$.

To interpolate missing data we begin by encoding $x_{\textnormal{obs}}$ to obtain $q(z_{\textnormal{obs}})$ (in the DKS/transformer/SRNN case) or $\hat \ell_\phi(z_{\textnormal{obs}}|x_{\textnormal{obs}})$ (in the SVAE/SIN case), where $z_{\textnormal{obs}}$ is the subset of $z$ corresponding to the observed time steps. For models which are separable across time this is trivial, and we handle the LSTMs in the SRNN and DKS encoders by encoding every connected block of observations separately so that we do not need to hand missing or infilled data into the encoders.

For the DKS and its extensions, we then sample the remaining $z$ from the generative prior given the last encoded latent variable that precedes the missing chunk. In the regime above, we therefore sample from $p(z_{0.2-0.8}|z_{0.0-0.2})$ as we cannot easily condition on later timesteps of a forward deep recurrent model. Finally, we decode $z_{\textnormal{missing}} \cup z_{\textnormal{obs}}$ to obtain the reconstruction and interpolation.

For the SRNN and the transformer, due to their autoregressive nature we sample a value of $z$ at a single timestep (i.e. the first timestep in $z_{0.2-0.8}$) and decode to obtain a new $x$. We then repeat this process so as to update the hidden state $h_t$ while holding $z_t$ in the observed and already-interpolated region constant. 

For the SVAE and SIN methods, we do inference in $q_\phi(z|x)$ as in training time, except setting the inference network potentials at missing timesteps to 0 (thus setting $\hat \ell_\phi(z|x)$ as uniform over the reals). For models with discrete random variables the block updating routine is prone to local optima, so we initialize according to the following heuristic: we encode $x_{0.0-0.2}$ and $x_{0.8-1.0}$ and perform block updating to obtain $q(k_{0.0-0.2})$ and $q(k_{0.8-1.0})$. We then sample the endpoints $k_{0.2},k_{0.8}$ from these variational factors. We perform HMM inference to sample from $p(k_{0.2-0.8}|k_{0.2},k_{0.8})$, using that sample to initialize the block updating. 

The protocols above can be easily extended to the $0.0-0.8,$ $0.2-1.0$, and the full generation cases, noting that the models which can only condition generation forwards in time (DKS, SRNN, Transformer) must sample the initial $z$ from the generative model in the $0.0-0.8$ case instead of conditioning on an observation. The discrete random variable models initialize block updating for fully-generated sequences via draws from the prior $p(k)$. 

Note that in Fig.~\ref{fig:discrete} we show the discrete states are a discrete representation by initializing block updating with the ground truth discrete states as well as the first and last time steps (see grey background figures). We found the ``anchoring'' observations were necessary because the discrete states only encoder dynamics, as opposed to location in the $z$-space.

\subsection{Additional Results}
In Fig.~\ref{fig:missing.extended} we show an extended version of Fig.~\ref{fig:missing}, showcasing the performance of \emph{all} models on interpolating human motion capture data.

In Fig.~\ref{fig:toy_comp} we compare our model's interpolation ability to several baselines on synthetic data. The data is constructed by evaluated a Laplacian distribution on a 100-dimensional grid, where observations evolve over time by either increasing or decreasing the mean of the distribution, or increasing or decreasing the variance of the distribution. Note that this data was used to test the runtimes of various methods, see Table \ref{fig:time.table}. 

Fig.~\ref{fig:toy_comp} demonstrates the effect of the imputation method on the baseline models: for the RNN-based models, if we simply infill the missing observations with 0, the encoder-decoder structure simply reconstructs the 0s. Drawing the missing region from the model's prior distribution yields a generation closer to the data distribution but still infeasible under the true data dynamics. The prior SVAE implementation collapses to a single state, yielding interpolations that do not match any dynamics seen in the training data. 

\newpage


\newpage
\section{Notation}
In this section we provide a glossary of notation used in the main paper and the appendix.
\subsection{Notation in main paper}
\begin{center}
\begin{tabular}{||x{3cm} | p{9cm} ||} 
 \hline
  Notation& \multicolumn{1}{|c|}{Meaning} \\ [0.5ex] 
 \hline\hline
 $\texttt{BP}(.)$ & belief propagation function, which takes (natural) parameters of the graphical model and returns expected sufficient statistics.  \\
 \hline
 $\gamma$ & weights of the generative network (decoder)  \\
 \hline
 $\eta$ & natural parameters of $q(\theta;\eta)$ \\
 \hline
  $\tilde \eta$ & unconstrained parameterization of $\eta$ \\
\hline
$F_\eta$ & Fisher information matrix of exponential family distribution with natural parameters $\eta$  \\
 \hline
 $g(\omega;\eta,\phi,x)$ & equation which block updating attempts to solve for the root of  \\
 \hline
 $k$ & discrete local latent variable (subset of $z$) \\
 \hline
 $\hat{\ell}_\phi(z|x)$ & approximate likelihood function outputted by recognition network (encoder)  \\
 \hline
 $\mathcal{L}$ & the loss function, the \textbf{E}vidence \textbf{L}ower \textbf{B}ound \textbf{O}bjective (ELBO).\\
 \hline
 $\hat{\mathcal{L}}$ & Surrogate ELBO, obtained by replacing true likelihood function with a conjugate surrogate.\\ [1ex] 
 \hline
 $\texttt{MF}(.)$ & mean field inference update, where messages are passed along residual edges (those edges removed by the structured mean field approximation).  \\
 \hline
 $\mu$ & expected sufficient statistics of $q(z)$, i.e. $\mathbb{E}_{q(z)}[t(z)]$\\
 \hline
 $\mu_m$ & expected sufficient statistics of a particular mean field component $q(z_m)$. $\mu_{-m}$ is defined analogously to $\omega_{-m}$. \\
 \hline
 $\mu_\eta$ & expected sufficient statistics of $q(\theta)$, i.e. $\mathbb{E}_{q(\theta)}[t(\theta)]$\\
\hline
$\omega$ & natural parameter of $q(z)$ in any of its forms: $q(z;\eta)$ (local VI), $q_\phi(z|x;\eta)$ (SVAE VI) or $\prod_m q_\phi(z_m|x;\eta)$, (Structured Mean Field SVAE VI) \\ [1ex] 
 \hline
 $\omega_m$ & natural parameter of each mean field cluster $\prod_m q_\phi(z_m|x;\eta)$ in Structured Mean Field SVAE s.t. $\omega = \cup_m \omega_m$. $\omega_{-m}$ is defined analogously to $\mu_{-m}$.  \\ [1ex] 
 \hline
  $\omega_{-m}$ & natural parameter of all mean field clusters \emph{but} m, $\omega_{-m} = \cup_{n \neq m} \omega_n$.\\
 \hline
 $\omega^*(x, \eta,\phi)$ & function which runs block updating to return the optimal natural parameters of $q(z)$ with respect to surrogate loss funciton $\hat{\mathcal{L}}$ \\ [1ex] 
 \hline
 $p_0(z;\eta)$ & $p_0(z;\eta) \propto \exp\{\mathbb{E}_{q(\theta;\eta)}[\log p(z|\theta)]\}$\\ 
 \hline 
 $\phi$ & weights of recognition network (encoder) \\
 \hline
  $q(\theta;\eta), q(z;\omega)$ & variational factors approximating the posteriors of global ($\theta$) and local ($z$) latent variables.\\
 \hline
  $q_\phi(z|x), q_\phi(z|x;\eta)$ & variational factor for $z$ produced by amortized VI and SVAE VI respectively.\\
 \hline
  $q_\phi(z_m|x;\eta)$ & variational factor for subset $z_m \subseteq z$\\
 \hline
 $t(.)$ & sufficient statistics of an exponential family distribution. \\
 \hline
 $\theta$ & global latent variables defining graphical model factors \\
 \hline
$\mathcal{V}$ & vertex set of the graphical model \\
\hline
 $x$ & observation \\
 \hline
 $z$ & local (per-observation) latent variable \\
 \hline
 \hline
\end{tabular}
\end{center}
\vfill 

\subsection{Notation in this appendix}
\begin{center}
\begin{tabular}{||x{4cm} | p{9cm} ||} 
 \hline
 Notation& \multicolumn{1}{|c|}{Meaning}  \\ [0.5ex] 
 \hline \hline
 \textbf{Sec.~\ref{efamandmf}}  & \textbf{Exponential families and mean field objectives} \\ [0.5ex] 
\hline 
$KL$ & Kullback-Leibler divergence, defined in Eq.~\eqref{kl.def}\\
\hline
$\mathcal{F}$ & set of factors in the graphical model representation of $q(z)$, introduced in Eq.~\eqref{factors}.\\ [1ex]
\hline 
 $\lambda_{\phi}(x)$ & natural parameter of $\hat{\ell}_\phi(z|x)$, defined in Eq.~\eqref{lambda.def}.\\ [1ex] 
 \hline
 $r_t, R_t$ & for Gaussian temporal models such as the LDS and SLDS, $\lambda_{\phi}(x) = \{r_t,-1/2 R_t\}$ as defined in Eq.~\eqref{like}.\\ [1ex] 
 \hline
 $\theta_{\textnormal{trans}}$ & Matrix of LDS transition parameters for each of the $K$ switching states in the SLDS, defined in Eq.~\eqref{slds.factors}. \\ [1ex] 
 \hline \hline 
 \textbf{Sec.~\ref{bp}} & \textbf{Belief propagation in time series} \\ [0.5ex] 
 \hline
 $h_0, J_0$ & natural parameters of $p(z_1|\theta)$, defined in Eq.~\eqref{eq:init}.\\ [1ex] 
 \hline
 $\omega_t$ & natural parameters of the LDS (or the continuous half of the SLDS) transition at time step $t$, defined in Eq.~\eqref{eq:transition}. Also used in the HMM as the mean field message from the LDS at step $t$.\\ [1ex] 
 \hline
 $h_{1,t},J_{11,t},J_{12,t},J_{22,t},h_{2,t}$ & components of $\omega_t$, defined in Eq.~\eqref{eq:transition}.\\ 
 \hline
 $m_{z_i,\omega_i}(z_i)$ & belief propagation messages from variable $z_i$ to factor with parameter $\omega_i$, introduced in Eq.~\eqref{measurement}\\ 
 \hline
 $m_{\omega_i,z_i}(z_i)$ & belief propagation messages from factor with parameter $\omega_i$ to variable $z_i$, introduced in Eq.~\eqref{predict}\\ 
 \hline
 $f_{t|t}, F_{t|t}$ & parameters of $m_{z_t,\omega_{t+1}}(z_t)$, i.e. the \emph{filtered} messages for the Gaussian inference at step $t$ which has accrued evidence from the first $t$ ``observations"\\ 
 \hline
 $f_{t+1|t}, F_{t+1|t}$ & parameters of $m_{\omega_{t+1},z_{t+1}}(z_{t+1})$, i.e. the \emph{predicted} messages for the Gaussian inference at step $t+1$ which has accrued evidence from the first $t$ ``observations"\\ 
 \hline
 $f_{t|T}, F_{t|T}$ & parameters of $q(z_t)$, i.e. the \emph{smoothed} messages for the Gaussian inference at step $t$ which has accrued evidence from all ``observations"\\ 
 \hline
 $\textnormal{ker}(.)$ & returns an un-normalized distribution with no log partition function, defined in Eq.~\eqref{ker.def}\\ 
 \hline
  $f_t(z_{t-1},z_t), \phi_{.,t}, \Phi_{..,t}$ & sequence of functions (and their parameters) used by the filtering step of the parallel Kalman filter, defined in Eq.~\eqref{def.f}\\ 
 \hline
 $g_t(z_{t-1}), \gamma_{t}, \Gamma_{t}$ & sequence of functions (and their parameters) used by the filtering step of the parallel Kalman filter, defined in Eq.~\eqref{def.g}\\ 
 \hline
 $e_t(z_t,z_{t+1}), \epsilon_{t}, E_{t}$ & sequence of functions (and their parameters) used by the smoothing step of the parallel Kalman smoother, defined in Eq.~\eqref{eq:e.def}\\ 
 \hline
 $\otimes$ & associative operator for parallel Kalman filtering, defined in Eq.~\eqref{f.update} and \eqref{g.update}\\ 
 \hline
 $\oplus$ & associative operator for parallel Kalman smoothing, defined in Eq.~\eqref{eq:def.oplus}\\ 
 \hline 
\end{tabular}
\end{center}
\vfill 
\newpage

\section{Exponential families and mean field objective}
\label{efamandmf}
To rigorously define the SVAE inference routine, in Sec.~\ref{efam.props} we (re-)introduce exponential family and factor graph notation for defining distributions. Then, in Sec.~\ref{sec:optim} we derive the optimizer of $\hat{\mathcal{L}}[\dots]$ and the mean field message passing update $\texttt{MF}(.)$. Sec.~\ref{bp} then contains the necessary equations for belief propagation in the LDS and SLDS models, including proofs and derivations of our novel inference algorithm which parallelizes belief propagation across time steps.   
\subsection{Exponential families of distributions}
\label{efam.props}
Given an observation $z \in \mathcal{Z}$, an exponential family is a family of distributions characterized by a statistic function $t(z): \mathcal{Z} \rightarrow \mathbb{R}^n$ with probabilities given by 
\begin{equation}
p(z;\eta) = \exp\{\langle \eta, t(z) \rangle - \log Z(\eta) \}	
\end{equation}
Where $\eta \in \mathbb{R}^n$ is a parameter vector known as the \emph{natural parameters}, and the \emph{log partition function} $\log Z(\eta)$ is given by
\begin{equation}
\label{eq:logZ}
\log Z(\eta) = \log \int_z \exp \{\langle \eta,t(z)\rangle \} dx
\end{equation}
and ensures that the distribution properly normalizes (to simplify notation, here we focus on continuous $z$ but this definition can be easily extended to discrete observations by replacing the integral with a sum and and restricting the output of $t(z)$ to some discrete set). A single assignment of values to $\eta$ characterizes a member of this exponential family, and so searching over a family means searching over valid values of $\eta \in H$ where $H$ is the set of parameters for which the integral in Eq.~\eqref{eq:logZ} is finite. We begin with a couple simple properties of exponential families:
\begin{proposition} The gradient of a distribution's log partition function with respect to its parameters is its expected statistics:
\begin{equation}
\nabla_{\eta}\log Z(\eta)) = \mathbb{E}_{p(z;\eta)}[t(z)]
\end{equation}
\label{logZ.es}
\end{proposition}
\begin{proof}
\begin{align}
\nabla_\eta \log Z(\eta) &= 	\frac{1}{\int_z \exp\{\langle \eta,t(z)\rangle\} dx}\int_z t(z)\exp\{\langle\eta,t(z)\rangle\} dx\\
&= \int_z t(z) \exp\{\langle \eta,t(z)\rangle - \log Z(\eta)\} = \mathbb{E}_{p(z;\eta)}[t(z)]
\end{align}
\end{proof}
\begin{proposition}
Given two distributions $p(z;\eta_1), p(z;\eta_2)$ in the same exponential family with parameters $\eta_1$ and $\eta_2$, the KL divergence between the two distributions is given by
\begin{align}
\infdiv{p(z;\eta_1)}{p(z;\eta_2))} \defeq & \mathbb{E}_{p(z;\eta_1)}\Big[\log \frac{p(z;\eta_1)}{p(z;\eta_2)}\Big] \\
=&\langle \eta_1 - \eta_2, \mathbb{E}_{p(z;\eta_1)}[t(z)]\rangle - \log Z(\eta_1) + \log Z(\eta_2)  \label{kl.def}
\end{align}
\end{proposition}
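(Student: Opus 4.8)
The plan is to directly expand the log-ratio inside the expectation using the exponential-family form of the two densities, and then exploit linearity of expectation together with the fact that the log partition functions do not depend on $z$. No new machinery beyond the definition of the family is needed.

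First I would write out the logarithm of the density ratio. Since $p(z;\eta_i) = \exp\{\langle \eta_i, t(z)\rangle - \log Z(\eta_i)\}$ for $i \in \{1,2\}$, taking the log of the quotient causes the exponentials to telescope, leaving $\log \frac{p(z;\eta_1)}{p(z;\eta_2)} = \langle \eta_1 - \eta_2, t(z)\rangle - \log Z(\eta_1) + \log Z(\eta_2)$. This step uses only the log-linearity of the family and the bilinearity of the inner product; no integration is required yet.

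Second I would take the expectation $\mathbb{E}_{p(z;\eta_1)}[\cdot]$ of this expression. The two log partition terms are constants with respect to $z$, so they pass through the expectation unchanged. The remaining term is linear in $t(z)$, so by linearity of expectation it becomes $\langle \eta_1 - \eta_2, \mathbb{E}_{p(z;\eta_1)}[t(z)]\rangle$. Assembling the three contributions yields exactly the claimed identity in Eq.~\eqref{kl.def}.

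There is essentially no obstacle here; the only point that warrants care is justifying that the inner product commutes with the expectation, which follows because $\eta_1 - \eta_2$ is a fixed vector and the expectation acts coordinate-wise on $t(z)$. I would also emphasize \emph{why} the reduction is so clean: both distributions share the same statistic function $t$ and base measure, so the $-\log Z$ normalizers are the only non-statistic terms in the log-ratio. Were the two families different, the ratio would not collapse into an affine function of $t(z)$, and this simple closed form would fail.
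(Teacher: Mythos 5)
Your proof is correct and is essentially the same argument as the paper's, which simply notes that ``the log probabilities are linear in the statistics, so the expectation can be pulled in.'' You have just spelled out the details of that one-line argument: expand the log-ratio into an affine function of $t(z)$, then pass the expectation through by linearity, with the $\log Z$ terms surviving as constants.
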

\begin{proof}
The log probabilities are linear in the statistics, so the expectation can be pulled in.  
\end{proof}
\begin{definition} Given an exponential family distribution $p(\theta;\eta)$
\begin{equation}
	p(\theta;\eta) = \exp\{\langle \eta, t(\theta) \rangle - \log Z(\eta) \}
\end{equation}
	we say that $p(z|\theta)$ is \textbf{conjugate} to $p(\theta;\eta)$ if one distribution's parameters is the other distribution's statistics, namely
	\begin{equation}
p(z|\theta) = \exp\{\langle t(\theta), (t(z),1)\rangle - c \} = \exp\{\langle \hat{t}(\theta), t(z)\rangle - \log Z(\hat{t}(\theta))\}
\end{equation}
Where $\hat{t}(\theta)$ is the first $\textnormal{dim}(t(z))$ entries of $t(\theta)$ and $c$ is a constant.
\end{definition}
Note that $t(\theta)$ and $t(z)$ are not the same function with different inputs, but rather are unique to each distribution. As it is generally clear which function is needed, we share this notation, just as $p(\theta),p(z)$ is understood to refer to different densities. 

Further note that the sufficient statistics of $p(\theta)$ either become parameters of $p(z|\theta)$ or its normalizers. This can be notated two ways: either by appending 1s to the end of $t(z)$ to align with the normalizers, or simply fold them into the log partition function. An example is provided below.

\paragraph{Example: NIW distribution.}
A normal distribution $p(z;\mu,\Sigma)$ has natural parameters and sufficient statistics given by:
\begin{equation}
\eta = \begin{bmatrix}
           \Sigma^{-1}\mu\\
           -\frac{1}{2}\Sigma^{-1}
         \end{bmatrix} \qquad 
         t(x) = \begin{bmatrix}
           x\\
           x x^T
         \end{bmatrix}
\end{equation}
We bend notation here by concatenating vectors to matrices. Assume all matrices are appropriately flattened to make the inner product a scalar, for more info see Sec.~\ref{E.fam}. The normal distribution's log partition function is given by:
\begin{equation}
\log Z(\eta) = \frac{1}{2} \mu^T \Sigma^{-1} \mu + \frac{1}{2} \log |\Sigma| + \frac{n}{2} \log(2\pi)
\end{equation}
Meanwhile, the Normal Inverse Wishart (NIW) distribution has sufficient statistics:
\begin{equation}
t(\mu,\Sigma) = \begin{bmatrix}
           -\frac{1}{2}\Sigma^{-1}\\
           \Sigma^{-1}\mu\\
           -\frac{1}{2} \mu^T \Sigma^{-1}\mu\\
           -\frac{1}{2}\log |\Sigma|
         \end{bmatrix}
\end{equation}
Note that the first two are parameters of $p(z),$ and the last two appear in the log partition function.  A desirable property of conjugacy is that the posterior has a simple form, namely:
\begin{equation}
p(\theta|z) \propto \exp\{\langle t(\theta), \eta + (t(z),1) \rangle \}
\end{equation}
This follows easily from the fact that $p(\theta|z) \propto p(\theta)p(z|\theta)$ which are both log-linear in $t(\theta)$.
\paragraph{Factor Graphs.} The exponential family framework can be applied to complicated distributions over many variables, in which case it can be helpful to rephrase the distribution as a collection of factors on subsets of the variables. We represent the variable $z$ as a graphical model with vertex set $\mathcal{V}$ such that the vertices partition the components of $z$. Let the \emph{factor set} $\mathcal{F}$ be a set of subsets of $\mathcal{V}$ such that $z_f \subseteq z, f \in \mathcal{F}$. Then we write our distribution
\begin{equation}
p(z;\eta) = \exp\{\langle \eta, t(z) \rangle - \log Z(\eta) \} = \exp\Big\{\sum_{f \in \mathcal{F}} \langle \eta_f, t(z_f)\rangle - \log Z(\eta)\Big\}	
\label{factors}
\end{equation}
These two ways of writing the distribution are trivially equal by defining $\eta$ as the union of all $\eta_f$ and combining $t(z_f)$ appropriately. We demonstrate where this notation can be useful in two examples. 
\subsection{Mean field Optima} 
\label{sec:optim}
\begin{proposition}
\label{optim}
Given a fixed distribution $q(a)$ and a loss function of the form 
\begin{equation}
l[q(a)q(b)] = \mathbb{E}_{q(a)q(b)}\bigg[\log \frac{f(a,b)}{q(a)q(b)}\bigg]
\end{equation}
for some non-negative, continuous function $f(a,b)$ with finite integral (such as an un-normalized density), the partially optimal $q(b)$ across all possible distributions is given by

	\begin{equation}
\argmax_{q(b)} l[q(a)q(b)]\propto \exp\{\mathbb{E}_{q(a)}[\log f(a,b)]\}
\end{equation}
\end{proposition}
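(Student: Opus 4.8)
The plan is to recognize this as the standard mean-field coordinate-update computation and to reduce the maximization over $q(b)$ to the minimization of a Kullback--Leibler divergence, whose unique minimizer is the claimed distribution. First I would expand the objective by linearity of expectation and the product structure of $q(a)q(b)$, writing
\[
l[q(a)q(b)] = \mathbb{E}_{q(a)q(b)}[\log f(a,b)] - \mathbb{E}_{q(a)}[\log q(a)] - \mathbb{E}_{q(b)}[\log q(b)].
\]
Because $q(a)$ is held fixed, the entropy term $-\mathbb{E}_{q(a)}[\log q(a)]$ is a constant and may be discarded, leaving only the two terms depending on $q(b)$.

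Next I would introduce the unnormalized density $g(b) \defeq \exp\{\mathbb{E}_{q(a)}[\log f(a,b)]\}$, so that $\log g(b) = \mathbb{E}_{q(a)}[\log f(a,b)]$ and the first term above becomes $\mathbb{E}_{q(b)}[\log g(b)]$ after exchanging the order of the two expectations. The $q(b)$-dependent part of the objective is then
\[
\mathbb{E}_{q(b)}[\log g(b)] - \mathbb{E}_{q(b)}[\log q(b)] = \mathbb{E}_{q(b)}\big[\log (g(b)/q(b))\big].
\]
Writing $Z \defeq \int g(b)\,db$ and $\tilde q(b) \defeq g(b)/Z$, this equals $\log Z - \infdiv{q(b)}{\tilde q(b)}$. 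Since $\log Z$ does not depend on $q(b)$ and the KL divergence is nonnegative and vanishes exactly when its two arguments coincide, the objective is maximized uniquely at $q(b) = \tilde q(b) \propto g(b)$, which is the assertion.

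The hard part will be justifying that $\tilde q$ is a genuine probability density, i.e.\ that $Z = \int \exp\{\mathbb{E}_{q(a)}[\log f(a,b)]\}\,db$ is finite and strictly positive; this is precisely where the hypotheses on $f$ (nonnegative, continuous, finite integral) enter. For finiteness I would apply Jensen's inequality to the convex exponential, giving the pointwise bound $g(b) \le \mathbb{E}_{q(a)}[f(a,b)]$, then integrate over $b$ and use nonnegativity of $f$ to exchange the $b$-integral with the $q(a)$-expectation (Tonelli), bounding $Z$ by $\mathbb{E}_{q(a)}\big[\int f(a,b)\,db\big]$, which is finite by assumption; positivity and measurability follow from continuity of $f$. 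A secondary point worth confirming is that the maximization is unconstrained over all densities, so that the unrestricted KL minimizer is admissible — which holds here since no parametric restriction on $q(b)$ was imposed.
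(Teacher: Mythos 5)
Your proof is correct and follows essentially the same route as the paper's: drop the terms constant in $q(b)$, fold $\mathbb{E}_{q(a)}[\log f(a,b)]$ into an unnormalized density, and recognize the remaining objective as a constant minus $\infdiv{q(b)}{\tilde q(b)}$, which is maximized exactly when the KL vanishes. The only difference is that you additionally verify normalizability of $\tilde q$ via Jensen's inequality and Tonelli's theorem, a point the paper's proof implicitly assumes; this is a welcome extra but not a change of method.
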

\begin{proof}
	This is a simple extension of a proof in \citet{svae}. Let $\tilde p(b)$ be the distribution which satisfies $\tilde p(b) \propto \exp\{\mathbb{E}_{q(a)}[\log f(a,b)]\}$. We can drop terms in the loss which are constant with respect to $q(b)$ and write:
	\begin{align}
		\argmax_{q(b)} \mathbb{E}_{q(a)q(b)}\bigg[\log \frac{f(a,b)}{q(b)}\bigg] &= \argmax_{q(b)} \mathbb{E}_{q(b)}[\mathbb{E}_{q(a)}[\log f(a,b)] - \log q(b)]\\
		&= \argmax_{q(b)} \mathbb{E}_{q(b)}[\log \exp\mathbb{E}_{q(a)}[\log f(a,b)] - \log q(b)]\\
		&= \argmax_{q(b)} \mathbb{E}_{q(b)}\bigg[\log \frac{\exp\{\mathbb{E}_{q(a)}[\log f(a,b)]\}}{q(b)}\bigg]\\
		&= \argmax_{q(b)} \mathbb{E}_{q(b)}\bigg[\log \frac{\tilde p(b)}{q(b)} + \textnormal{const}\bigg]\\
		&= \argmax_{q(b)} -\infdiv{q(b)}{\tilde p(b)} = \tilde p(b)
			\end{align}
\end{proof}
This is applicable to ELBO maximization problems, as well as the surrogate maximization problem arising in the SVAE:
\paragraph{Example: Exact Inference in the SVAE}
Applying proposition~\ref{optim} to the SVAE's surrogate objective, we see:
\begin{align}
\argmax_{q(z)} \hat{\mathcal{L}}[q(\theta;\eta)q(z);\phi] &=\mathbb{E}_{q(\theta;\eta)q(z)}\bigg[\log \frac{p(\theta)p(z|\theta)\hat \ell_\phi(z|x)}{q(\theta;\eta)q(z)}\bigg]\\
&\propto \exp\{\mathbb{E}_{q(\theta;\eta)}[\log p(\theta)\cdot p(z|\theta)\cdot \hat{\ell}_\phi(z|x)]\}\\
&\propto \exp\{\mathbb{E}_{q(\theta;\eta)}[\log p(\theta)] + \mathbb{E}_{q(\theta;\eta)}[\log p(z|\theta)] + \log\hat{\ell}_\phi(z|x)\}\\
&\propto \exp\{\mathbb{E}_{q(\theta;\eta)}[\log p(z|\theta)] + \log \hat{\ell}_\phi(z|x)\}
\end{align}
Recall that we have chosen $\hat \ell_\phi(z|x)$ to be \emph{a conjugate likelihood function} to $p(z|\theta)$. Let $\lambda_\phi(x)$ be the outputs of the recognition network such that 
\begin{equation}
\hat{\ell}_\phi(z|x) = \exp\{\langle \lambda_\phi(x), t(z)\rangle\}
\label{lambda.def}
\end{equation}
Then 
\begin{align}
\argmax_{q(z)} \hat{\mathcal{L}}[q(\theta;\eta)q(z);\phi] &\propto \exp\{\mathbb{E}_{q(\theta;\eta)}[\log p(z|\theta)] + \log \hat{\ell}_\phi(z|x)\}\\
&\propto \exp \{\mathbb{E}_{q(\theta;\eta)}[\langle t(\theta), t(z)\rangle] + \langle \lambda_\phi(x), t(z)\rangle \}\\
&= \exp \{\langle \mathbb{E}_{q(\theta;\eta)}[ t(\theta)] + \lambda_\phi(x), t(z)\rangle\}
\label{prior.plus.recog}
\end{align}
Demonstrating that the optimal $q(z)$ is in the same exponential family as $p(z|\theta)$, with parameters $\omega = \mathbb{E}_{q(\theta;\eta)}[t(\theta)] + \lambda_\phi(x)$. These parameters are the sum of \emph{expected} parameters from the prior and the parameters of the fake observations. 

\paragraph{Example: Linear Dynamical System.} Consider the linear dynamical system, where $p(z_1) = \mathcal{N}(\mu_0, \Sigma_0)$ and $p(z_t|z_{t-1}) = \mathcal{N}(A_tz_{t-1} + b_t, \Sigma_t)$ (in practice, we use a time-homogenous model in which $A,b,\Sigma$ do not depend on the time step $t$).  We can write this distribution
\begin{equation}
p(z|\theta) = p(z_1)\prod_{t=2}^Tp(z_t|z_{t-1}) = \exp\Big\{\langle t(\theta_1),t(z_1)\rangle +\sum_{t=2}^T \langle t(\theta_{t}), t(z_{t-1},z_{t})\rangle - \log Z(t(\theta))\Big\}	
\end{equation}
Where $t(z_1), t(\theta_1)$ are given by the NIW example in Sec.~\ref{efam.props}, and for $t = 2,\dots,T$,
\begin{equation}
t(z_{t-1}, z_{t})=
\begin{bmatrix}
	z_{t-1}\\
	z_{t-1}z_{t-1}^T\\
	z_{t-1}z_{t}^T\\
	z_{t}z_{t}^T\\
	z_{t}
\end{bmatrix}, \;\;\;
t(\theta_t) = \begin{bmatrix}
 -A_t^TQ_t^{-1}b_t\\
-\frac{1}{2}A_t^TQ_t^{-1}A_t\\
A_t^TQ_t^{-1}\\
-\frac{1}{2}Q_t^{-1}\\
Q_t^{-1}b_t
\end{bmatrix}
\label{trans.params}
\end{equation}
Note that both $t(\theta_t)$ and $t(\theta_{t+1})$ introduce parameters corresponding to the statistics $z_t$. In the context of the SVAE, we add observations to each time step,

\begin{equation}
\hat{\ell}_\phi(z|x) = \exp\bigg\{\sum_{t}\bigg\langle \begin{bmatrix}
 r_t\\
 -\frac{1}{2}R_t
 \end{bmatrix},
 \begin{bmatrix}
 z_t\\
 z_tz_t^T	
 \end{bmatrix}
\bigg\rangle\bigg\}	
\label{like}
\end{equation}
which is conjugate to the likelihood $p(z|\theta)$ so that the mean field optimum stays in the same exponential family.
\paragraph{Structured mean field.}
The main paper introduces the concept of dividing the latent variables into separate structured mean field components $z_m \subset z, m \in \mathcal{M}$. We outline the math in this case by noting that conjugacy allows us to break down statistics on groups of vertices into statistics on individual vertices.
\begin{equation}
\langle t(\theta),t(z_{t-1}, z_{t})\rangle = \langle t(\theta, z_{t-1}),t(z_{t})\rangle = \langle t(\theta, z_{t}),t(z_{t-1})\rangle
\label{reframe}
\end{equation}
i.e. this expression is linear in the statistics of \emph{each} argument. This proves important in deriving the optimal factor $q(z_m)$ given the other factors $q(z_{-m})$. By applying prop~\ref{optim} to the mean field surrogate objective with $q(a) = q(\theta)\prod_{-m}q(z_m)$ and $q(b) = q(z_m)$, we see:
\begin{equation}
\argmax_{q(z_m)} \hat{\mathcal{L}}[q(\theta;\eta)\prod_m q(z_m);\phi] \propto \exp\{\mathbb{E}_{q(\theta;\eta)q(z_{-m})}[\log p(z|\theta)] + \log \hat{\ell}_\phi(z|x)\}
\end{equation}
The expected-prior part of this expression can be rewritten (where $z_{f,m} = z_f \cap z_m$ and $z_{f,-m}= z_f \cap z_m^C$):
\begin{align}
	\mathbb{E}_{q(\theta;\eta)q(z_{-m})}[\log p(z|\theta)] &\propto \mathbb{E}_{q(\theta;\eta)q(z_{-m})} \Big[\sum_{f \in \mathcal{F}}\langle t(\theta_f), t(z_f)\rangle\Big]\\
	&=\mathbb{E}_{q(\theta;\eta)q(z_{-m})} \Big[\sum_{f \in \mathcal{F}} \langle t(\theta_f, z_{f,-m}),t(z_{f,m})\rangle \Big]\\
	&= \sum_{f \in \mathcal{F}} \langle \mathbb{E}_{q(\theta;\eta)q(z_{-m})}[t(\theta_f, z_{f,-m})], t(z_{f,m})\rangle \label{smf.opt}
\end{align}
Note that because the joint sufficient statistics of $z_{f,-m}, \theta_f$ are linear in each input, we can distribute the expectations through the statistic function and write
\begin{equation}
	\mathbb{E}_{q(\theta;\eta)q(z_{-m})}[t(\theta_f, z_{f,-m})] = \texttt{MF}(\mathbb{E}_{q(\theta;\eta)}[t(\theta_f)], \mathbb{E}_{q(z_{-m})}[t(z_{f,-m})])
\end{equation}
Where $\texttt{MF}(.)$ is a mean field function which takes the statistics of $\theta_f, z_{f,-m}$ and produces the joint statistics $t(\theta_f,z_{f,-m})$. This function is linear in each of its inputs and generally cheap to compute.

\paragraph{Example: Switching Linear Dynamical System.} The SLDS consists of the following generative model: $p(k_2) = \textnormal{Cat}(\pi_0)$, $p(k_t) = \textnormal{Cat}(\pi_{k_{t-1}})$, $p(z_1) = \mathcal{N}(\mu_0, \Sigma_0)$, $p(k_2) = \textnormal{Cat}(\pi_0)$ and $p(z_t|z_{t-1}) = \mathcal{N}(A_{k_t}z_{t-1} + b_{k_t}, \Sigma_{k_t})$.

In an exponential family form, the model consists of the following factors: (i) an initial-state factor on $z_1$ matching the NIW example in Eqs. (10-12), (ii) an initial categorical factor on $k_2$ (see Sec.~\ref{E.fam} for the form), (iii) a transition factor for $p(k_t|k_{t-1})$ which has parameters $\pi_{ij}$ and sufficient statistics $1\{(k_t = i) \& (k_{t+1} = j)\}$ for $i,j = 1, \dots, K$, and (iv) transition factors corresponding to $p(z_t|z_{t-1},k_t)$, which we will examine further.

Recall from Eq.~\eqref{trans.params} the expression for the parameters of $p(z_t|z_{t-1})$ in the LDS. Then we can write:
\begin{equation}
p(z_t|z_{t-1},k_t) = \exp\{\langle t(\theta_{k_t}), t(z_{t-1},z_t)\rangle - \log Z(t(\theta_{k_t}))\} 
\end{equation}
We need to rewrite this to be an exponential family distribution of $k_t$ as well (instead of only appearing as an index). Note that the sufficient statistics of a categorical variable is simply its one-hot encoding. If we define $t(\theta_{\textnormal{trans}})$ to be a matrix whose $k$th row is given by $t(\theta_k)$, and $\log Z(\theta_{\textnormal{trans}})$ to be a vector with components $\log Z(\theta_k)$, we write:
\begin{equation}
p(z_t|z_{t-1},k_t) = \exp\{\langle t(\theta_{\textnormal{trans}}), t(k_t) \cdot t(z_{t-1},z_t)^T\rangle + \langle -\log Z(t(\theta_{\textnormal{trans}})), t(k_t)\rangle\} 
\label{slds.factors}
\end{equation}
Thus the conditional distribution can be described by two factors, one which takes the outer product of the one-hot encoding $t(k_t)$ with the continuous observations (thus embedding them in the correct dimension for inner product with $t(\theta)$), and one which simply takes the correct log normalizer based on what state we're in (again, by multiplying a one-hot encoding with a vector of log probabilities).  

Given observations, exact inference in the SLDS is intractable. Thus we divide the graphical model via structured mean field inference into $q(z)q(k)$. This separates out the joint factors $\langle t(\theta_{\textnormal{trans}}), t(k_t)\cdot t(z_{t-1},z_t)^T\rangle$. However, as noted in Eq.~\eqref{reframe} we can reframe this as a factor on each mean field cluster: 
\begin{align}
\exp\{\langle t(\theta_{\textnormal{trans}}), t(k_t)\cdot t(z_{t-1},z_{t})^T\rangle\} = 
	& \exp\{\langle t(z_{t-1},z_t), t(\theta_{\textnormal{trans}})^T \cdot t(k_t)\rangle\}\\
	=& \exp\{\langle t(k_t), t(\theta_{\textnormal{trans}}) \cdot t(z_{t-1},z_t)\rangle\} \label{z.to.k}
\end{align}
Thus, applying Eq.~\eqref{smf.opt}, the optimal $q(k)$ inherits the (expected) initial-state factors from $p(k_2)$ and transition factors from $p(k_t|k_{t-1})$, as well as the \emph{normalizer} factors $\langle -\log Z(t(\theta_{\textnormal{trans}})), t(k_t)\rangle$ from Eq.~\eqref{slds.factors}, but replaces the joint factors of Eq.~\eqref{slds.factors} which include $z_{t-1},z_t$ with
\begin{equation}
	\langle \mathbb{E}_{q(\theta;\eta)}[t(\theta_{\textnormal{trans}})] \cdot \mathbb{E}_{q(z)}[t(z_{t-1},z_t)], t(k_t)\rangle
\end{equation}
Which (when combined with the normalizer factor) is simply a vector of expected log probabilities that the transition at time $t$ is caused by each state. Similarly, the continuous chain keeps its initial state factor corresponding to $p(z_1)$ but all of its transition factors are replaced by
\begin{equation}
	\langle \mathbb{E}_{q(\theta;\eta)}[t(\theta_{\textnormal{trans}})]^T \cdot \mathbb{E}_{q(k_t)}[t(k_t)], t(z_{t-1},z_t)\rangle
\end{equation}
Because $t(k_t)$ is a one-hot encoding of $k_t$, $\mathbb{E}_{q(k_t)}[t(k_t)]$ is simply a probability vector of what value $k_t$ takes. Thus the new parameter for $t(z_{t-1},z_t)$ is simply a convex combination of the rows of $t(\theta_{\textnormal{trans}})$, yielding a time-inhomogeneous LDS with each transition computed as an expectation over discrete states. Our mean field optimization of the SLDS becomes:
\begin{itemize}
	\item Run a belief propagation algorithm to get $\mathbb{E}_{q(z)}[t(z)]$
	\item For each $t$, compute $\mathbb{E}_{q(\theta)}[t(\theta_{\textnormal{trans}})] \cdot \mathbb{E}_{q(z)}[t(z_{t-1},z_t)]$ as the new parameters in $q(k)$ corresponding to $t(k_t)$
	\item Run a belief propagation algorithm to get $\mathbb{E}_{q(k)}[t(k)]$
	\item For each $t$, compute $\mathbb{E}_{q(\theta)}[t(\theta_{\textnormal{trans}})]^T \cdot \mathbb{E}_{q(k)}[t(k_t)]$ as the new parameters in $q(z)$ corresponding to $t(z_{t-1},z_t)$
\end{itemize}
$\dots$ and repeat until convergence.



\section{Belief propagation in time series}
\label{bp}
\subsection{Sequential LDS Inference}
Consider again the generative model
\begin{equation}
z_1 \sim \mathcal{N}(\mu_0, \Sigma_0), \;\;\; z_{t} \sim \mathcal{N}(A_tz_{t-1} + b_t,Q_t)	
\end{equation}
Recall from Eq.~\eqref{prior.plus.recog} that the $q(z;\omega)$ will have some parameters from the expected log prior and some parameters from the recognition potential. We separate out the parameters into individual factors, starting with a prior factor on the initial state:
\begin{equation}
\omega_0= \begin{bmatrix}
h_0\\
-\frac{1}{2}J_0
\end{bmatrix}\;\;
t(z_1)=
\begin{bmatrix}
	z_1\\
	z_1z_1^T
\end{bmatrix}\;\;
 \label{eq:init}
\end{equation}
Where $h_0,-\frac{1}{2}J_0$ are the expected statistics of an NIW prior. Further, we place a transition factor on each adjacent pair of states, given by Eq.~\eqref{trans.params}. To simplify notation, we write
\begin{equation}
t(z_t,z_{t+1})=
\begin{bmatrix}
	z_t\\
	z_tz_t^T\\
	z_tz_{t+1}^T\\
	z_{t+1}z_{t+1}^T\\
	z_{t+1}
\end{bmatrix}, \;\;\;
\omega_{t+1} = \begin{bmatrix}
-h_{1,t}\\
-\frac{1}{2}J_{11,t}\\
J_{12,t}\\
-\frac{1}{2}J_{22,t}\\
h_{2,t}
\end{bmatrix} = \begin{bmatrix}
-\mathbb{E}[A_t^TQ_t^{-1}b_t]\\
-\frac{1}{2}\mathbb{E}[A_t^TQ_t^{-1}A_t]\\
\mathbb{E}[A_t^TQ_t^{-1}]\\
-\frac{1}{2}\mathbb{E}[Q_t^{-1}]\\
\mathbb{E}[Q_t^{-1}b_t]
\end{bmatrix}
\label{eq:transition}
\end{equation}
For the SLDS, each of these parameters is outputted by the mean field message passing function and is a convex combination of the corresponding parameters for each cluster, weighted by $\mathbb{E}_{q(k_t)}[t(k_t)]$. For the LDS, these values are time-homogeneous. 

Note that there is also a factor outputted by the recognition network output at each time step:
\begin{equation}
\lambda_t= \begin{bmatrix}
r_t\\
-\frac{1}{2}R_t
\end{bmatrix}\;\;
t(z_t)=
\begin{bmatrix}
	z_t\\
	z_tz_t^T
\end{bmatrix}\;\;
\end{equation}
Because the parameters are expectations, there may be no values of $\hat{A}_t, \hat{Q}_t, \hat{b}_t$ which satisfy 
\begin{equation}
\begin{bmatrix}
-\hat{A}_t^T\hat{Q}_t^{-1}\hat{b}_t\\
-\frac{1}{2}\hat{A}_t^T\hat{Q}_t^{-1}\hat{A}_t\\
\hat{A}_t^T\hat{Q}_t^{-1}\\
-\frac{1}{2}\hat{Q}_t^{-1}\\
\hat{Q}_t^{-1}\hat{b}_t
\end{bmatrix} = \begin{bmatrix}
-h_{1,t}\\
-\frac{1}{2}J_{11,t}\\
J_{12,t}\\
-\frac{1}{2}J_{22,t}\\
h_{2,t}
\end{bmatrix} 
\end{equation}
And as a result, it might not be equivalent to any single LDS with fixed parameters. This prevents us from using general Kalman smoother algorithms, so we must return to the definition of exponential families and belief propagation.

The goal of inference is to compute the marginals $q(z_t)$, draw samples, and compute $\log Z$. Using the belief propagation algorithm, we know $q(z_t)$ is proportional to the product of incoming \emph{messages} from factors which involve $z_t$. In belief propagation, messages pass from \emph{factors} to \emph{variables} and back. Thus we write
\begin{equation}
q(z_t) \propto m_{\omega_{t},z_t}(z_t) \cdot m_{\omega_{t+1},z_t}(z_t) \cdot m_{\lambda_{t},z_t}(z_t) 
\end{equation}
Where $m_{\omega,z}(z)$ is the message from the factor with parameter $\omega$ to variable $z$. We similarly define $m_{z,\omega}(z)$ to be the message from variable to factor. Note that all $m_{.,.}(z_t)$ are provably Gaussian, and can therefore each message function can be minimally described by the Gaussian natural parameters.

Messages from variables to factors are the product of incoming messages, e.g.
\begin{equation}
m_{z_t,\omega_{t+1}}(z_t) = m_{\omega_t,z_t}(z_t)\cdot m_{\lambda_t,z_t}(z_t)
\label{measurement}
\end{equation}
While messages from factors to variables are computed by integrating over incoming messages times the factor potential, e.g.
\begin{equation}
m_{\omega_t,z_t}(z_t) = \int_{z_{t-1}} m_{z_{t-1},\omega_t}(z_{t-1}) \cdot \exp\{\langle \omega_t,t(z_{t-1},z_t)\rangle\} d{z_{t-1}}
\label{predict}
\end{equation}
We attempt to match our notation to the typical Kalman smoother algorithm, by defining \emph{filtered messages} $J_{t|t}, h_{t|t}$ as the natural parameters of $m_{z_t,\omega_{t+1}}(z_t)$, i.e. the distribution which has accumulated the first $\omega_{1:t}$ and $\lambda_{1:t}$, which correspond to the first $t$ ``observations". We similarly define \emph{predicted messages} $J_{t+1|f},h_{t+1|f}$ as the natural parameters of $m_{\omega_{t+1},z_{t+1}}(z_{t+1})$, which corresponds to a distribution over $z_{t+1}$ given the first $t$ ``observations". Finally we describe the smoothed messages $J_{t|T}, h_{t|T}$ to be the natural parameters of $q(z_t)$ (note that for simplicity the natural parameters for each distribution are actually $-\frac{1}{2}J_{.|.},h_{.|.}$ and the negative one half is assumed to be factored out).

The message passing algorithm is outlined below:\\
\textbf{Input:} $\omega_t$, $\lambda_t$.\\
\textbf{Initialize}:
\begin{align}
	F_{1|1} &= J_0 + R_1\\
	f_{1|1} &= h_0 + r_1\\
	\log Z &= 0
\end{align}
\textbf{Filter:} for $t=1, \dots, T-1$\\
Predict:
\begin{align}
P_t &= F_{t|t} + J_{11,t}\\
F_{t+1|t} &= J_{22,t} - J_{12,t}^T \cdot P_t^{-1} \cdot J_{12,t}\\
f_{t+1|t} &= h_{2,t} + J_{12,t}^T \cdot P_t^{-1} \cdot (f_{t|t}-h_{1,t})\\
\log Z &= \log Z + \frac{1}{2}\Big((f_{t|t}-h_{1,t})^TP_t^{-1}(f_{t|t}-h_{1,t}) - \log |P_t|\Big)
\end{align}
Measurement:
\begin{align}
F_{t+1|t+1} &= F_{t+1|t} + R_{t+1}\\
f_{t+1|t+1} &= f_{t+1|t} + r_{t+1}
\end{align}
\textbf{Finalize:}
\begin{equation}
	\log Z = \log Z + \frac{1}{2}\Big(f_{T|T}^TF_{T|T}^{-1}F_{T|T}-\log|F_{T|T}|\Big)
\end{equation}
\textbf{Smoother}: for $t=T-1, \dots, 1$ 
\begin{align}
C_t &= F_{t+1|T} - F_{t+1|t} + J_{22,t}\\
F_{t|T} &= F_{t|t} + J_{11,t} - J_{12,t} \cdot C_t^{-1} \cdot J_{12,t}^T\\
f_{t|T} &= f_{t|t} - h_{1,t} + J_{12,t}C_t^{-1}(f_{t+1|T}-f_{t+1|t} + h_{2,t})
\end{align}
\textbf{Sample:} (the $\mathcal{N}$ below are taking natural parameters as input)
\begin{equation}
z_T \sim \mathcal{N}\Big(f_{T|T}, -\frac{1}{2}F_{T|T}\Big)
\end{equation}
\begin{equation}
\texttt{for } t=T-1,\dots,1 \;\;\;\;\;z_t \sim \mathcal{N}\Big(f_{t|t} + J_{12,t}z_{t+1} - h_{1,t}, -\frac{1}{2}(F_{t|t} + J_{11,t})\Big)
\end{equation}
\textbf{Return:} marginal params $f_{t|T},F_{t|T}$; $\log Z$; and samples $x_{1:T}$.

From the log marginals, we can recover the expected statistics:
\begin{align}
\Sigma_i &= F_{t|T}^{-1}\\
\mu_i &= \Sigma_i f_{t|T}\\
E[z_i] &= \mu_i\\
E[z_iz_i^T] &= \Sigma_i + \mu_i\mu_i^T\\
E[z_iz_{i+1}^T] &= \Sigma_i J_{12,t}C_t^{-1} + \mu_i\mu_{i+1}^T
\end{align}
\textbf{Derivation of filter:} The measurement step trivially follows from Eq.~\eqref{measurement}. 
The predict step, on the other hand, attempts to compute Eq.~\eqref{predict}, which can be written out:
\begin{equation}
\int_{z_t} \exp\bigg\{\bigg\langle\begin{bmatrix}
 	f_{t|t}\\
 	-\frac{1}{2}F_{t|t}
 \end{bmatrix},
 \begin{bmatrix}
 z_t\\
 z_{t}z_t^T	
 \end{bmatrix}\bigg\rangle + 
\bigg\langle\begin{bmatrix}
	-\frac{1}{2}J_{11,t}\\
 	J_{12,t}\\
	-\frac{1}{2}J_{22,t}\\
	-h_{1,t}\\
	h_{2,t}
 \end{bmatrix},
 \begin{bmatrix}
 z_{t}z_t^T\\
 z_tz_{t+1}^T\\
 z_{t+1}z_{t+1}^T\\
 z_{t}\\
 z_{t+1}
 \end{bmatrix}\bigg\rangle  \bigg\}dz_t
\end{equation}
\begin{equation}
=\exp\bigg\{\Big\langle \begin{bmatrix}-\frac{1}{2}J_{22,t}\\h_{2,t}\end{bmatrix},\begin{bmatrix}z_{t+1}z_{t+1}^T\\z_{t+1}\end{bmatrix}\Big\rangle\bigg\}
\int_{z_t} \exp\bigg\{\bigg\langle\begin{bmatrix}
 	f_{t|t} + J_{12,t}z_{t+1} - h_{1,t}\\
 	-\frac{1}{2}(F_{t|t}+J_{11,t})
 \end{bmatrix},
 \begin{bmatrix}
 z_t\\
 z_{t}z_t^T	
 \end{bmatrix}\bigg\rangle  \bigg\}dz_t
\end{equation}
\begin{equation}
=\exp\bigg\{\Big\langle \begin{bmatrix}-\frac{1}{2}J_{22,t}\\h_{2,t}\end{bmatrix},\begin{bmatrix}z_{t+1}z_{t+1}^T\\z_{t+1}\end{bmatrix}\Big\rangle+\log Z(f_{t|t} - h_{1,t}+ J_{12,t}z_{t+1},F_{t|t}+J_{11,t})\bigg\}
\end{equation}
What is that log partition function term at the end?
\begin{equation}
\frac{1}{2}(f_{t|t} - h_{1,t} + J_{12,t}z_{t+1})^T(F_{t|t}+J_{11,t})^{-1}(f_{t|t} - h_{1,t} + J_{12,t}z_{t+1}) - \frac{1}{2}\log |F_{t|t}+J_{11,t}|
\end{equation}
Simplifying $P_t = F_{t|t}+J_{11,t}$, we recover normalization term
\begin{equation}
\frac{1}{2}(f_{t|t} - h_{1,t})^TP_t^{-1}(f_{t|t}-h_{1,t}) - \frac{1}{2}\log |P_t|
\end{equation}
Linear term
\begin{equation}
z_{t+1}^TJ_{12,t}^TP_t^{-1}(f_{t|t}-h_{1,t})
\end{equation}
And quadratic term
\begin{equation}
\frac{1}{2}z_{t+1}^TJ_{12,t}^TP_t^{-1}J_{12,t}z_{t+1}
\end{equation}
Thus the entire message is:
\begin{equation}
\exp\bigg\{\bigg\langle\begin{bmatrix}
 	h_{2,t} + J_{12,t}^TP_t^{-1}(f_{t|t}-h_{1,t})\\
 	-\frac{1}{2}(J_{22,t}-J_{12,t}^TP_t^{-1}J_{12,t})
 \end{bmatrix},
 \begin{bmatrix}
 z_{t+1}\\
 z_{t+1}z_{t+1}^T	
 \end{bmatrix}\bigg\rangle  + \frac{1}{2}(f_{t|t}-h_{1,t})^TP_t^{-1}(f_{t|t}-h_{1,t}) - \frac{1}{2}\log |P_t|\bigg\}
\end{equation}
To calculate $\log Z$, instead of normalizing each message to be a valid distribution, we can think of normalizing each message so it's of the form $\exp\{\langle \eta,t(z)\rangle\}$, i.e. remove all normalizers. Then to calculate the entire model's $\log Z$, we must accrue all normalizers that appear in predict steps (none appear in measurement steps) and simply add the normalizer for the last message.\\
\textbf{Derivation of Smoother:}
Note that
\begin{equation}
q(z_t) \propto m_{\omega_{t+1},z_t}(z_t) \cdot m_{z_t,\omega_{t+1}}(z_t)	
\end{equation}
The second term is the filtered message computed in the filtering step. Thus we must simply compute the backwards messages. 
The message becomes:
\begin{align}
q(z_t) &\propto m_{z_t,\omega_{t+1}}(z_t)\cdot m_{\omega_{t+1},z_t}(z_t)\\
&\propto m_{z_t,\omega_{t+1}}(z_t)\cdot \int m_{z_{t+1},\omega_{t+1}}(z_{t+1}) \exp\{\langle\omega_{t+1},t(z_{t},z_{t+1})\rangle\} dz_{t+1}\\
&= m_{z_t,\omega_{t+1}}(z_t)\cdot \int (m_{\omega_{t+2},z_{t+1}}(z_{t+1}) \cdot m_{\lambda_{t+1},z_{t+1}}(z_{t+1}))\cdot  \exp\{\langle\omega_{t+1},t(z_{t},z_{t+1})\rangle\} dz_{t+1}\\
&\propto m_{z_t,\omega_{t+1}}(z_t)\cdot \int \frac{q(z_{t+1})}{m_{\omega_{t+1},z_{t+1}}(z_{t+1})}\cdot  \exp\{\langle\omega_{t+1},t(z_{t},z_{t+1})\rangle\} dz_{t+1}\label{eq:filteredback}
\end{align}
Now let's evaluate the integral:
\begin{equation}
\int \exp\bigg\{\bigg\langle\begin{bmatrix}
	-\frac{1}{2}J_{11,t}\\
 	J_{12,t}\\
	-\frac{1}{2}J_{22,t}\\
	-h_{1,t}\\
	h_{2,t}
 \end{bmatrix},
 \begin{bmatrix}
 z_{t}z_t^T\\
 z_tz_{t+1}^T\\
 z_{t+1}z_{t+1}^T\\
 z_t\\
 z_{t+1}
 \end{bmatrix}\bigg\rangle +\bigg\langle\begin{bmatrix}
 	f_{t+1|T} - f_{t+1|t}\\
 	-\frac{1}{2}(F_{t+1|T}-F_{t+1|t})
 \end{bmatrix},
 \begin{bmatrix}
 z_{t+1}\\
 z_{t+1}z_{t+1}^T	
 \end{bmatrix}\bigg\rangle \bigg\}dz_{t+1}	
\end{equation}
\begin{equation}
=\exp\bigg\{\Big\langle \begin{bmatrix}-\frac{1}{2}J_{11,t}\\-h_{1,t}\end{bmatrix},\begin{bmatrix}z_{t}z_{t}^T\\z_t\end{bmatrix}\Big\rangle\bigg\}
\int_{z_{t+1}} \exp\bigg\{\bigg\langle\begin{bmatrix}
 	f_{t+1|T} - f_{t+1|t} + h_{2,t} + J_{12,t}^Tz_{t}\\
 	-\frac{1}{2}(F_{t+1|T}-F_{t+1|t}+J_{22,t})
 \end{bmatrix},
 \begin{bmatrix}
 z_{t+1}\\
 z_{t+1}z_{t+1}^T	
 \end{bmatrix}\bigg\rangle  \bigg\}dz_{t+1}
\end{equation}
\begin{equation}
=\exp\bigg\{\Big\langle \begin{bmatrix}-\frac{1}{2}J_{11,t}\\-h_{1,t}\end{bmatrix},\begin{bmatrix}z_{t}z_{t}^T\\z_t\end{bmatrix}\Big\rangle + \log Z(f_{t+1|T} - f_{t+1|t} + h_{2,t}+J_{12,t}^Tz_{t},F_{t+1|T}-F_{t+1|t}+J_{22,t})\bigg\} \label{eq:j11}
\end{equation}
Again, let's evaluate the log partition function at the end, simplifying $C_t = F_{t+1|T} - F_{t+1|t} + J_{22,t}$:
\begin{equation}
\frac{1}{2}(f_{t+1|T} - f_{t+1|t} + h_{2,t}+J_{12,t}^Tz_{t})^TC_t^{-1}(f_{t+1|T} - f_{t+1|t} + h_{2,t}+J_{12,t}^Tz_{t}) - \frac{1}{2}\log |C_t|
\end{equation}
We recover linear term
\begin{equation}
z_{t}^TJ_{12,t}C_t^{-1}(f_{t+1|T}-f_{t+1|t} + h_{2,t})
\end{equation}
And quadratic term
\begin{equation}
\frac{1}{2}z_{t}^TJ_{12,t}C_t^{-1}J_{12,t}^Tz_{t}
\end{equation}
Combining these with the $J_{11,t}$ and $h_{1,t}$ in Eq. \eqref{eq:j11} and the filtered distributions outside the integral in Eq.~\eqref{eq:filteredback}, we arrive at the update equations above.\\
\textbf{Derivation of Sampler:} Moving backwards, we want to sample from\\
\begin{equation}
q(z_t|z_{t+1}) \propto m_{z_t,\omega_{t+1}}(z_t) \cdot \exp\{\langle\omega_{t+1},t(z_{t},z_{t+1})\rangle\}
\end{equation}
The rest of the derivation is easy.
\subsection{Parallel LDS Inference}
\label{app:parallel_bp}
This adaptation of the Kalman smoother algorithm is highly sequential and therefore inefficient on GPU architectures. We adapt the technique of \citet{sarkka2020temporal} to parallelize our algorithm across time steps. 

Let us define the \emph{kernel} of a (possibly un-normalized) exponential family distribution as the inner product portion of the distribution:
\begin{equation}
\textnormal{ker}(\exp\{\langle \eta, t(z)\rangle - c\}) = \exp\{\langle \eta,t(z)\rangle\}	
\label{ker.def}
\end{equation}
Note that $\textnormal{ker}(f(z)) \propto f(z)$. We define two sequences of functions:
\begin{align}
g_t(z_{t-1}) &= \textnormal{ker}\Big(\int_{z_{t}}\exp \{\langle t(\theta_t),t(z_{t-1},z_t)\rangle\} \cdot \exp\{\langle\lambda_t,t(z_t)\rangle\} dz_{t}\Big)\label{def.g}\\
&= \exp\bigg\{\bigg\langle\begin{bmatrix}
-\frac{1}{2} (J_{11,t} - J_{12,t}(R_t + J_{22,t})^{-1}J_{12,t}^T)\\
-h_{1,t} + J_{12,t}(R_t + J_{22,t})^{-1}(r_t + h_2)
\end{bmatrix},
\begin{bmatrix}
z_{t-1}z_{t-1}^T\\
z_{t-1}
\end{bmatrix}
\bigg\rangle\bigg\}\\
&= \exp\bigg\{\bigg\langle\begin{bmatrix}
-\frac{1}{2} \Gamma_t\\
\gamma_t
\end{bmatrix},
\begin{bmatrix}
z_{t-1}z_{t-1}^T\\
z_{t-1}
\end{bmatrix}
\bigg\rangle\bigg\}
\end{align}
Thus $g_t(z_{t-1})$ is defined by natural parameters $\Gamma_t,\gamma_t$. For $t=1$, we set $\Gamma_1 =0$, $\gamma_1 = 0$ as there is no time-step 0. Thus we claim $g_1(z_0) = 1$ is a uniform function.

Further define $f_t(z_{t-1},z_t)$
\begin{align}
f_t(z_{t-1}, z_t) &= \exp \{\langle t(\theta_t),t(z_{t-1},z_t)\rangle\} \cdot \exp\{\langle\lambda_t,t(z_t)\rangle\}/g_t(z_{t-1}) \label{def.f}\\
&=\exp\bigg\{\bigg\langle\begin{bmatrix}
-\frac{1}{2} (J_{11,t} - \Gamma_t)\\
-(h_{1,t} - \gamma_t)\\
J_{12,t}\\
h_{2,t} + r_t\\
-\frac{1}{2}(J_{22,t} + R_t)
\end{bmatrix}, 
\begin{bmatrix}
z_{t-1} z_{t-1}^T\\
z_{t-1}\\
z_{t-1}z_t^T\\
z_{t}\\
z_t z_t^T
\end{bmatrix}
\bigg\rangle\bigg\}	\\
&=\exp\bigg\{\bigg\langle\begin{bmatrix}
-\frac{1}{2} \Phi_{11,t}\\
-\phi_{1,t}\\
\Phi_{12,t}\\
\phi_{2,t}\\
-\frac{1}{2}\Phi_{22,t}
\end{bmatrix}, 
\begin{bmatrix}
z_{t-1} z_{t-1}^T\\
z_{t-1}\\
z_{t-1}z_t^T\\
z_{t}\\
z_t z_t^T
\end{bmatrix}
\bigg\rangle\bigg\}
\end{align}
We similarly define $f_1(z_0,z_1) = f_1(z_1)$, or equivalently $\Phi_{11,1}=\Phi_{12,1}=0$ and $\phi_{1,1} = 0$ as there is no $z_{0}$. $\phi_{2,t}=h_0+r_1$ and $\Phi_{22,t}=J_0+R_1$ to include the factors $h_0,J_0$ from the initial state distribution $p(z_1)$.

We define the associative operation:
\begin{equation}
(f_i,g_i) \otimes (f_j,g_j) = (f_{ij},g_{ij}	)
\end{equation}
Where
\begin{equation}
f_{ij}(a,c) = \textnormal{ker}\bigg(\frac{\int g_j(b) f_j(b,c) f_i(a,b) db}{\int g_j(b) f_i(a,b) db}\bigg)
\label{f.update}
\end{equation}
And
\begin{equation}
g_{ij}(a) = \textnormal{ker}\Big(g_i(a) \int g_j(b) f_i(a,b) db	\Big)
\label{g.update}
\end{equation}
\begin{proposition} \label{associative}
	The operation $\otimes$ is associative.
\end{proposition}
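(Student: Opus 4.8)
The plan is to avoid a head-on expansion of both triple products and instead show that $\otimes$ is a disguised version of an obviously associative operation: the composition (Chapman--Kolmogorov convolution) of pairwise potentials. The key observation is that each pair $(f_i,g_i)$ secretly encodes a single bivariate potential. Concretely, set $\psi_i(a,b)\defeq f_i(a,b)\,g_i(a)$; conversely, from a potential $\psi$ one recovers a pair by $g(a)=\textnormal{ker}(\int \psi(a,b)\,db)$ and $f(a,b)=\textnormal{ker}(\psi(a,b)/\int\psi(a,b')\,db')$. First I would verify that these two maps are mutually inverse up to the proportionality that $\textnormal{ker}$ quotients out: the $a$-dependent Gaussian normalizer produced when marginalizing $b$ lands inside the joint $(a,b)$-kernel of $f$, so that $f\cdot g\propto\textnormal{ker}(\psi)$, and, because the $f_i$ produced in the construction are by definition normalized conditionals (so $\int f(a,b)\,db$ is constant in $a$), the round trip returns the same canonical representatives. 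This establishes a bijection between $(f,g)$-pairs and potentials on the class arising in our recursion.

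Second I would show that $\otimes$ is conjugate, under this bijection, to potential composition $(\psi_i\star\psi_j)(a,c)\defeq\int\psi_i(a,b)\,\psi_j(b,c)\,db$. Using $g_j(b)f_j(b,c)\propto\psi_j(b,c)$ and $g_i(a)f_i(a,b)\propto\psi_i(a,b)$, a one-line manipulation gives $f_{ij}(a,c)\,g_{ij}(a)\propto g_i(a)\int g_j(b)f_j(b,c)f_i(a,b)\,db=\int\psi_i(a,b)\psi_j(b,c)\,db$. I would then check the two factors separately against the canonical pair of $\psi_{ij}\defeq\psi_i\star\psi_j$: the $a$-only normalizer $g_i(a)$ cancels between the numerator and denominator of $f_{ij}$, leaving $f_{ij}=\textnormal{ker}(\psi_{ij}/\int\psi_{ij}(a,c')\,dc')$, while $g_{ij}=\textnormal{ker}(\int\!\int\psi_i\psi_j)=\textnormal{ker}(\int\psi_{ij}(a,c)\,dc)$, exactly the canonical $f$ and $g$ of $\psi_{ij}$.

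Finally, associativity of $\star$ is immediate from Fubini: $\int(\int\psi_i(a,b)\psi_j(b,d)\,db)\psi_k(d,e)\,dd=\int\psi_i(a,b)(\int\psi_j(b,d)\psi_k(d,e)\,dd)\,db$, since all the potentials are Gaussian kernels, the iterated integrals converge, and the order of integration may be interchanged. Transporting this equality back through the bijection yields $((f_i,g_i)\otimes(f_j,g_j))\otimes(f_k,g_k)=(f_i,g_i)\otimes((f_j,g_j)\otimes(f_k,g_k))$, which is the claim.

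I expect the main obstacle to be the bookkeeping of constants under $\textnormal{ker}$: one must carefully distinguish globally constant normalizers (which $\textnormal{ker}$ discards) from the $a$-dependent factors such as $1/g_i(a)$ that marginalization generates and that must provably cancel in the ratio defining $f_{ij}$. A secondary, easily dispatched technical point is that, under the standing assumption that the relevant precision matrices are positive definite, every Gaussian integral in sight is finite, so Fubini applies. A purely mechanical alternative---expanding $f_{(ij)k},g_{(ij)k}$ and $f_{i(jk)},g_{i(jk)}$ directly in natural parameters and matching the quadratic, linear, and constant terms---would also work but is far more error-prone, which is precisely why the potential-composition reformulation is worth setting up first.
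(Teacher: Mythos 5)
Your proof is correct, but it takes a genuinely different route from the paper's. The paper does not prove associativity from scratch: it imports associativity of the un-normalized operator $\times$ from \citet{sarkka2020temporal}, and then argues only that post-composing with $\textnormal{ker}$ is harmless, because rescaling the inputs of $\times$ by constants merely rescales its outputs by constants, so $\otimes = \textnormal{ker}\circ\times$ inherits associativity from $\times$. You instead give a self-contained argument: identify each pair $(f_i,g_i)$ with the bivariate potential $\psi_i(a,b)=f_i(a,b)\,g_i(a)$, check that under this identification $\otimes$ becomes Chapman--Kolmogorov composition $(\psi_i\star\psi_j)(a,c)=\int\psi_i(a,b)\psi_j(b,c)\,db$ (up to the constants that $\textnormal{ker}$ quotients out), and invoke Fubini. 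What the paper's route buys is brevity and reliance on a published result; what yours buys is an elementary, self-contained proof that also explains \emph{why} the operation is associative---it is potential composition in disguise---rather than treating that fact as imported. Your computations check out: the denominator of $f_{ij}$ cancels against the integral inside $g_{ij}$, giving $f_{ij}g_{ij}\propto\psi_i\star\psi_j$, and the recovered factors match the canonical pair of $\psi_i\star\psi_j$.

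One point you treat too lightly: the injectivity of your correspondence requires $\int f(a,b)\,db$ to be constant in $a$, and you assert this holds ``by definition'' for the $f$'s in play. It does hold by construction for the base pairs ($f_t$ is the joint divided by the kernel of its $z_t$-marginal), but in a three-fold product the \emph{output} of one $\otimes$ reappears as a factor of the next, so you also need the property to be closed under $\otimes$---that is an inductive step, not a definition. Fortunately it is nearly automatic in your own framework: your intertwining result shows the output of $\otimes$ is the canonical pair of $\psi_i\star\psi_j$, and any canonical $f$-part $\textnormal{ker}\big(\psi/\int\psi(a,c')\,dc'\big)$ integrates over $c$ to a constant. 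Stating that closure explicitly would make the proof complete.
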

\begin{proof}
\citet{sarkka2020temporal} define an operator
\begin{equation}
(f_i,g_i) \times (f_j,g_j) = (\hat f_{ij},\hat g_{ij}	)
\end{equation}
Where
\begin{equation}
f_{ij}(a,c) = \frac{\int g_j(b) f_j(b,c) f_i(a,b) db}{\int g_j(b) f_i(a,b) db}
\end{equation}
And
\begin{equation}
g_{ij}(a) = g_i(a) \int g_j(b) f_i(b|a) db
\end{equation}
and prove that this operator is associative. Further, any rescaling of inputs leads to a simple rescaling of outputs due to the multi-linear nature of these functions. Thus for constants $c_1,c_2,c_3,c_4$ there exist constants $c_5,c_6$ such that
\begin{equation}
(c_1 f_i,c_2 g_i) \times (c_3 f_j,c_4 g_j) = (c_5\hat f_{ij},c_6\hat g_{ij}	)	
\end{equation}
This gives us that
\begin{align}
((f_i,g_i) \otimes (f_j,g_j)) \otimes (f_k, g_k) &= (\ker(\hat f_{ij}), \ker(\hat g_{ij})) \otimes (f_k,g_k)\\
&= (c_1\hat f_{ij}, c_2\hat g_{ij}) \otimes (f_k,g_k)\\
&= \ker\Big((c_1\hat f_{ij}, c_2\hat g_{ij}) \times (f_k,g_k)\Big)\\
&= \ker\Big((\hat f_{ij}, \hat g_{ij}) \times (f_k,g_k)\Big)
\end{align}
A similar decomposition shows that
\begin{align}
	(f_i,g_i) \otimes ((f_j,g_j) \otimes (f_k, g_k)) &= \ker\Big((f_i,g_i) \times (\hat f_{jk}, \hat g_{jk})\Big)\\
	&= \ker\Big((\hat f_{ij}, \hat g_{ij}) \times (f_k,g_k)\Big)\\
	&= ((f_i,g_i) \otimes ((f_j,g_j)) \otimes (f_k, g_k)
\end{align}
Where the second equality comes from the associativity of the $\times$ operator.
\end{proof}
Finally, we need to show that this operation gives us the right results:
\begin{proposition}
Let $a_i = (f_i,g_i)$. Then 
\begin{equation}
	a_1 \otimes a_2 \otimes \dots \otimes a_t = (f_{1:t}, g_{1:t})
\end{equation}	
where
\begin{equation}
f_{1:t}(z_t) = \exp\bigg\{\bigg\langle\begin{bmatrix}
f_{t|t}\\
-\frac{1}{2}F_{t|t}
\end{bmatrix}, 
\begin{bmatrix}
z_{t}\\
z_t z_t^T
\end{bmatrix}
\bigg\rangle\bigg\}	
\end{equation}
is parameterized by the filtered messages $f_{t|t}, F_{t|t}$. 
\end{proposition}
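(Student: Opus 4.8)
The plan is to prove the claim by induction on $t$, using the associativity of $\otimes$ (Proposition~\ref{associative}) to compute the left-fold as $(a_1 \otimes \cdots \otimes a_{t-1}) \otimes a_t$. The structural fact that makes the induction go through is that every prefix $f_{1:s}$ is a function of its \emph{right endpoint} $z_s$ alone. This holds for the base element by the stated convention $\Phi_{11,1}=\Phi_{12,1}=\phi_{1,1}=0$ in the definition following Eq.~\eqref{def.f}, and I would verify it is preserved under $\otimes$. A convenient observation is that the $f$-component of the update~\eqref{f.update} depends only on $f_i=f_{1:t-1}$ together with $f_j=f_t$ and $g_j=g_t$, and \emph{not} on $g_{1:t-1}$, so the induction on the claimed $f$-component is self-contained. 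The inductive hypothesis I carry is therefore that $f_{1:t-1}(z_{t-1})$ equals the kernel with natural parameters $(f_{t-1|t-1}, -\tfrac{1}{2} F_{t-1|t-1})$, i.e.\ exactly the filtered message at step $t-1$.

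For the base case $t=1$, I would read off from the parameters of $f_1$ that $\phi_{2,1}=h_0+r_1$ and $\Phi_{22,1}=J_0+R_1$, which match the filter initialization $f_{1|1}=h_0+r_1$ and $F_{1|1}=J_0+R_1$. For the inductive step I would substitute $f_i = f_{1:t-1}$, $f_j = f_t$, $g_j = g_t$ into~\eqref{f.update}. Since $f_{1:t-1}$ does not depend on the leftmost variable, its $a$-argument drops out; moreover the denominator $\int g_t(z_{t-1})\, f_{1:t-1}(z_{t-1})\,dz_{t-1}$ is independent of $z_t$ and is therefore discarded by $\textnormal{ker}(\cdot)$. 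Thus~\eqref{f.update} collapses to $f_{1:t}(z_t) \propto \int g_t(z_{t-1})\, f_t(z_{t-1},z_t)\, f_{1:t-1}(z_{t-1})\, dz_{t-1}$, which is visibly a function of $z_t$ alone, preserving the structural invariant.

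The crux is evaluating this integral. Here I would invoke the definition~\eqref{def.f}: since $f_t = \exp\{\langle t(\theta_t),t(z_{t-1},z_t)\rangle\}\,\exp\{\langle \lambda_t, t(z_t)\rangle\}/g_t$, the product $g_t(z_{t-1})\,f_t(z_{t-1},z_t)$ telescopes back to the full transition-times-measurement potential $\exp\{\langle t(\theta_t),t(z_{t-1},z_t)\rangle + \langle \lambda_t, t(z_t)\rangle\}$ — the auxiliary factor $g_t$, introduced only to make the scan associative, cancels exactly. Pulling the $z_t$-only measurement factor $\exp\{\langle \lambda_t, t(z_t)\rangle\}$ outside the integral, the remainder $\int \exp\{\langle t(\theta_t),t(z_{t-1},z_t)\rangle\}\,f_{1:t-1}(z_{t-1})\,dz_{t-1}$ is exactly the predict step~\eqref{predict} applied to the filtered message $(f_{t-1|t-1},F_{t-1|t-1})$, which by the sequential filter derivation produces the predicted message $(f_{t|t-1},F_{t|t-1})$. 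Reinstating the measurement factor $\exp\{\langle \lambda_t, t(z_t)\rangle\} = \exp\{\langle (r_t,-\tfrac{1}{2} R_t),t(z_t)\rangle\}$ is the measurement step~\eqref{measurement}, which adds $r_t$ and $R_t$ to yield precisely $(f_{t|t},F_{t|t})$, closing the induction.

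The main obstacle I anticipate is the bookkeeping at the univariate/bivariate boundary: one must argue carefully that the prefix functions genuinely lose dependence on all but their right endpoint, so that the general two-variable rule~\eqref{f.update} degenerates into an ordinary one-step filtering integral, and that the $\textnormal{ker}(\cdot)$ normalization legitimately discards the $z_t$-independent denominator. Once this reduction is established, no new analytic work is required: the Gaussian predict-and-measure computation is already carried out in full in the sequential filter derivation, so the proof reduces to recognizing that $\otimes$ exactly reproduces one step of that recursion.
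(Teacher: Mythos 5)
Your proof is correct and follows essentially the same route as the paper's: induction on the left-justified fold (with associativity covering arbitrary bracketings), the structural invariant that prefix kernels lose dependence on their left argument, the telescoping cancellation of $g_t$ against the numerator of Eq.~\eqref{f.update} while $\textnormal{ker}(\cdot)$ discards the $z_t$-independent denominator, and the identification of the resulting integral with the sequential predict step~\eqref{predict} followed by the measurement step~\eqref{measurement}. The only minor streamlining is your observation that the $f$-component recursion never consults $g_{1:t-1}$, which lets you bypass the paper's companion induction showing the cumulative $g$ kernels remain independent of $z_0$.
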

\begin{proof}
We prove by induction that this holds when the operations are performed sequentially, namely
\begin{equation}
(((((a_1 \otimes a_2) \otimes a_3) \otimes \dots) \otimes a_t
\end{equation}
These ``left-justified" operations are simpler, and thus allow us to easily demonstrate correctness. We then can make use of the associative property of $\otimes$ to show that no matter the order of operations we get the correct result.

Recall that $g_1(z_0) = 1$ is the uniform function (as there is no $z_0$) and thus has $\Gamma_1=0, \gamma_1=0$. By induction, this must be true of \emph{all} the cumulative kernels $g_{ij}(z_0)$, as if $g_i(a)$ in Eq.~\eqref{g.update} is independent of $a$, then $g_{ij}(a)$ must be, too. Similarly, we recall that $f_{1}(z_0,z_1) = f_1(z_1)$ is independent of its first argument and thus has $\Phi_{11,1} = \Phi_{12,1}=0$ and $\phi_{1,1}=0$. Again, by induction \emph{all} of the cumulative kernels $f_{1:t}(z_{0},z_t)$ must have their first three parameters equal to 0; in Eq.~\eqref{f.update} if $f_i(a,b)$ is independent of $a$ then $f_{ij}(a,c)$ will be, too. Thus $f_{1:t}(z_0,z_t) = f_{1:t}(z_t)$.

Next we show that $f_{1:t}(z_t) = m_{z_t,\omega}(z_t)$, the filtered messages from the sequential filter. For $t=1$, $f_{1:1}(z_1) = m_{z_1,\omega}(z_1)$ by construction. We then see: 

\begin{align}
f_{1:t+1}(z_{t+1}) &= \ker \bigg(\frac{\int_{z_t} g_{t+1}(z_t) f_{t+1}(z_{t+1}) m_{z_t,\omega}(z_t) dz_t}{\int g_{t+1}(z_t) f_{1:t}(z_t) dz_t} \bigg)\\
&= \ker \bigg(\int_{z_t} g_{t+1}(z_t) f_{t+1}(z_{t+1}) m_{z_t,\omega}(z_t) dz_t \bigg)\\
&= \ker \bigg(\int_{z_t}\exp \{\langle \omega_{t+1},t(z_{t},z_{t+1})\rangle\} \cdot \exp\{\langle\lambda_{t+1},t(z_{t+1})\rangle\} \cdot  m_{z_t,\omega}(z_t) dz_t \bigg)\\
&= \ker \bigg(\exp\{\langle\lambda_{t+1},t(z_{t+1})\rangle\} \cdot \int_{z_t}\exp \{\langle t(\theta_{t+1}),t(z_{t},z_{t+1})\rangle\} \cdot  m_{z_t,\omega}(z_t) dz_t \bigg)\\
&= \ker \bigg(\exp\{\langle\lambda_{t+1},t(z_{t+1})\rangle\} \cdot m_{z_{t+1},\omega}(z_{t+1}) \bigg) = m_{z_{t+1},\omega_{t+1}}(z_{t+1})
\end{align}
Completing our proof!
\end{proof}
This algorithm produces correct filtered messages but due to its associativity can be computed in parallel. To complete the filtering algorithm, we outline how to evaluate $(f_i,f_j) \otimes (g_i,g_j)$. The integrals in Eqs.~\eqref{f.update}, \eqref{g.update} can be evaluated as simple functions of the parameters of $f_i,f_j,g_i,g_j$:
\begin{proposition}
Let 
\begin{equation}
f_i(a,b) = 
    \exp\bigg\{\bigg\langle\begin{bmatrix}
-\frac{1}{2} \Phi_{11,i}\\
-\phi_{1,i}\\
\Phi_{12,i}\\
\phi_{2,i}\\
-\frac{1}{2}\Phi_{22,i}
\end{bmatrix}, 
\begin{bmatrix}
a a^T\\
a\\
ab^T\\
b\\
bb^T
\end{bmatrix}
\bigg\rangle\bigg\}, \quad \quad f_j(b,c) = 
    \exp\bigg\{\bigg\langle\begin{bmatrix}
-\frac{1}{2} \Phi_{11,j}\\
-\phi_{1,j}\\
\Phi_{12,j}\\
\phi_{2,j}\\
-\frac{1}{2}\Phi_{22,j}
\end{bmatrix}, 
\begin{bmatrix}
b b^T\\
b\\
bc^T\\
c\\
cc^T
\end{bmatrix}
\bigg\rangle\bigg\}
\end{equation}
and 
\begin{equation}
g_i(a) = \exp\bigg\{\bigg\langle\begin{bmatrix}
-\frac{1}{2} \Gamma_i\\
\gamma_i
\end{bmatrix},
\begin{bmatrix}
a a^T\\
a
\end{bmatrix}
\bigg\rangle\bigg\}, \quad \quad g_j(b) = \exp\bigg\{\bigg\langle\begin{bmatrix}
-\frac{1}{2} \Gamma_j\\
\gamma_j
\end{bmatrix},
\begin{bmatrix}
b b^T\\
b
\end{bmatrix}
\bigg\rangle\bigg\}
\end{equation}
Then given the following definitions 
\begin{align}
C_{ij} &= \Gamma_j + \Phi_{22,i}\\
\Gamma_{ij} &= (\Phi_{11,i} - \Phi_{12,i}C_{ij}^{-1}\Phi_{12,i}^T)\\
\gamma_{ij} &= -\phi_{1,i} + \Phi_{12,i}C_{ij}^{-1}(\gamma_j + \phi_{2,i})\\
P_{ij} &=  \Gamma_j + \Phi_{22,i} + \Phi_{11,i}
\end{align}
We can compute $(f_{ij}, g_{ij}) = (f_i, g_i) \otimes (f_j, g_j)$ as:
\begin{equation}
f_{ij}(a,c) = 
    \exp\bigg\{\bigg\langle\begin{bmatrix}
-\frac{1}{2}(\Phi_{11,i} - \Phi_{12,i}P_{ij}^{-1}\Phi_{12,i}^T - \Gamma_{ij})\\
- (\phi_{1,i} - \Phi_{12,i}P_{ij}^{-1}(\phi_{2,i} - \phi_{1,j} + \gamma_j) + \gamma_{ij})\\
\Phi_{12,i}P_{ij}^{-1}\Phi_{12,j}\\
\phi_{2,j} + \Phi_{12,j}^TP_{ij}^{-1}(\phi_{2,i} - \phi_{1,j} + \gamma_j)\\
-\frac{1}{2}(\Phi_{22,j} - \Phi_{12,j}^TP_{ij}^{-1}\Phi_{12,j})
\end{bmatrix}, 
\begin{bmatrix}
a a^T\\
a\\
ac^T\\
c\\
cc^T
\end{bmatrix}
\bigg\rangle\bigg\}
\end{equation}
and 
\begin{equation}
g_{ij}(c) = \exp\bigg\{\bigg\langle\begin{bmatrix}
-\frac{1}{2} (\Gamma_i + \Gamma_{ij})\\
\gamma_i + \gamma_{ij}
\end{bmatrix},
\begin{bmatrix}
c c^T\\
c
\end{bmatrix}
\bigg\rangle\bigg\}
\end{equation}
\end{proposition}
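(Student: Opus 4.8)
Both functions to be computed are defined through the operator in Eqs.~\eqref{f.update} and~\eqref{g.update} as Gaussian integrals over the intermediate variable $b$: in each case the integrand is the exponential of a quadratic form in $b$ with linear and bilinear coupling to the ``external'' variables $a$ (and, for the numerator of $f_{ij}$, also $c$). The plan is to evaluate each integral by completing the square in $b$, exactly as in the filter/smoother derivations above, using the Gaussian log-partition identity $\int \exp\{-\tfrac12 b^\top M b + v^\top b\}\,db \propto \exp\{\tfrac12 v^\top M^{-1} v\}\,|M|^{-1/2}$, and then to apply $\ker(\cdot)$ to discard every factor independent of the retained variables. Since $\ker$ strips all normalizers, I never need to track the $-\tfrac12\log|M|$ terms nor the $(a,c)$-independent part of $\tfrac12 v^\top M^{-1} v$, which removes most of the bookkeeping.

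\textbf{The $g_{ij}$ update.} First I would evaluate $\int g_j(b)f_i(a,b)\,db$, which is exactly the integral in Eq.~\eqref{g.update}. Collecting coefficients of $b$ in the exponent, the quadratic-in-$b$ term is $-\tfrac12 b^\top(\Gamma_j+\Phi_{22,i})b$, giving precision $C_{ij}=\Gamma_j+\Phi_{22,i}$, while the linear-in-$b$ coefficient is the affine function $v(a)=\gamma_j+\phi_{2,i}+\Phi_{12,i}^\top a$. Substituting into the Gaussian formula and expanding $\tfrac12 v(a)^\top C_{ij}^{-1}v(a)$, and adding the surviving $f_i$ contributions $-\tfrac12\Phi_{11,i}$ and $-\phi_{1,i}$ in $a$, produces an $a$-kernel whose $aa^\top$ and $a$ parameters are precisely $-\tfrac12\Gamma_{ij}$ and $\gamma_{ij}$ as defined in the statement. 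Multiplying by $g_i(a)$ adds $-\tfrac12\Gamma_i$ and $\gamma_i$, yielding the claimed parameters $-\tfrac12(\Gamma_i+\Gamma_{ij})$ and $\gamma_i+\gamma_{ij}$ for $g_{ij}$.

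\textbf{The $f_{ij}$ update.} The numerator $\int g_j(b)f_j(b,c)f_i(a,b)\,db$ is handled identically, except that $f_j$ now contributes an additional $-\tfrac12 b^\top\Phi_{11,j}b$, so the $b$-precision becomes $P_{ij}=\Gamma_j+\Phi_{22,i}+\Phi_{11,j}$, and the linear-in-$b$ coefficient is $v(a,c)=w+\Phi_{12,i}^\top a+\Phi_{12,j}c$ with $w=\phi_{2,i}-\phi_{1,j}+\gamma_j$. Expanding $\tfrac12 v(a,c)^\top P_{ij}^{-1}v(a,c)$ and sorting by the statistics $aa^\top,\,a,\,ac^\top,\,c,\,cc^\top$ (together with the leftover $f_i$ terms in $a$ and $f_j$ terms $\phi_{2,j},-\tfrac12\Phi_{22,j}$ in $c$) gives the numerator kernel. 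Division is subtraction in log-space, and the denominator depends only on $a$, so I subtract its $-\tfrac12\Gamma_{ij}$ and $\gamma_{ij}$ from the $aa^\top$ and $a$ entries while the $ac^\top,\,c,\,cc^\top$ entries pass through unchanged; reading off each block then matches the five stated parameters of $f_{ij}$.

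\textbf{Main obstacle.} The only genuine work is the block-structured expansion of $\tfrac12 v^\top P_{ij}^{-1}v$ with $v$ affine in \emph{both} $a$ and $c$: one must attribute the cross term $a^\top\Phi_{12,i}P_{ij}^{-1}\Phi_{12,j}c$ to the $ac^\top$ statistic and split the remainder among the linear-in-$a$, linear-in-$c$, and pure quadratic pieces without sign errors. The one structural subtlety is that the numerator and denominator use \emph{different} $b$-precisions ($P_{ij}$ versus $C_{ij}$), because only the numerator integrates against $f_j$'s $bb^\top$ factor; care is therefore needed to take the extra quadratic coefficient in $P_{ij}$ from $\Phi_{11,j}$ (the $bb^\top$ parameter of $f_j$) rather than from any parameter of $f_i$.
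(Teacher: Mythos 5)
Your proposal is correct, and it follows exactly the route the paper intends: the paper gives no explicit proof (it states only that the result ``involves integrating the kernels of Gaussians'' and leaves it as an exercise), and your completion-of-squares computation of the two $b$-integrals, with $\ker(\cdot)$ used to discard all $(a,c)$-independent factors, is precisely that argument carried out. Your identification of the denominator precision $C_{ij}=\Gamma_j+\Phi_{22,i}$, the linear coefficient $w=\phi_{2,i}-\phi_{1,j}+\gamma_j$, and the block-wise expansion of $\tfrac12 v^\top P_{ij}^{-1}v$ reproduce every stated parameter of $f_{ij}$ and $g_{ij}$.

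One point deserves emphasis: your derivation gives $P_{ij}=\Gamma_j+\Phi_{22,i}+\Phi_{11,j}$, whereas the proposition as printed defines $P_{ij}=\Gamma_j+\Phi_{22,i}+\Phi_{11,i}$. You are right and the printed statement has a subscript typo: the extra quadratic-in-$b$ term in the numerator comes from $f_j$'s $bb^\top$ parameter $\Phi_{11,j}$ (the numerator integrates $g_j(b)f_j(b,c)f_i(a,b)$, and $b$ is the \emph{first} argument of $f_j$ but the \emph{second} argument of $f_i$). Indeed, the remaining entries of the stated $f_{ij}$ (e.g.\ the cross term $\Phi_{12,i}P_{ij}^{-1}\Phi_{12,j}$ and the linear terms built from $\phi_{2,i}-\phi_{1,j}+\gamma_j$) are only consistent with the $\Phi_{11,j}$ version, so your flagged ``structural subtlety'' is exactly the correction the statement needs. (The same holds for the cosmetic slip of writing $g_{ij}$ as a function of $c$ rather than $a$.)
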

The proof of this involves integrating the kernels of Gaussians, which is done many times in this supplement; it is left as an exercise for the reader.

The last task is to define the parallel smoother. The construction is very similar to the filter. Let

\begin{align}
e_t(z_t,z_{t+1}) &=  \frac{m_{z_t,\omega_{t+1}}(z_t)}{m_{\omega_{t+1},z_{t+1}}(z_{t+1})}\cdot  \exp\{\langle\omega_{t+1},t(z_{t},z_{t+1})\rangle\}\label{eq:e.def} \\
&= \exp\bigg\{\bigg\langle\begin{bmatrix}
-\frac{1}{2} (J_{11,t} + F_{t|t})\\
-(h_{1,t} - f_{t|t})\\
J_{12,t}\\
h_{2,t} + r_{t+1} - h_{t+1|t+1}\\
-\frac{1}{2}(J_{22,t} + R_{t+1}-J_{t+1|t+1})
\end{bmatrix}, 
\begin{bmatrix}
z_{t} z_{t}^T\\
z_{t}\\
z_{t}z_{t+1}^T\\
z_{t+1}\\
z_{t+1} z_{t+1}^T
\end{bmatrix}
\bigg\rangle\bigg\}\\
&= \exp\bigg\{\bigg\langle\begin{bmatrix}
-\frac{1}{2} E_{11,t}\\
-\epsilon_{1,t}\\
E_{12,t}\\
\epsilon_{2,t}\\
-\frac{1}{2}E_{22,t}
\end{bmatrix}, 
\begin{bmatrix}
z_{t} z_{t}^T\\
z_{t}\\
z_{t}z_{t+1}^T\\
z_{t+1}\\
z_{t+1} z_{t+1}^T
\end{bmatrix}
\bigg\rangle\bigg\}
\end{align}
Where we let $e_T(z_T,z_{T+1}) = e_T(z_T)$ with $E_{11,T} = J_{T|T}, \epsilon_{1,T} = -h_{T|T}$ and remaining parameters equal to 0 (as there is no $z_{T+1}$. Then the associative operator $\oplus$ is given by
\begin{equation}
e_{ij}(a,c) = e_i(a,b) \oplus e_j(b,c) = \ker\bigg(\int_b e_i(a,b) e_j(b,c) db\bigg)\label{eq:def.oplus}
\end{equation}
Which can be computed (derivations omitted) as 
\begin{align}
D_{ij} &= E_{22,i} + E_{11,j}\\
e_{ij}(a,c) &= \exp\bigg\{\bigg\langle\begin{bmatrix}
-\frac{1}{2} (E_{11,i} - E_{12,i}D_{ij}^{-1}E_{12,i}^T)\\
-(\epsilon_{1,i} - E_{12,i}D_{ij}^{-1}(\epsilon_{2,i} - \epsilon_{1,j}))\\
E_{12,i}D_{ij}^{-1}E_{12,j}\\
\epsilon_{2,j} + E_{12,j}^TD_{ij}^{-1}(\epsilon_{2,i} - \epsilon_{1,j})\\
-\frac{1}{2}(E_{22,j} - E_{12,j}^TD_{ij}^{-1}E_{12,j})
\end{bmatrix}, 
\begin{bmatrix}
a a^T\\
a\\
ac^T\\
c\\
cc^T
\end{bmatrix}
\bigg\rangle\bigg\}
\end{align}
\citet{sarkka2020temporal} prove this operation is associative when operating on full distributions (i.e. without taking the kernel). The proof that we retain associativity when removing multiplicative constants mirrors the proof of Prop.~\ref{associative} and is omitted for brevity. Finally, we show that when these operations are done sequentially, 
\begin{equation}
e_t \oplus (e_{t+1} \oplus (e_{t+1} \oplus \dots (e_{T-1} \oplus e_T))))) = q(z_t)
\end{equation}
The base $e_{T}(z_t) = q(z_T)$ is true by construction. Then, plugging the inductive hypothesis and Eq.~\eqref{eq:e.def} into the definition of $\oplus$ yields Eq.~\eqref{eq:filteredback}, proving that $e_{t}(z_t) = q(z_t)$ implies $e_{t-1}(z_{t-1}) = q(z_{t-1})$.

\subsection{HMM Inference}
The discrete Markov model is time homogenous and requires only expected probabilities for the initial state $k_2$ and expected probabilities for the transition matrix $\pi_{ij}$. The ``observations" of the HMM are (in the case of the SLDS) messages passed from the LDS chain, which we denote $\omega_{t}(k)$. The setup is fundamentally the same as the LDS:\\
\textbf{Input:} $\mathbb{E}_{q(\theta)}[t(\theta)]$, $\omega_{t}(k)$.\\
\textbf{Forward Pass:}
\begin{equation}
\alpha_1 (k) = \mathbb{E}[\log \pi_k] + \omega_1(k)
\end{equation}
\begin{equation}
\texttt{for } t=2, \dots, T \;\;\;\alpha_{t}(k) = \omega_{t}(k) + \texttt{logsumexp}_{i=1}^K \big[\mathbb{E}[\log \pi_{ik}] +  \alpha_{t-1}(i)\big]
\end{equation}
\begin{equation}
\log Z = \texttt{logsumexp}_{k=1}^K \alpha_T(k)
\end{equation}
\textbf{Backward Pass:}
\begin{equation}
\beta_T(k) = 0	
\end{equation}
\begin{equation}
\texttt{for } t=T-1, \dots, 1 \;\;\; \beta_t(k) = \texttt{logsumexp}_{j=1}^K	 \big[\omega_{t+1}(j) + \mathbb{E}[\log\pi_{kj}] + \beta_{t+1}(j)\big]
\end{equation}
\textbf{Return:} log marginals $\alpha_t(k) + \log \beta_t(k)$; $\log Z$

\section{Evaluating the SVAE loss}
\label{evaluating}
\citet{svae} decompose the loss into three terms
\begin{equation}
 -\mathcal{L}[q(\theta)q(z),\gamma] = \overbrace{\infdiv{q(\theta)}{p(\theta))}}^{\textnormal{Prior KL}} + \overbrace{\mathbb{E}_{q(\theta)} \infdiv{q(z)}{p(z|\theta)}}^{\textnormal{Local KL}} - \overbrace{\mathbb{E}_{q(z)}[p_\gamma(x|z)]}^{\textnormal{Reconstruction}}
\end{equation}
Because $p(\theta),p(z|\theta),q(\theta;\eta),q_\phi(z|x;\eta)$ are all exponential family distributions, the expectations here can be evaluated in closed form, with the exception of the reconstruction loss $\mathbb{E}_{q(z)}[p_\gamma(x|z)]$. The reconstruction loss can be estimated without bias using Monte Carlo samples and the reparameterization trick \cite{vae}. For this to work, we must be able to reparameterize the sample of $q(z)$ such that we can back propagate through the sampling routine, which is not possible for discrete random variables. As a result, the latent variables which need to be sampled to reconstruct $x$ (i.e. all variables which are in the same mean field component as a variable being fed into the encoder) must be continuous.

The first term of the loss, the prior KL, is a regularizer on the graphical model parameters and can be easily evaluated via Eq.~\eqref{kl.def}. The rest of this section is devoted to computing the KL divergence between the prior and variational factor of the local latent varibles $z$. We will start with the fully structured case where no mean field separation occurs.

Recalling that the optimal $q(z)$ has parameters $\omega$ given by Eq.~\eqref{prior.plus.recog}, we can again invoke Eq.~\eqref{kl.def} to get
\begin{align}
    \mathbb{E}_{q(\theta)}\infdiv{q(z)}{p(z|\theta)} &= \mathbb{E}_{q(\theta)}[\langle \omega - t(\theta), E_{q(z)}t(z)\rangle - \log Z(\omega) + \log Z(t(\theta))]\\
    &= \langle \omega - \mathbb{E}_{q(\theta)}[t(\theta)], E_{q(z)}t(z)\rangle - \log Z(\omega) + \log \mathbb{E}_{q(\theta)}[Z(t(\theta))]\\
    &= \langle \lambda_\phi(x), E_{q(z)}t(z)\rangle - \log Z(\omega) + \log \mathbb{E}_{q(\theta)}[Z(t(\theta))] \label{local}
\end{align}
Namely, because $\log q(z)$ is defined as the expected log prior $p(z|\theta)$ plus the recognition potentials, these two distributions only differ in log space by those recognition potentials. The log partition function of the graphical model can be obtained by belief propagation, and the expected log partition function from the prior can be generally known in closed form as the sum of individual partition functions at each vertex or factor in the corresponding graphical model.

\paragraph{Example: LDS} The LDS SVAE only outputs recognition potentials on a subset of the statics of $q(z)$. In particular, the encoder takes the observations $x$ and outputs normal likelihood functions on each $z$ independently. Thus the cross-temporal covariance statistics $z_t z_{t+1}^T$ have no corresponding recognition potentials. Further, we restrict the potentials on $z_t z_t^T$ to be diagonal, meaning our observational uncertainty factorizes across dimensions. Thus the only statistics in $t(z)$ for which $\lambda_\phi(x)$ are non-zero are $z_t$ and $z_{ti}^2$ for each component $i$ of $t(z)$. As a result, the inner product $\langle \lambda_\phi(x), \mathbb{E}_{q(z)}t(z)\rangle$ sums over $2TD$ terms for a sequence with $T$ timesteps and $D$-dimensional latent variable at each step.  

\paragraph{Local KL with structured mean field} When there is mean field separation, the prior and variational factor differ in more than just the recognition potentials, as $q(z)$ replaces some factors of $p(z|\theta)$ with separable factors. We can rewrite
\begin{equation}
\infdiv{q(z)}{p(z|\theta)} = \sum_{m=1}^M \infdiv{q(z_m)}{p(z_m|z_{<m},\theta)}
\end{equation}
The natural parameters of $q(z_m)$ differ from the natural parameters of $p(z_m|z_{<m},\theta)$ not only in the recognition potentials, but also for any factors which include a variable in ``later" clusters $z_{>m}$. 

\paragraph{Example: SLDS} The SLDS SVAE has 2 mean field clusters, $q(k)q(z)$ for the discrete variables $k$ and continuous variables $z$. If we write our prior distribution $p(\theta)p(k|\theta)p(z|k,\theta)$ then $\infdiv{q(z)}{p(z|k,\theta)}$ is computed exactly the same as in the LDS example above, as these two distributions differ only in the recognition potentials. The new challenge is computing $\infdiv{q(k)}{p(k|\theta)}$ which contains no recognition potentials but differs based on mean field messages from $q(z)$ which are present in $q(k)$ but not in $p(k|\theta)$.

In particular, the prior $p(k|\theta)$ is a time series with no observations, while $q(k)$ includes parameters at every time step which encode the log probability of being in that state at that time, given by Eq.~\eqref{z.to.k}: $\mathbb{E}_{q(\theta)}[t(\theta_{\textnormal{trans}})] \cdot \mathbb{E}_{q(z)}[t(z_{t-1},z_t)]$. The corresponding sufficient statistics are the marginal probabilities of being in state $k$ at time $t$. Note that we never need to compute the sufficient statistics corresponding to HMM transitions because we don't need them for the local KL nor do we need them to sample from the HMM (which we never do).

\paragraph{Surrogate Loss} The surrogate loss differs from the true loss only in the reconstruction term, so it can similarly be written
\begin{equation}
 -\hat{\mathcal{L}}[q(\theta)q(z),\phi] = \overbrace{\infdiv{q(\theta)}{p(\theta))}}^{\textnormal{Prior KL}} + \overbrace{\mathbb{E}_{q(\theta)} \infdiv{q(z)}{p(z|\theta)}}^{\textnormal{Local KL}} - \langle \lambda_\phi(x), \mathbb{E}_{q(z)}t(z) \rangle
\end{equation}
This is trivial to evaluate, as the local KL contains the negation of the last term (see Eq.~\eqref{local} for the structured case. The mean field case contains these terms as well in addition to ones corresponding to ``pulled apart" factors) and thus the surrogate loss can be computed by omitting the recognition-potential-times-expected-statistics part of the local KL.

\section{Natural gradients}
\label{app:nat}
Here we dissect two claims in the main paper more closely. First, we will show how natural gradients with respect to $\eta$ can be easily computed. Defining, as in the main paper, $\mu_\eta= \mathbb{E}_{q(\theta;\eta)}[t(\theta)]$, we can use the chain rule to write:
\begin{equation}
\frac{\partial \mathcal{L}}{\partial \eta} = \frac{\partial \mathcal{L}}{\partial \mu_\eta}\frac{\partial \mu_\eta }{\partial \eta} + \frac{\partial \mathcal{L}}{\partial \eta}_{\texttt{detach}(\mu_\eta)}
\end{equation}
Where $\frac{\partial \mathcal{L}}{\partial \eta}_{\texttt{detach}(\mu_\eta)}$ refers to the gradient of the loss with respect to $\eta$, disconnecting the computational graph at $\mu_\eta$ (i.e. the dependence of the loss on $\eta$ which is not accounted for by its dependence on $\mu_\eta$).
\begin{proposition}
\begin{equation}
\frac{\partial \mathcal{L}}{\partial \eta}_{\texttt{detach}(\mu_\eta)}= 0
\end{equation}
\end{proposition}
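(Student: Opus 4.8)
The plan is to expand the full ELBO $\mathcal{L}[q(\theta;\eta)q_\phi(z|x;\eta),\gamma]$ term by term, track every way the natural parameter $\eta$ enters, and show that all of these dependencies except one are mediated by $\mu_\eta = \mathbb{E}_{q(\theta;\eta)}[t(\theta)]$, while the single remaining explicit dependence differentiates to zero. Writing the ELBO as the expected log-prior $\mathbb{E}_{q(\theta)}[\log p(\theta)]$, the expected log-conditional $\mathbb{E}_{q(\theta)q(z)}[\log p(z|\theta)]$, the reconstruction $\mathbb{E}_{q(z)}[\log p_\gamma(x|z)]$, and the two entropies $-\mathbb{E}_{q(\theta)}[\log q(\theta;\eta)]$ and $-\mathbb{E}_{q(z)}[\log q(z)]$, I would inspect each in turn.

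The key observation combines conjugacy with the structure of inference. Since the prior $p(\theta)$ (with parameter $\eta_0$) and the conditional $\log p(z|\theta)$ are both linear in $t(\theta)$, the expectations $\mathbb{E}_{q(\theta;\eta)}[\log p(\theta)] = \langle \eta_0,\mu_\eta\rangle - \log Z(\eta_0)$ and $\mathbb{E}_{q(\theta;\eta)}[\log p(z|\theta)]$ depend on $\eta$ only through $\mu_\eta$. Moreover, the variational parameter $\omega = \omega^*(x,\eta,\phi)$ enters only through the fixed-point equation $g(\omega;\eta,\phi,x)=0$, i.e. through $\texttt{MF}$ and $\texttt{BP}$, which consume the graphical-model parameters solely as the expected statistics $\mathbb{E}_{q(\theta;\eta)}[t(\theta)]=\mu_\eta$ (cf. Eq.~\eqref{prior.plus.recog} and the SLDS updates). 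Hence the reconstruction term, the $q(z)$-entropy, and the entire $\omega$-dependence of every term also factor through $\mu_\eta$. Detaching $\mu_\eta$ therefore severs all of these paths at once, and the only surviving direct dependence on $\eta$ is the $q(\theta)$-entropy.

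It then remains to differentiate that single term. Because $\log q(\theta;\eta) = \langle \eta, t(\theta)\rangle - \log Z(\eta)$, we have $-\mathbb{E}_{q(\theta;\eta)}[\log q(\theta;\eta)] = -\langle \eta, \mu_\eta\rangle + \log Z(\eta)$, so differentiating with $\mu_\eta$ held constant gives
\[
\frac{\partial}{\partial \eta}\Big( -\langle \eta, \mu_\eta\rangle + \log Z(\eta) \Big)_{\texttt{detach}(\mu_\eta)} = -\mu_\eta + \nabla_\eta \log Z(\eta) = 0,
\]
where the last equality is Proposition~\ref{logZ.es}. Summing the (all vanishing) contributions yields the claim. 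Conceptually this is just the statement that $\eta$ and $\mu$ are Legendre-dual coordinates and the entropy is naturally a function of $\mu$, so the whole ELBO depends on $\eta$ only through $\mu_\eta$.

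The main obstacle is the second paragraph: one must argue carefully that the entire inference computation — in particular the \emph{implicitly} defined $\omega$ produced by block coordinate ascent — touches $\eta$ only through $\mu_\eta$, so that a single detach at $\mu_\eta$ genuinely cuts every indirect path and isolates exactly the explicit entropy term. This is where the conjugate-exponential-family assumption and the fact that $\texttt{MF}$ and $\texttt{BP}$ act on expected statistics do all the work; the final gradient identity is then a one-line application of Proposition~\ref{logZ.es}.
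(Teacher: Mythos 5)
Your proof is correct and takes essentially the same route as the paper's: both arguments use conjugacy to show that every dependence of the loss on $\eta$ --- including the implicit dependence through the inferred $\omega$, whose \texttt{MF}/\texttt{BP} updates consume only expected statistics --- is mediated by $\mu_\eta$, and both cancel the single remaining explicit term via the identity $\nabla_\eta \log Z(\eta) = \mu_\eta$ (Proposition~\ref{logZ.es}). The only difference is bookkeeping: the paper localizes the explicit $\eta$-dependence in the prior KL, $\langle \eta - \eta_0, \mu_\eta\rangle - \log Z(\eta) + \log Z(\eta_0)$, while you localize it in the entropy of $q(\theta;\eta)$; since that KL equals your entropy term plus a cross-entropy that is linear in $\mu_\eta$, the two computations coincide.
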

\begin{proof}
$q(z)$ depends on $\eta$ only through the expected statistics $\mu_\eta$ which define the parameters of the graphical model factors. Further, the local KL, outlined in Eq.~\eqref{local}, only depends on $q(\theta)$ through its expected statistics (noting that the log normalizers of $p(z|\theta)$ are also staistics of $q(\theta)$). Thus the only place in the loss that $\eta$ appears on its own is the prior KL. Let $\eta_0$ be the natural parameters of the prior $p(\theta)$. Then the prior KL, as per Eq.~\eqref{kl.def}, is given by:
\begin{equation}
\infdiv{q(\theta)}{p(\theta)} = \langle \eta - \eta_0, \mu_\eta \rangle - \log Z(\eta) + \log Z(\eta_0)
\end{equation}
Then we see:
\begin{align}
\frac{\partial}{\partial \eta}\infdiv{q(\theta)}{p(\theta)} &= \frac{\partial \infdiv{q(\theta)}{p(\theta)}}{\partial\mu_\eta}\frac{\partial \mu_\eta}{\partial \eta} + \mu_\eta - \frac{\partial}{\partial \eta}\log Z(\eta)\\
&= \frac{\partial \infdiv{q(\theta)}{p(\theta)}}{\partial\mu_\eta}\frac{\partial \mu_\eta}{\partial \eta} = (\eta - \eta_0) \frac{\partial \mu_\eta}{\partial \eta}
\end{align}
Where we use Prop.~\ref{logZ.es} to cancel terms. Thus we conclude that the entire gradient of the loss with respect to $\eta$ passes through $\mu_\eta$
\end{proof}
The main consequence of this theorem is that in all cases the gradient can be written 
\begin{equation}
\frac{\partial \mathcal{L}}{\partial \eta} = \frac{\partial \mathcal{L}}{\partial \mu_\eta}\frac{\partial \mu_\eta }{\partial \eta} 
\end{equation}
And thus the natural gradient, which multiples the gradient by the inverse of $\frac{\partial \mu_\eta}{\partial \eta}$, can be easily computed withoout matrix arithmetic by skipping the portion of gradient computation corresponding to the $\eta \rightarrow \mu_\eta$ mapping. Here is some sample Jax code which takes a function \texttt{f} and returns a function which computes \texttt{f} in the forward pass but in the backward pass treats the gradient as identity:
\begin{verbatim}
def straight_through(f):
    def straight_through_f(x):
        zero = x - jax.lax.stop_gradient(x)
        return  zero + jax.lax.stop_gradient(f(x))
    return straight_through_f
\end{verbatim}
\paragraph{Fisher information of transformed variables}. Given a distribution $q(\theta;\eta)$ with natural parameters $\eta$, the Fisher information matrix of $\eta$ is given by:
\begin{equation}
F_\eta = \mathbb{E}_{q(\theta)}\bigg[\bigg(\frac{\partial \log q(\theta)}{\partial \eta}\bigg)^T \cdot \bigg(\frac{\partial \log q(\theta)}{\partial \eta}\bigg) \bigg]
\end{equation}
Note that for an exponential family distribution
\begin{equation}
\frac{\partial \log q(\theta)}{\partial \eta} = t(\theta) - \mathbb{E}_{q(\theta)}[t(\theta)]
\end{equation}
which clearly has expectation 0 under $q(\theta)$. Thus
\begin{align}
F_\eta &= \mathbb{E}_{q(\theta)}\big[(t(\theta) - \mathbb{E}_{q(\theta)}[t(\theta)])^T \cdot (t(\theta) - \mathbb{E}_{q(\theta)}[t(\theta)])\big]\\
&= \textnormal{Var}_{q(\theta)}[t(\theta)]
\end{align}
Exponential family theory gives us further equalities:
\begin{equation}
\textnormal{Var}_{q(\theta)}[t(\theta)] = \frac{\partial^2}{\partial \eta^2} \log Z(\eta) = \frac{\partial }{\partial \eta} \mathbb{E}_{q(\theta)}[t(\theta)]
\end{equation}
Thus deriving our desired property that $F_\eta = \frac{\partial \mu_\eta}{\partial \eta}$.

Now we consider a more general case, where $\eta = f(\tilde \eta)$ for some unconstrained, non-natural parameters $\tilde \eta$ and continuous invertible function $f$. Then we see:

\begin{align}
F_{\tilde \eta} &= \mathbb{E}_{q(\theta)}\bigg[\bigg(\frac{\partial \log q(\theta)}{\partial \tilde \eta}\bigg)^T \cdot \bigg(\frac{\partial \log q(\theta)}{\partial \tilde \eta}\bigg) \bigg]\\
&= \mathbb{E}_{q(\theta)}\bigg[\bigg(\frac{\partial \log q(\theta)}{\partial \eta} \cdot \frac{\partial{\eta}}{\partial \tilde \eta}\bigg)^T \cdot \bigg(\frac{\partial \log q(\theta)}{\partial \eta} \cdot \frac{\partial \eta}{\partial \tilde \eta}\bigg) \bigg]\\
&= \frac{\partial{\eta}}{\partial \tilde \eta}^T \cdot \mathbb{E}_{q(\theta)}\bigg[\bigg(\frac{\partial \log q(\theta)}{\partial \eta}\bigg)^T \cdot \bigg(\frac{\partial \log q(\theta)}{\partial \eta} \bigg) \bigg] \cdot \frac{\partial \eta}{\partial \tilde \eta}\\
&= \frac{\partial{\eta}}{\partial \tilde \eta}^T \cdot F_\eta \cdot \frac{\partial \eta}{\partial \tilde \eta}
\end{align}
We can use this to compute the natural gradient with respect to the unconstrained parameters:
\begin{equation}
  \frac{\partial \mathcal{L}}{\partial \tilde \eta}F_{\tilde \eta}^{-1} =  \bigg(\frac{\partial \mathcal{L}}{\partial \mu} \cdot \frac{\partial \mu}{\partial \eta} \cdot \frac{\partial \eta}{\partial \tilde{\eta}}\bigg) \cdot \bigg(\frac{\partial \eta}{\partial \tilde{\eta}}^{-1} \cdot F_{\eta}^{-1} \cdot \frac{\partial \eta}{\partial \tilde{\eta}}^{-T}\bigg) = \frac{\partial \mathcal{L}}{\partial \mu} \cdot \frac{\partial \eta}{\partial \tilde{\eta}}^{-T} = \bigg(\frac{\partial \tilde \eta}{\partial \eta} \cdot \nabla_\mu \mathcal{L}\bigg)^T
\end{equation}
\paragraph{Implementing natural gradients via automatic differentiation} We argued in the main paper that to implement natural gradients we replace the reverse-mode backpropagation through the $\tilde \eta \rightarrow \eta$ map with a forward-mode differentiation through the inverse operation. Assuming we have a forward map \texttt{f} and inverse function \texttt{f\_inv}, sample code is shown below:
\begin{verbatim}
f_natgrad = jax.custom_vjp(f)

def f_natgrad_fwd(input):
    return f(input), f(input)

def f_natgrad_bwd(resids, grads):
    return (jax.jvp(f_inv, (resids,), (grads,))[1],)

f_natgrad.defvjp(f_natgrad_fwd, f_natgrad_bwd)
\end{verbatim}

\section{Matrix-normal exponential family distributions}
\label{E.fam}

Recall that exponential family distributions are given by
\begin{equation}
p(x;\eta) = \exp\{\langle\eta, t(x)\rangle - \log Z(\eta)\}	
\end{equation}
The natural parameters often have coupled constraints, and so can be expressed as a simple function of \emph{canonical parameters} which carry easily-interpretable information about the distribution (e.g. the natural parameters of a normal distribution can be expressed in terms of the mean and precision matrix of that distribution). Thus to define each distribution, we provide (a) the natural parameters $\eta$ as a function of canonical parameters, (b) the sufficient statistics $t(x)$, (c) the expected sufficient statistics $\mathbb{E}_{p(x;\eta)}[t(x)]$, and (d) the log partition function $\log Z(\eta)$.

Note that sometimes we have symmetric positive definite (SPD) matrix-valued parameters. In every case, the corresponding statistics for those parameters are also symmetric. For example, given a $n \times n$ SPD matrix, a distribution might have parameters and statistics:

\begin{equation}
\eta = \begin{bmatrix}
 \phantom{2\cdot }S_{ii} | i=1,\dots, n\\
 2\cdot S_{ij} | i<j\phantom{,\dots, n}
 \end{bmatrix}	\qquad 
         t(x) = \begin{bmatrix}
         \phantom{x_i}x_i^2 | i=1,\dots,n\\
         x_ix_j|i<j\phantom{,\dots,n}
         \end{bmatrix}
\end{equation}
We note that $\langle \eta, t(x) \rangle = \sum_{i=1}^n \sum_{j=1}^n S_{ij} x_i x_j$ or equivalently the sum of elements of $S \otimes xx^T$ (for column vector $x$ containing each $x_i$). In the following sections, we will abuse notation for compactness by simply writing:
\begin{equation}
\eta = \begin{bmatrix}
 S
 \end{bmatrix}	\qquad 
         t(x) = \begin{bmatrix}
           x x^T
         \end{bmatrix}
\end{equation}
\paragraph{Constraints.} To optimize these parameters via gradient descent, we must define constrained parameters as a function of unconstrained parameters as discussed in Sec.[natgrads]. Further, to implement natural gradients we must implement the inverse operation. The table below lists all such transformations. \\
\begin{center}
\begin{tabular}{||c|c|c|c||} 
 \hline
 Notation & Size & Constraint & $\tilde\eta \rightarrow \eta $ \\ [0.5ex] 
 \hline\hline
$\mathbb{R}^n$ & $n$-length vector & n/a& n/a  \\
 \hline
$\mathbb{R}^{n\times m}$ & $n \times m$ matrix & n/a& n/a  \\
 \hline
 $\mathbb{R}_+^n$ & $n$-length vector & All values positive & $\eta = \texttt{Softplus}(\tilde \eta)$\\
 \hline
 $\mathbb{R}_{>m}^n$ & $n$-length vector & All values greater than $m$ & $\eta = \texttt{Softplus}(\tilde \eta)$+m\\
 \hline
 $\Delta^{n-1}$ & $n$-length vector & Non-negative, sums to 1 & $\eta = \texttt{Softmax}(\tilde \eta)$  \\
 \hline
 $S_{++}^n$ & $n \times n$ matrix & symmetric positive definite & See below\\
 \hline
 \end{tabular}
 \end{center}
Where Softmax is made invertible by appending a 0 to the end of $\tilde \eta$. For the SPD matrix parameters, we define $S = \texttt{diag}(\sigma) \cdot R \cdot \texttt{diag}(\sigma)$ where $\sigma \in \mathbb{R}_+^n$ is a vector of standard deviations and $R$ is a correlation matrix, transformed to and from unconstrained space by the tensorflow \href{https://www.tensorflow.org/probability/api\_docs/python/tfp/bijectors/CorrelationCholesky}{Correlation Cholesky bijector}.

\subsection{Multivariate Normal distribution}
If $x \sim \mathcal{N}(\mu, \Sigma)$ for (i) $x,\mu \in \mathbb{R}^n$, and (ii) $\Sigma \in S_{++}^{n}$, then $p(x;\mu,\Sigma)$ is given by:
\begin{equation}
\eta = \begin{bmatrix}
           \Sigma^{-1}\mu\\
           -\frac{1}{2}\Sigma^{-1}
         \end{bmatrix} \qquad 
         t(x) = \begin{bmatrix}
           x\\
           x x^T
         \end{bmatrix} \qquad 
        \mathbb{E}_{p(x;\mu,\Sigma)}[t(x)] = \begin{bmatrix}
           \mu \\
           \mu \mu^T + \Sigma\\

         \end{bmatrix}
\end{equation}
The log partition function is given by:
\begin{equation}
\log Z(\eta) = \frac{1}{2} \mu^T \Sigma^{-1} \mu + \frac{1}{2} \log |\Sigma| + \frac{n}{2} \log(2\pi)
\end{equation}
Here, $\pi \approx 3.14$ is a constant.
\subsection{Normal inverse Wishart (NIW) distribution}
If $\mu,\Sigma \sim \textnormal{NIW}(S, m, \lambda,\nu)$ for (i) $\mu,m \in \mathbb{R}^n$, (ii) $\Sigma,S \in S_{++}^{n}$, (iii) $\lambda \in \mathbb{R}_+$, and (iv) $\nu \in \mathbb{R}_{>{n-1}}$, then $p(\mu,\Sigma;S,m,\lambda,\nu)$ is given by:
\begin{equation}
\eta = \begin{bmatrix}
           S + \lambda m m^T\\
           \lambda m\\
           \lambda\\
           \nu + n + 2
         \end{bmatrix} \qquad 
         t(\mu,\Sigma) = \begin{bmatrix}
           -\frac{1}{2}\Sigma^{-1}\\
           \Sigma^{-1}\mu\\
           -\frac{1}{2} \mu^T \Sigma^{-1}\mu\\
           -\frac{1}{2}\log |\Sigma|
         \end{bmatrix} \qquad 
        \mathbb{E}[t(\mu,\Sigma)] = \begin{bmatrix}
           -\frac{1}{2}\nu S^{-1} \\
           \nu S^{-1} m\\
           -\frac{1}{2}(n/\lambda + \nu m^T  S^{-1} m)\\
           \frac{1}{2}(n \log 2 - \log |S| + \Sigma_{i=0}^{n-1} \psi(\frac{\nu - i}{2}))
         \end{bmatrix}
\end{equation}
Where $\psi$ is the digamma function. The log partition function is given by:
\begin{equation}
\log Z(\eta) = \frac{\nu}{2} \cdot (n \log 2 - \log |S|) + \log \Gamma_n\Big(\frac{\nu}{2}\Big) + \frac{n}{2}	(\log(2\pi) -  \log \lambda)
\end{equation}
With the multivariate gamma function $\Gamma_n$.
\subsection{Matrix normal inverse Wishart (MNIW) distribution}
If $X,\Sigma \sim \textnormal{MNIW}(S, M, V,\nu)$ for (i) $X,M \in \mathbb{R}^{n\times m}$, (ii) $\Sigma,S \in S_{++}^{n}$, (iii) $V \in S_{++}^m$, and (iv) $\nu \in \mathbb{R}_{>{n-1}}$, then $p(X,\Sigma;S,M,V,\nu)$ is given by:
\begin{equation}
\eta = \begin{bmatrix}
           S + MVM^T\\
           MV\\
           V\\
           \nu + n + m + 1
         \end{bmatrix} \hspace{4mm} 
         t(X,\Sigma) = \begin{bmatrix}
           -\frac{1}{2}\Sigma^{-1}\\
           \Sigma^{-1}X\\
           -\frac{1}{2} X^T \Sigma^{-1} X\\
           -\frac{1}{2}\log |\Sigma|
         \end{bmatrix} \hspace{4mm} 
        \mathbb{E}[t(X,\Sigma)] = \begin{bmatrix}
           -\frac{1}{2}\nu S^{-1} \\
           \nu S^{-1} M\\
           -\frac{1}{2}(nV^{-1} + \nu M^T  S^{-1} M)\\
           n \log 2 - \log |S| + \Sigma_{i=0}^{n-1} \psi(\frac{\nu - i}{2})
         \end{bmatrix}
\end{equation}
Where $\psi$ is the digamma function. The log partition function is given by:
\begin{equation}
\log Z(\eta) = \frac{\nu}{2} \cdot (n \log 2 - \log |S|) + \log \Gamma_n\Big(\frac{\nu}{2}\Big) - \frac{n}{2}|V| + \frac{n \cdot m}{2}	\log(2\pi)
\end{equation}
With the multivariate gamma function $\Gamma_n$. For the SLDS, we use this MNIW distribution by defining $X \in \mathbb{R}^{n\times n+1} = [A|b]$ where $z_{t+1} \sim \mathcal{N}(Az_{t}+b,\Sigma)$ so the MNIW distribution provides a prior on the transition matrix, offset, and conditional noise.

\end{document}